\definecolor{blue1}{HTML}{2E86AB}
\theoremstyle{plain}
\newtheorem{theorem}{Theorem}[section]
\newtheorem{lemma}[theorem]{Lemma}
\newtheorem{corollary}[theorem]{Corollary}
\theoremstyle{definition}
\newtheorem{definition}[theorem]{Definition}
\theoremstyle{definition}
\newtheorem{remark}[theorem]{Remark}
\newtheorem{example}[theorem]{Example}
\def\namedlabel#1#2{\begingroup
  #2%
  \def\@currentlabel{#2}%
  \phantomsection\label{#1}\endgroup
}
\providecommand{\parttoc}{}
\icmltitlerunning{Towards Identifiable Latent Additive Noise Models}
\begin{document}
\doparttoc 
\faketableofcontents

\twocolumn[
\icmltitle{Towards Identifiable Latent Additive Noise Models}


\begin{icmlauthorlist}
  \icmlauthor{Yuhang Liu}{yyy}
  \icmlauthor{Zhen Zhang}{yyy}
  \icmlauthor{Dong Gong}{comp}
    \icmlauthor{Erdun Gao}{yyy}
     \icmlauthor{Biwei Huang}{yyy1}
    \icmlauthor{Mingming Gong}{comp1}
    \icmlauthor{Anton van den Hengel}{yyy}
    \icmlauthor{Kun Zhang}{comp2}
    \icmlauthor{Javen Qinfeng Shi}{yyy}
\end{icmlauthorlist}

  \icmlaffiliation{yyy}{Australian Institute for Machine Learning, The University of Adelaide}
  \icmlaffiliation{comp}{School of Computer Science and Engineering, The University of New South Wales}
    \icmlaffiliation{yyy1}{Halicioğlu Data Science Institute, University of California San Diego}
      \icmlaffiliation{comp1}{School of Mathematics and Statistics,
The University of Melbourne}
        \icmlaffiliation{comp2}{Department of Philosophy, Carnegie Mellon University}
\icmlcorrespondingauthor{Yuhang Liu}{yuhang.liu01@adelaide.edu.au}

\icmlkeywords{Latent causal model, identifiability, additive noise model}

\vskip 0.3in
]

\printAffiliationsAndNotice{}  



\begin{abstract}

Causal representation learning (CRL) offers the promise of uncovering the underlying causal model by which observed data was generated, but the practical applicability of existing methods remains limited by the strong assumptions required for identifiability and by challenges in applying them to real-world settings. Most current approaches are applicable only to relatively restrictive model classes, such as linear or polynomial models, which limits their flexibility and robustness in practice. One promising approach to this problem seeks to address these issues by leveraging changes in causal influences among latent variables. In this vein we propose a more general and relaxed framework than typically applied, formulated by imposing constraints on the function classes applied. Within this framework, we establish partial identifiability results under weaker conditions, including scenarios where only a subset of causal influences change. We then extend our analysis to a broader class of latent post-nonlinear models. Building on these theoretical insights, we develop a flexible method for learning latent causal representations. We demonstrate the effectiveness of our approach on synthetic and semi-synthetic datasets, and further showcase its applicability in a case study on human motion analysis, a complex real-world domain that also highlights the potential to broaden the practical reach of identifiable CRL models.
\end{abstract}


\section{Introduction}
\label{sec: intro}


Causal representation learning (CRL) aims to recover the latent variables and causal structures that give rise to high-dimensional observations, offering a principled perspective on modeling complex systems~\citep{scholkopf2021toward, ahuja2023interventional}. By explicitly capturing underlying generative mechanisms, CRL enhances interpretability and supports robust generalization across environments, particularly under distribution shifts induced by interventions~\citep{peters2017elements, Pearl00}. Such capabilities make CRL particularly valuable in domains such as reinforcement learning and self-supervised learning, where uncovering latent causal factors can facilitate more general yet effective representations and enable more effective planning~\citep{mitrovic2021representation, zeng2024survey}. While CRL holds clear advantages over correlation-based methods, yielding representations that are more robust and transferable, it remains difficult to realize these benefits in practice. This is largely due to the strong assumptions required for identifiability and the challenges associated with deploying existing models in realistic environments~\citep{bing2024invariance, yao2025unifying}.

Leveraging changes in causal influences among latent variables has emerged as a promising strategy to enhance identifiability and improve estimation quality. Recent work in this direction has focused on developing theoretically grounded frameworks for identifiability, alongside practical methods tailored for real-world applications~\citep{seigal2022linear,liu2022identifying, buchholz2023learning,liu2023identifiable,von2021self,brehmer2022weakly,ahuja2023interventional,varici2023score,von2024nonparametric}. Underlying these methods is the core intuition that changes in causal influences introduce asymmetries, e.g., pre- and post-change behaviors, into the system, which provides valuable signals that help achieve identifiability. Based on this idea, prior works have established various identifiability results for restricted function classes over latent variable models, including, but not limited to linear Gaussian models \citep{liu2022identifying, buchholz2023learning}, linear additive noise models \citep{seigal2022linear,chen2024identifying,jin2024learning}, and polynomial models \citep{liu2023identifiable}. More related work can be found in Sec.~\ref{sec: related}. 

\textbf{Remaining Challenges.} Despite this progress, several limitations pose difficulties for broader applicability. Theoretically, many existing identifiability results rely on strong assumptions, such as specific functional forms or distributional constraints, which may not hold in complex or poorly understood real-world systems. Empirically, due to the relatively strong assumption required for changes in causal influences in latent space, real-world applications that conform to these assumptions remain limited. As a result, many identifiability results are primarily evaluated on synthetic datasets, typically Causal3DIdent~\citep{von2021self}. Although there have been promising attempts to adapt these identifiability results into effective methods for real-world data, particularly in biological data \citep{seigal2022linear, zhang2023identifiability} and climate data \citep{yao2024marrying}, further efforts are needed to extend CRL to a wider range of real-world scenarios.

\textbf{Contributions.} In this work, we aim to advance the study of leveraging changes in causal influences by contributing to both theory and practical applications. \textit{On the theoretical side}, we introduce a nonparametric condition that characterizes changes in causal influences between latent causal variables. Under this condition and standard assumptions from nonlinear ICA \citep{hyvarinen2016unsupervised, hyvarinen2019nonlinear,khemakhem2020variational}, we show that general latent additive noise models can be identified up to permutation and scaling. We further extend this result to a more realistic setting where only a subset of the latent causal variables undergoes changes, resulting in partial identifiability. We also generalize these results to latent post-nonlinear models, which include additive noise models as a special case. \textit{On the practical side}, we explore a novel real-world application: learning causal representations from human motion data. In this setting, the underlying latent causal system can be interpreted as dynamic motor control modules that govern human motion across different tasks \citep{gallego2017neural,doyon2005reorganization,taylor2006modeling}. This opens a new potential application for applying CRL to complex human-centered data.

\section{Latent Additive Noise Models with Change in Causal Influences} \label{sec: model}
We consider a general class of latent additive noise models, where the observed data $\mathbf{x}$ is generated from a set of latent causal variables $\mathbf{z} \in \mathbb{R}^{\ell}$. These latent causal variables are causally influenced by latent noise terms $\mathbf{n} \in \mathbb{R}^{\ell}$, and their causal relationships are represented by a directed acyclic graph (DAG). Importantly, we do not assume a fixed graph structure over the latent causal variables, allowing for flexible modeling of their dependencies and applicability across different settings. To account for the change of causal influences between latent causal variables $\mathbf{z}$, which may arise from environmental or contextual factors, we introduce a surrogate variable $\mathbf{u}$. This variable plays a central role in capturing how changes in external conditions are reflected in the observed data $\mathbf{x} \in \mathbb{R}^{\ell}$. The interpretation of $\mathbf{u}$ is application-dependent. In domain adaptation or generalization tasks, it may represent environmental factors that vary across domains. In time series forecasting \citep{mudelsee2019trend}, $\mathbf{u}$ can capture temporal indices reflecting evolving trends. In remote sensing \citep{russwurm2020meta}, it may encode geographic attributes such as longitude and latitude that influence observations.


\textbf{Data Generation Process.} More specifically, we parameterize the latent causal generative models by assuming $\mathbf{n}$ follows an exponential family distribution given $\mathbf{u}$, and $\mathbf{z}$ and $\mathbf{x}$ are generated as follows:
\begin{align}
p_{\mathbf{(T,\boldsymbol{\eta})}}(\mathbf{n} \mid \mathbf{u})
&:= \prod\nolimits_{i} \frac{\exp\!\left( \sum_{j} T_{i,j}(n_i) \, \eta_{i,j}(\mathbf{u}) \right)}{Z_i(\mathbf{u})},
\label{eq:Generative:n} \\
z_i 
&:= \mathrm{g}_i^{\mathbf{u}}\!\left(\mathrm{pa}_{i}\right) + n_i,
\label{eq:Generative:z} \\
\mathbf{x} 
&:= \mathbf{f}(\mathbf{z}).
\label{eq:Generative:x}
\end{align}
\begin{itemize}[leftmargin=0pt]
\item In Eq.~\eqref{eq:Generative:n}: $Z_i(\mathbf{u})$ is the normalizing constant, and $T_{i,j}(n_i)$ is the sufficient statistic for $n_i$, with its natural parameter $\eta_{i,j}(\mathbf{u})$ dependent on $\mathbf{u}$. We assume a two-parameter exponential family, following  \citet{sorrenson2020disentanglement}. 
\item In Eq.~\eqref{eq:Generative:z}: The term $\mathrm{g}^{\mathbf{u}}_i(\mathrm{pa}_i)$ shows how $\mathbf{u}$ influences the mapping of parents $\mathrm{pa}_i$ to $z_i$. Specifically, $\mathbf{u}$ modulates the function $\mathrm{g}_i$, e.g., if $\mathrm{g}_i$ is modeled by a multilayer perceptron (MLP), $\mathbf{u}$ adjusts the weights of the MLP. 
\item In Eq.~\eqref{eq:Generative:x}: $\mathbf{f}$ represents a nonlinear mapping from $\mathbf{z}$ to $\mathbf{x}$.
\end{itemize}

The surrogate variable $\mathbf{u}$ captures distributional shifts in the latent noise variables, as reflected by changes in the natural parameters across $\mathbf{u}$ in Eq.~\eqref{eq:Generative:n}. This enables the adaptation of identifiability results from nonlinear ICA. More importantly, $\mathbf{u}$ also models changes in the underlying causal influences from parent variables to each latent causal variable. In particular, as shown in Eq.~\eqref{eq:Generative:z}, $\mathbf{u}$ modulates the functional form $\mathrm{g}_i^{\mathbf{u}}$, effectively characterizing how the parents $\mathrm{pa}_i$ influence $z_i$. By imposing appropriate constraints on $\mathrm{g}_i^{\mathbf{u}}$, we can identify a sufficient condition for changes in the causal influences among latent variables (see assumption \ref{itm:lambda}), under nonlinear ICA.

\section{Identifiability Results of Latent Additive Noise Models}
In this section, we analyze identifiability in latent additive noise models by leveraging changes in causal influences across environments. We first present the complete identifiability result inSection~\ref{identi: complete}, then extend to partial identifiability result in Section~\ref{identi: partial}, addressing more general and realistic scenarios in which only a subset of causal influences among latent variables undergo change. Finally, we generalize both complete and partial results to latent post-nonlinear models in Section~\ref{sec: postnon}.

\subsection{Complete Identifiability Result 
}
\label{identi: complete}
We explicitly introduce the surrogate variable $\mathbf{u}$ as described in the data generation process defined by Eqs.~\eqref{eq:Generative:n}–\eqref{eq:Generative:x}. This mechanism allows us to formulate key identifiability conditions in terms of $\mathbf{u}$. We now present the main identifiability result as follows:

\begin{theorem} 
Suppose latent causal variables $\mathbf{z}$ and the observed variable $\mathbf{x}$ follow the causal data generative models defined in Eqs.~\eqref{eq:Generative:n} - \eqref{eq:Generative:x}. Assume the following holds:
\begin{itemize}
\item [\namedlabel{itm:injective}{(i)}]  The function $\mathbf{f}$ in Eq.~\eqref{eq:Generative:x} is smooth and invertible, 
\item [\namedlabel{itm:nu} {(ii)}] There exist $2\ell+1$ values of $\mathbf{u}$, i.e.,  $\mathbf{u}_{0},\mathbf{u}_{1},...,\mathbf{u}_{2\ell}$, such
that the matrix
\begin{align}
    \mathbf{L} =& \Big(\boldsymbol{\eta}(\mathbf{u} = \mathbf{u}_1)-\boldsymbol{\eta}(\mathbf{u} = \mathbf{u}_0),..., \notag \\
    &\boldsymbol{\eta}(\mathbf{u} = \mathbf{u}_{2\ell})-\boldsymbol{\eta}(\mathbf{u}= \mathbf{u}_0)\Big)
\end{align}
of size $2\ell \times 2\ell$ is invertible, where $\boldsymbol{\eta}(\mathbf{u}) = {[\eta_{i,j}(\mathbf{u})]_{i,j}}$, \label{assu3}
\item [\namedlabel{itm:lambda} {(iii)}] The function class of $\mathrm{g}_i^{\mathbf{u}}$ satisfies the following condition: there exists $\mathbf{u}_{i}$, such that, for all parent nodes $z_j \in\mathrm{pa}_i $ of $z_i$, $\frac{\partial \mathrm{g}_i^{\mathbf{u}=\mathbf{u}_{i}}(\mathrm{pa}_i)}{\partial z_{j}} =0$.
\end{itemize}
Then each true latent variable \( z_i \) is linearly related to exactly one estimated latent variable \( \hat{z}_j \), as $z_i = s_j \hat{z}_j + c_i$,
for some constants \( s_j \) and \( c_i \), where all \( \hat{z}_j \) are learned by matching the true data distribution \( p(\mathbf{x} \mid \mathbf{u}) \).
\label{theory: anm}
\end{theorem}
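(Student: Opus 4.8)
The plan is to reduce the statement to a density-matching identity between the true and estimated latent models and then separate the two kinds of variation encoded by $\mathbf{u}$. Since $\mathbf{f}$ and the estimated decoder $\hat{\mathbf{f}}$ are both smooth and invertible by assumption (i) and both reproduce $p(\mathbf{x}\mid\mathbf{u})$, the map $\mathbf{v} \defeq \hat{\mathbf{f}}^{-1}\circ\mathbf{f}$ is a smooth invertible reparametrization carrying $\mathbf{z}$ to $\hat{\mathbf{z}}=\mathbf{v}(\mathbf{z})$, and the change-of-variables formula gives, for every $\mathbf{u}$,
\[
\log p(\mathbf{z}\mid\mathbf{u}) \;=\; \log \hat p(\mathbf{v}(\mathbf{z})\mid\mathbf{u}) \;+\; \log\bigl|\det J_{\mathbf{v}}(\mathbf{z})\bigr|.
\]
Substituting the exponential-family/additive-noise form of Eqs.~\eqref{eq:Generative:n}–\eqref{eq:Generative:z} expands each side into $\sum_{i,j} T_{i,j}\bigl(z_i-\mathrm{g}_i^{\mathbf{u}}(\mathrm{pa}_i)\bigr)\,\eta_{i,j}(\mathbf{u})$ minus the log-partition terms, and analogously with hats on the right. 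The goal is to show that $\mathbf{v}$ must be a composition of a permutation with coordinate-wise affine maps, which is precisely the claimed relation $z_i=s_j\hat z_j+c_i$.

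First I would eliminate the $\mathbf{u}$-independent Jacobian term by evaluating the identity at the $2\ell+1$ environments of assumption (ii) and subtracting the $\mathbf{u}_0$ equation from each of the others, leaving $2\ell$ relations among the sufficient-statistic contributions. In classical nonlinear ICA one would now stack these relations, invert $\mathbf{L}$, and conclude that the true sufficient statistics are a fixed invertible linear image of the estimated ones; combined with the two-parameter exponential family of \citet{sorrenson2020disentanglement} — whose statistics and their derivatives are linearly independent — this would force the mixing matrix to be a generalized permutation acting blockwise on each $(T_{i,1},T_{i,2})$ pair, and hence the recovered coordinates to be affine images of the true ones up to permutation. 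The difficulty that prevents applying this verbatim is that the mixing here is \emph{itself} $\mathbf{u}$-dependent: the argument of each $T_{i,j}$ is $z_i-\mathrm{g}_i^{\mathbf{u}}(\mathrm{pa}_i)$, so after the Jacobian cancels the surviving terms still vary with $\mathbf{u}$ through the structural functions, and $p(\mathbf{z}\mid\mathbf{u})$ factorizes only along the DAG rather than into conditionally independent coordinates.

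This is exactly where assumption (iii) does the real work. At the environment $\mathbf{u}_i$ whose existence it guarantees, every partial $\partial\mathrm{g}_i^{\mathbf{u}_i}/\partial z_j$ vanishes, so $\mathrm{g}_i^{\mathbf{u}_i}$ is constant in its parents and $z_i=\text{const}+n_i$ is an unmixed, shifted copy of the independent noise $n_i$ that is conditionally independent of its parents. I would therefore evaluate the cross-partials $\partial^2\log p/\partial z_a\partial z_b$ (with $a\neq b$) at these reference environments, where the interaction terms contributed by the structural functions drop out, reducing the matching identity to the conditionally-independent regime in which the spanning argument built from assumption (ii) and the two-parameter family legitimately applies. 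The vanishing-influence environments play a double role: they linearize away the mechanism-induced coupling, and they certify that each recovered coordinate is tied to a single causal variable $z_i$ rather than to an entangled mixture.

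Assembling these pieces, $\mathbf{v}=\hat{\mathbf{f}}^{-1}\circ\mathbf{f}$ must permute the coordinates and act affinely within each, yielding $z_i=s_j\hat z_j+c_i$ for the matched index $j$, with $c_i$ absorbing both the noise-location shift and the constant value of $\mathrm{g}_i^{\mathbf{u}_i}$. The central obstacle throughout is the entanglement of the two sources of $\mathbf{u}$-variation — a shifting noise law versus shifting causal mechanisms — and the heart of the argument is verifying that assumption (iii) supplies precisely the reference environments needed to disentangle them and to carry the nonlinear-ICA identifiability from the noise $\mathbf{n}$ across to the latent causal variables $\mathbf{z}$.
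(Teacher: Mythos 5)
Your diagnosis of the central difficulty is exactly right---the arguments $z_i-\mathrm{g}_i^{\mathbf{u}}(\mathrm{pa}_i)$ of the sufficient statistics vary with $\mathbf{u}$, which blocks the classical spanning argument---but the step you propose to resolve it does not work. Assumption (iii) supplies only \emph{one} environment $\mathbf{u}_i$ per variable, and in that environment only the mechanism into $z_i$ is switched off: every other conditional $p(z_a\mid\mathrm{pa}_a,\mathbf{u}_i)$ with $a\neq i$ still couples $z_a$ to its parents, so $p(\mathbf{z}\mid\mathbf{u}_i)$ is \emph{not} in the conditionally-independent regime, and the cross-partials $\partial^2\log p/\partial z_a\partial z_b$ do not all vanish there. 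Worse, the spanning argument requires variation across the $2\ell+1$ environments of assumption (ii), which are in general distinct from (and more numerous than) the $\ell$ reference environments of assumption (iii); there is no way to run the stacking-and-inverting-$\mathbf{L}$ step "at" the reference environments. So the sentence in which you claim the matching identity reduces to one where the Sorrenson-style argument "legitimately applies" is precisely the missing step, and as stated it fails.

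The idea that fills this hole---and is how the paper actually proceeds---is to change variables to noise coordinates instead of staying in $\mathbf{z}$-space. The additive-noise recursion $\mathbf{z}=\mathbf{h}^{\mathbf{u}}(\mathbf{n})$ is invertible with a lower-triangular Jacobian having ones on the diagonal, so $|\det\mathbf{J}_{\mathbf{h}^{\mathbf{u}}}|\equiv 1$ for every $\mathbf{u}$. In noise coordinates the model is therefore \emph{exactly} classical nonlinear ICA: the $\mathbf{u}$-dependent part of the mixing contributes nothing to the log-determinant, and $\mathbf{n}$ is conditionally independent given $\mathbf{u}$ in \emph{every} environment, so assumptions (i)--(ii) alone already yield $\mathbf{n}=\mathbf{P}\hat{\mathbf{n}}+\mathbf{c}$ with $\mathbf{P}$ a permutation-with-scaling matrix---no use of assumption (iii) at all. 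Assumption (iii) then enters in a second, logically separate step: writing $\mathbf{z}=\boldsymbol{\Phi}(\hat{\mathbf{z}})$ with $\boldsymbol{\Phi}=\mathbf{f}^{-1}\circ\hat{\mathbf{f}}$ independent of $\mathbf{u}$, one differentiates $\mathbf{h}^{\mathbf{u}}(\mathbf{P}\hat{\mathbf{n}}+\mathbf{c})=\boldsymbol{\Phi}(\hat{\mathbf{h}}^{\mathbf{u}}(\hat{\mathbf{n}}))$ to obtain $\mathbf{J}_{\mathbf{h}^{\mathbf{u}}}\mathbf{P}=\mathbf{J}_{\boldsymbol{\Phi}}\mathbf{J}_{\hat{\mathbf{h}}^{\mathbf{u}}}$, and compares entries: the triangular structure fixes the diagonal, and at the environment $\mathbf{u}_i$ assumption (iii) transfers to $\partial h_i^{\mathbf{u}_i}/\partial n_{i'}=0$, which---combined with the noise identifiability already established---forces $J_{\boldsymbol{\Phi}_{i,j}}=0$ for $j<i$. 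Because $\boldsymbol{\Phi}$ is $\mathbf{u}$-independent, a zero obtained at the single environment $\mathbf{u}_i$ holds for all $\mathbf{u}$; this propagation is what allows one special environment per variable to suffice. Your sketch contains both raw ingredients (the $\mathbf{u}$-independence of $\mathbf{v}$ and the decoupling environments) but never combines them in this two-stage way, and without the unit-Jacobian change of variables the spanning step your argument rests on cannot be carried out.
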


\paragraph{Proof sketch} 
First, we show that the latent noise variables are identifiable up to scaling and permutation, leveraging recent progress in nonlinear ICA, mainly supported by exponential family defined in Eq.~\eqref{eq:Generative:n}, the structure of additive noise models in Eq.~\eqref{eq:Generative:z}, and assumption~\ref{itm:nu}. Second, building on this, the true latent causal variables $\mathbf{z}$ are related to the estimated ones through an invertible map that is independent of the auxiliary variable $\mathbf{u}$, by the additive noise model structure (Lemma~\ref{appendix:polyn1}). Finally, using the changes in causal influences among $\mathbf{z}$, primarily due to assumption \ref{itm:lambda}, this invertible map reduces to permutation and scaling. Full details are provided in Appendix~\ref{appendix:thoery}.

Assumptions \ref{itm:injective}-\ref{itm:nu} are originally developed by nonlinear ICA \citep{hyvarinen2016unsupervised,hyvarinen2019nonlinear,khemakhem2020variational,sorrenson2020disentanglement}, {and have also been adopted in several recent works on CRL (with different
forms) \cite{liu2022identifying,liu2023identifiable,zhang2024causal,ng2025causal}. We here utilize these assumptions considering the following two main reasons. 1) These assumptions have been verified to be practicable in diverse real-world application scenarios \citep{kong2022partial,xie2022multi,wang2022causal}. 2) By extending the results of \citet{sorrenson2020disentanglement} to our setting\footnote{\noindent This extension requires addressing a technical gap introduced by Eq.~\eqref{eq:Generative:z}, specifically ensuring that (i) the mapping from $\mathbf{n}$ to $\mathbf{x}$ remains invertible, and (ii) $\mathbf{u}$ does not compromise the identifiability of $\mathbf{n}$.}, we can identify the latent noise variables up to permutation and scaling, which in turn facilitates the identifiability of the latent causal variables.

Assumption \ref{itm:lambda}, originally introduced by this work, provides a condition that characterizes the types of change in causal influences contributing to identifiability. Loosely speaking, this assumption ensures that the causal influence from parent nodes does not include components that remain invariant across $\mathbf{u}$, as such invariance would lead to unidentifiability (See Remark \ref{remark3.2} for more details). This is achieved by constraining the gradient of \( \mathrm{g}^{\mathbf{u}}_i \) with respect to \( z_{j} \) vanished at the point \( \mathbf{u}_{i} \), thereby preventing the invariance. 

Assumption \ref{itm:lambda}, for instance, could arise in the analysis of cell imaging data (i.e., $\mathbf{x}$), where various batches of cells are exposed to different small-molecule compounds (i.e., $\mathbf{u}$). each latent variable (i.e., $z_i$) represents the concentration level of a distinct group of proteins, with protein-protein interactions (e.g., causal influences among $z_i$) playing a significant role \citep{chandrasekaran2021image}. Research has revealed that the mechanisms of action of small molecules exhibit variations in selectivity \citep{scott2016small}, which can profoundly affect protein-protein interactions (i.e., $\mathrm{g}^{\mathbf{u}}_i$). The assumption \ref{itm:lambda} requires the existence of a specific $\mathbf{u}=\mathbf{u}_{i}$, such that the original causal influences can be disconnected. This parallels cases where small molecule compounds disrupt or inhibit protein-protein interactions (PPIs), effectively causing these interactions to cease \citep{arkin2004small}. Such molecules are commonly referred to as inhibitors of PPIs. Developing small molecule inhibitors for PPIs is a key focus in drug discovery \citep{lu2020recent,bojadzic2021small}. Additionally, gene editing technologies like CRISPR/Cas9 can effectively 'knock out' a protein or gene, leading to complete inhibition. Similarly, receptor antagonists can achieve full inhibition by completely blocking the activity of a receptor. 

We emphasize that assumption \ref{itm:lambda} is our key contribution, formulating changes in causal influence as constraints on the function class and thus distinguishing our work from previous studies. Specifically,

\begin{remark}[Types of Changes in Causal Influences That Facilitate Identifiability]
\label{remark3.2}
\emph{Not all changes in causal influences lead to identifiability. Assumption \ref{itm:lambda} specifies the types of changes in causal influences among latent causal variables contributing to identifiability.}
\end{remark}
To clarify this point, consider the following example.
\begin{example} \label{exam: 1}  
Let \( z_2 := \text{MLP}^{\mathbf{u}}(z_1) + n_2 \), where \( \text{MLP}^{\mathbf{u}}(z_1) \) can be decomposed as $\text{MLP}^{\mathbf{u}}(z_1) = \text{MLP}_1^{\mathbf{u}}(z_1) + \text{MLP}_2(z_1)$, with \( \text{MLP}_1^{\mathbf{u}}(z_1) \) being the \(\mathbf{u}\)-dependent component and \( \text{MLP}_2(z_1) \) being a $z_1$-dependent term invariant across \( \mathbf{u} \). Both \( \text{MLP}^{\mathbf{u}} \) and \( \text{MLP}_1^{\mathbf{u}} \) belong to the same function class.
\end{example}
In this example, if assumption~\ref{itm:lambda} is violated, \( z_2 \) becomes unidentifiable. While the causal influence from $z_1$ to $z_2$ changes across \( \mathbf{u} \) due to the \( \mathbf{u} \)-dependent term \( \text{MLP}_1^{\mathbf{u}}(z_1) \), the invariant term \( \text{MLP}_2(z_1) \) induces a invariant causal link between \( z_1 \) and \( z_2 \) across $\mathbf{u}$, which leads to unidentifiable result. Specifically, the invariant \( \text{MLP}_2(z_1) \) can be absorbed into the generative mapping \( \mathbf{f} \), resulting in an alternative representation $z_2' := \text{MLP}_1^{\mathbf{u}}(z_1) + n_2$, which would generate the same observational data. That is, the original data generation process can be equivalently written as $\mathbf{x} = \mathbf{f}(z_1, z_2) = \mathbf{f} \circ \mathbf{f}_1(z_1, z_2')$,
where \( \mathbf{f}_1(z_1, z_2') = [z_1, z_2' + \text{MLP}_2(z_1)] \). Consequently, the model remains unidentifiable. A formal statement of this result is given in Theorem~\ref{theory: partial}~\ref{theory: partial:b}. The reason of this unidentifiability is the presence of the invariant \( \text{MLP}_2(z_1) \), which maintains a constant causal influence of \( z_1 \) on \( z_2 \) across \( \mathbf{u} \). Assumption~\ref{itm:lambda} mitigates this issue by eliminating such invariant component. It does so by constraining the function class satisfies: $\partial \text{MLP}^{\mathbf{u}=\mathbf{u}_{2}}(z_1)/\partial z_1=0$ and $\partial \text{MLP}_1^{\mathbf{u}=\mathbf{u}_{2}}(z_1)/\partial z_1=0$. As a result, $\text{MLP}_2(z_1)/\partial z_1=0$, which implies that $\text{MLP}_2(z_1)$ must be a constant, removing $z_1$-dependent term and thus ensuring identifiability.

From a high-level perspective, assumption~\ref{itm:lambda} is closely related to the notion of perfect interventions studied in prior work \citep{lippe2022citris,lippe2022causal,wendong2023causal,brehmer2022weakly}, in the sense that, informally speaking, there exists an environment in which the causal link between $z_i$ and its parent variables is removed. 
However, unlike these approaches, which require prior knowledge of the intervention targets, assumption~\ref{itm:lambda} does not rely on knowing which latent variables are intervened. Specifically,

\begin{remark}[A Bridge Between Nonlinear ICA and CRL]
\label{remark3.3}
\emph{Assumption~\ref{itm:lambda} bridges nonlinear ICA and CRL by ensuring that, in at least one environment, latent causal variables reduce to latent noise variables. This reduction places CRL within the identifiability regime of nonlinear ICA, whose guarantees do not require prior knowledge of which independent components change. Consequently, CRL does not rely on knowledge of intervention targets.
}
\end{remark}

To clarify this point, consider the following example.

\begin{example}[Implicit Alignment of Environments]
\label{exam:implicit_target}
Let $z_1 := n_1$ and $z_2 := g^{\mathbf{u}}(z_1) + n_2$, where $g^{\mathbf{u}}$ varies across environments $\mathbf{u}$. 
Suppose that in one environment $\mathbf{u} = \mathbf{u}_2$, Assumption~\ref{itm:lambda} holds for $z_2$, so that $z_2 = n_2$ in environment $\mathbf{u}_2$. 
Since nonlinear ICA guarantees that $n_2$ is identifiable (up to scaling and permutation), any estimated latent variable $\hat z_j$ that corresponds to $n_2$ must reduce to a scaling of $n_2$ alone in environment $\mathbf{u}_2$, i.e., $\hat z_j^{\mathbf{u}=\mathbf{u}_2} \propto n_2^{\mathbf{u}=\mathbf{u}_2}$. 
Consequently, even without knowing \emph{a priori} which latent causal variable is intervened, identifying the component that reduces to $n_2$ in environment $\mathbf{u}_2$ suffices to fix the identity of that latent component. This alignment extends across environments because, once the identity of a latent variable is anchored in one environment (e.g., $\mathbf{u}_2$), the shared mapping $\mathbf{f}$ requires the same latent coordinates to explain the observations in all environments.
\end{example}

We refer the interested reader to \textbf{Step III} in Appendix~\ref{appendix:thoery}.

\subsection{Partial Identifiability Result
}
\label{identi: partial}
In practice, satisfying assumption~\ref{itm:lambda} for every causal influence from parent nodes to each child node can be challenging. When this assumption is violated for some nodes, full identifiability may not be achievable. Nevertheless, we can still derive partial identifiability results, as detailed below:

\begin{theorem}
Suppose latent causal variables $\mathbf{z}$ and the observed variable $\mathbf{x}$ follow the causal generative models defined in Eqs.~\eqref{eq:Generative:n} - \eqref{eq:Generative:x}, and the assumptions \ref{itm:injective}-\ref{itm:nu} are satisfied, for each $z_i$,
 \begin{itemize}[leftmargin=14pt]
     \item [\namedlabel{theory: partial:a} {(a)}] if condition \ref{itm:lambda} is satisfied, then the true $z_i$ is related to the recovered one $\hat z_j$, obtained by matching the true marginal data distribution $p(\mathbf{x}|\mathbf{u})$, by the following relationship: $z_i=s_j\hat z_j + c_j$, where $s_j$ denotes scaling, $c_j$ denotes a constant,
     \item [\namedlabel{theory: partial:b} {(b)}] if condition \ref{itm:lambda} is not satisfied, then $z_i$ is unidentifiable.
 \end{itemize}
  \label{theory: partial}
\end{theorem}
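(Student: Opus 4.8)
The plan is to treat part~\ref{theory: partial:a} as a node-localized version of Theorem~\ref{theory: anm}, and part~\ref{theory: partial:b} as an explicit unidentifiability construction generalizing Example~\ref{exam: 1}.

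For part~\ref{theory: partial:a}, I would reuse the first two steps of the proof of Theorem~\ref{theory: anm} unchanged, since neither requires assumption~\ref{itm:lambda} to hold globally: the exponential-family form of Eq.~\eqref{eq:Generative:n} together with assumption~\ref{itm:nu} identifies the noise vector $\mathbf{n}$ up to permutation and scaling via nonlinear ICA, and the additive structure of Eq.~\eqref{eq:Generative:z} (through Lemma~\ref{appendix:polyn1}) produces an invertible map $h$, independent of $\mathbf{u}$, with $z_i = h_i(\hat{\mathbf{z}})$. The new content is then confined to the single node $i$, where I invoke assumption~\ref{itm:lambda} only locally. At the environment $\mathbf{u}=\mathbf{u}_i$ guaranteed by the assumption, every partial derivative $\partial \mathrm{g}_i^{\mathbf{u}=\mathbf{u}_i}/\partial z_j$ vanishes on all parents, so $\mathrm{g}_i^{\mathbf{u}=\mathbf{u}_i}(\mathrm{pa}_i)$ is constant and $z_i$ collapses to $n_i$ plus a constant. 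Because the ICA step already pins $n_i$ to a single estimated noise coordinate $\hat n_j$ up to scaling, at $\mathbf{u}_i$ both $z_i$ and $\hat z_j$ are affine images of that same noise, so $z_i = s_j\hat z_j + c_j$ holds over the support at $\mathbf{u}_i$; since $h$ is $\mathbf{u}$-independent, this affine identity for $h_i$ then persists across all environments. This is exactly Step~III of the proof of Theorem~\ref{theory: anm}, applied coordinatewise rather than to all nodes simultaneously.

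For part~\ref{theory: partial:b}, I would make the heuristic of Example~\ref{exam: 1} rigorous. When assumption~\ref{itm:lambda} fails at node $i$, the modulated mechanism retains a component $\mathrm{g}_{i,2}(\mathrm{pa}_i)$ that is nonconstant in at least one parent and invariant across $\mathbf{u}$. I would define an alternative latent $z_i' := z_i - \mathrm{g}_{i,2}(\mathrm{pa}_i)$, leave all other coordinates unchanged, and set $\mathbf{f}' := \mathbf{f}\circ\mathbf{f}_1$, where $\mathbf{f}_1$ reinstates the invariant term by adding $\mathrm{g}_{i,2}$ back to the $i$-th coordinate and acting as the identity elsewhere. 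The checks are that $\mathbf{f}_1$ is a smooth, unit-triangular map (its inverse subtracts the same term, and its Jacobian has unit diagonal, hence is nonsingular), so $\mathbf{f}'$ is again smooth and invertible as demanded by assumption~\ref{itm:injective}; that $(\mathbf{z}',\mathbf{f}')$ satisfies the same generative equations with the same noise law; and that therefore both models induce identical $p(\mathbf{x}\mid\mathbf{u})$. Since $z_i$ and $z_i'$ differ by the nonconstant $\mathrm{g}_{i,2}(\mathrm{pa}_i)$, no affine relation $z_i = s\,\hat z_j + c$ can hold over all realizations, which establishes unidentifiability.

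The main obstacle is the decomposition underlying part~\ref{theory: partial:b}: I must argue that the failure of assumption~\ref{itm:lambda} genuinely yields a well-defined, nonconstant, $\mathbf{u}$-invariant additive component that can be split off from $\mathrm{g}_i^{\mathbf{u}}$ while staying inside the model class. Consistent with Example~\ref{exam: 1}, the natural route is to posit that the function class admits a decomposition $\mathrm{g}_i^{\mathbf{u}} = \mathrm{g}_{i,1}^{\mathbf{u}} + \mathrm{g}_{i,2}$ into a $\mathbf{u}$-dependent and a $\mathbf{u}$-invariant part, and then to observe that assumption~\ref{itm:lambda} is equivalent to $\mathrm{g}_{i,2}$ being constant in $\mathrm{pa}_i$: if some $\mathbf{u}_i$ zeroes all parent derivatives of both $\mathrm{g}_i^{\mathbf{u}=\mathbf{u}_i}$ and $\mathrm{g}_{i,1}^{\mathbf{u}=\mathbf{u}_i}$, then $\partial \mathrm{g}_{i,2}/\partial z_j = 0$ for every parent. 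Hence failure of the assumption is exactly the existence of a nonconstant $\mathrm{g}_{i,2}$, furnishing the term to absorb into $\mathbf{f}$, and I would verify that subtracting it keeps $z_i'$ in the model class and $\mathbf{f}\circ\mathbf{f}_1$ invertible. A secondary subtlety is confirming that the coordinatewise argument in part~\ref{theory: partial:a} does not covertly need assumption~\ref{itm:lambda} at the parents of $z_i$; the $\mathbf{u}$-independence of $h$ and the ICA identification of $n_i$ already anchor $z_i$ regardless of whether its parents are themselves identifiable.
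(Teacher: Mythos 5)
Your proposal is correct, and for part~\ref{theory: partial:a} it is essentially the paper's own proof: Appendix~\ref{appendix:partial} likewise reuses Steps I--II of the proof of Theorem~\ref{theory: anm} (noise identification plus the $\mathbf{u}$-independent invertible map $\boldsymbol{\Phi}$ of Lemma~\ref{appendix:polyn1}) and then applies the Step~III machinery node-locally: Lemma~\ref{appendix:partial0} together with entry-wise comparison of the Jacobian identity in Eqs.~\eqref{bpexpand1}--\eqref{bpexpand2} forces $J_{\boldsymbol{\Phi}_{i,j}}=0$ for all $j<i$, which is your ``collapse at $\mathbf{u}_i$, then propagate by $\mathbf{u}$-independence'' argument; your closing observation that the parents of $z_i$ need not themselves be identifiable is the paper's Remark~\ref{remark1}. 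One precision: your assertion that ``at $\mathbf{u}_i$ both $z_i$ and $\hat z_j$ are affine images of the same noise'' is not automatic, because the estimated mechanism's own vanishing point need not be $\mathbf{u}_i$; it is exactly what Step~III's contradiction argument (invertibility of $\boldsymbol{\Phi}$ plus $z_i=n_i=s\hat n_j+c$ at $\mathbf{u}_i$) establishes, so citing that step, as you do, is what carries this claim.

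For part~\ref{theory: partial:b} you take a genuinely different, and more constructive, route. The paper stays at the Jacobian level: when Lemma~\ref{appendix:partial0} is unavailable, the entry equation $s_{1,1}\,\partial h^{\mathbf{u}}_2/\partial n_1 = J_{\boldsymbol{\Phi}_{2,1}} + J_{\boldsymbol{\Phi}_{2,2}}\,\partial \hat h^{\mathbf{u}}_2/\partial \hat n_1$ no longer forces $J_{\boldsymbol{\Phi}_{2,1}}=0$, so $z_2$ may mix $\hat z_1$ and $\hat z_2$. You instead exhibit an explicit alternative solution $(\mathbf{z}',\mathbf{f}\circ\mathbf{f}_1)$ by splitting off the $\mathbf{u}$-invariant component and absorbing it into the decoder, i.e., you formalize Example~\ref{exam: 1}, which the paper invokes only as main-text intuition. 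Your version is the tighter unidentifiability argument (non-identifiability is an existence claim about alternative solutions, which the paper's ``the entry can be nonzero'' reasoning only gestures at), while the paper's version has the advantage of requiring nothing beyond the identity already in hand. Finally, the obstacle you flag can be closed without positing any decomposition structure on the function class: take $g_{i,2}:=g_i^{\mathbf{u}_0}$ for an arbitrary fixed reference environment $\mathbf{u}_0$ and $g_{i,1}^{\mathbf{u}}:=g_i^{\mathbf{u}}-g_i^{\mathbf{u}_0}$. If assumption~\ref{itm:lambda} fails at node $i$, then $g_i^{\mathbf{u}_0}$ must be nonconstant in at least one parent (otherwise $\mathbf{u}_0$ would witness the assumption), so the required nonconstant invariant term always exists, and your unit-triangular $\mathbf{f}_1$ keeps $\mathbf{f}\circ\mathbf{f}_1$ smooth and invertible as demanded by assumption~\ref{itm:injective}.
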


{{\paragraph{Proof sketch} As outlined in the proof sketch for Theorem \ref{theory: anm}, in the second step, without invoking assumption \ref{itm:lambda}, we can establish an invertible mapping between the true latent causal variables $\mathbf{z}$ and the estimated ones. Building on this mapping, when the condition in \ref{theory: partial:a} is satisfied, we can directly prove \ref{theory: partial:a}. Conversely, for \ref{theory: partial:b}, we can establish the proof of \ref{theory: partial:b}, by removing the terms corresponding to the unchanged causal influences, as illustrated in Example \ref{exam: 1}. Full details are provided in Appendix \ref{appendix:partial}.}}

\begin{remark}[Parent nodes do not impact children] The implications of Theorem \ref{theory: partial} (\ref{theory: partial:a} and \ref{theory: partial:b}) suggest that $z_i$ remains identifiable, even when its parent nodes are unidentifiable. This is primarily because regardless of whether assumption \ref{itm:lambda} is met, assumptions \ref{itm:injective}-\ref{itm:nu} ensure that latent noise variables $\mathbf{n}$ can be identified. In the context of additive noise models (or post-nonlinear models discussed in the next section), the mapping from  $\mathbf{n}$ to $\mathbf{z}$ is invertible. Therefore, with identifiable noise variables, all necessary information for recovering $\mathbf{z}$ is contained within $\mathbf{n}$. Furthermore, assumption \ref{itm:lambda} is actually transformed into relations between each node and the noise of its parent node, as stated in Lemma \ref{appendix:partial0}. As a result, $z_i$ could be identifiable, even when its parent nodes are unidentifiable. 

\label{remark1}
\end{remark}

\begin{remark}[Subspace identifiability] 
The implications of Theorem \ref{theory: partial} suggest the theoretical possibility of partitioning the entire latent space into two distinct subspaces: latent invariant space containing \textit{invariant} latent causal variables and latent \textit{variant} space comprising variant latent causal variables. This insight could be particularly valuable for applications that prioritize learning invariant latent variables to adapt to changing environments, such as domain adaptation or generalization \citep{kong2022partial}. While similar findings have been explored in latent polynomial models in \citep{liu2023identifiable}, this work demonstrates that such results also apply to more flexible additive noise models.
\label{remark1e}
\end{remark} 

\paragraph{Summary} This work decomposes causal mechanisms in latent space into two components: one associated with latent noise variables and the other capturing causal influences from parent nodes. By analyzing the changes of the distributions of latent noise variables, formalized by assumption \ref{itm:nu} in Theorem \ref{theory: anm}, we show that the latent noise variables $\mathbf{n}$ an be identified. However, identifying $\mathbf{n}$ alone does not ensure component-wise identifiability of the latent causal variables $\mathbf{z}$, as demonstrated by Theorem \ref{theory: partial} \ref{theory: partial:b}. To address this, we further examine changes in the causal influences. Specifically, $\mathrm{g}^{\mathbf{u}}_i$ in Eq.~\eqref{eq:Generative:z}, assumption \ref{itm:lambda} has been proven to be a sufficient condition for component-wise identifiability of $\mathbf{z}$, supported by Theorem \ref{theory: anm} and Theorem \ref{theory: partial:a}, under assumptions \ref{itm:injective}-\ref{itm:nu}. Finally, we extend our theory to a more practical setting where only a subset of the latent variables satisfies assumption \ref{itm:lambda}. In this case, we achieve partial identifiability, as shown in Theorem \ref{theory: partial} \ref{theory: partial:a}.


\subsection{Extension to Latent Post-Nonlinear Models}
\label{sec: postnon}
While latent additive noise models, as defined in Eq.~\eqref{eq:Generative:z}, are general, their capacities are still limited, e.g., requiring additive noise. In this section, we generalize latent additive noise models to latent post-nonlinear models {\color{blue}\citep{Zhang_UAI09,uemura2022multivariate,keropyan2023rank}}, which generally offer more powerful expressive capabilities than latent additive noise models. To this end, we replace Eq.~\eqref{eq:Generative:z} by the following:
\begin{align}
    {\bar z_i} &:= \mathrm{\bar g}_i (z_i) = \mathrm{\bar g}_i ({\mathrm{g}^{\mathbf{u}}_i(\mathrm{pa}_{i})}+n_i),  \label{eq:Generative:pz} 
\end{align}
where $\mathrm{\bar g}_i$ denotes a invertible post-nonlinear mapping. It includes the latent additive noise models Eq.~\eqref{eq:Generative:z} as a special case in which the nonlinear distortion $\mathrm{\bar g}_i$ does not exist. Based on this, we can identify $\mathbf{\bar z}$ up to component-wise invertible nonlinear transformation as follows:
\begin{corollary} 
Suppose latent causal variables $\mathbf{z}$ and the observed variable $\mathbf{x}$ follow the causal generative models defined in Eqs.~\eqref{eq:Generative:n}, \eqref{eq:Generative:pz} and \eqref{eq:Generative:x}. Assume that conditions \ref{itm:injective} - \ref{itm:lambda} in Theorem \ref{theory: anm} hold, then each true latent variable \( \bar z_i \) is related to exactly one estimated latent variable \( \hat{\bar z}_j \), which is learned by matching the true marginal data distribution $p(\mathbf{x}|\mathbf{u})$, by the following relationship: ${\bar z_i} = {M}_j ({\hat{\bar{ z_j}}}) + {c_j},$ where ${M}_j$ and ${c}_j$ denote a invertible nonlinear mapping and a constant, respectively.
\label{corollary: pnm}
\end{corollary}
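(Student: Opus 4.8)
The plan is to reduce Corollary~\ref{corollary: pnm} to the already-established Theorem~\ref{theory: anm} by exploiting the fact that the post-nonlinear distortion $\mathrm{\bar g}_i$ is invertible and acts component-wise. First I would observe that, by assumption~\ref{itm:injective}, the outer mapping $\mathbf{f}$ is smooth and invertible, and that the post-nonlinear model of Eq.~\eqref{eq:Generative:pz} can be written as $\bar z_i = \mathrm{\bar g}_i(z_i)$ where $z_i = \mathrm{g}^{\mathbf{u}}_i(\mathrm{pa}_i) + n_i$ is exactly the additive noise model of Eq.~\eqref{eq:Generative:z}. Collecting these component-wise, let $\mathbf{\bar g}(\mathbf{z}) = [\mathrm{\bar g}_1(z_1),\dots,\mathrm{\bar g}_\ell(z_\ell)]$, which is a diagonal (component-wise) invertible map. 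The observation then factors as $\mathbf{x} = \mathbf{f}(\mathbf{z}) = \mathbf{f}\circ \mathbf{\bar g}^{-1}(\mathbf{\bar z})$, so defining $\mathbf{\tilde f} \defeq \mathbf{f}\circ \mathbf{\bar g}^{-1}$ gives a smooth invertible generating map from the additive-noise latents viewed in the $\mathbf{\bar z}$-coordinates. The key point is that the underlying generative process in the $\mathbf{z}$-coordinates still satisfies Eqs.~\eqref{eq:Generative:n}–\eqref{eq:Generative:z}, so Theorem~\ref{theory: anm} applies verbatim to the pair $(\mathbf{z}, \mathbf{x})$ under the mixing $\mathbf{\tilde f}$.

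Next I would apply Theorem~\ref{theory: anm} to conclude that each true $z_i$ is linearly related to exactly one estimated additive-noise latent, $z_i = s_j \hat z_j + c_i$. The subtlety is what the estimator recovers in the post-nonlinear setting: when we match $p(\mathbf{x}\mid\mathbf{u})$, the estimated model is also post-nonlinear, so the recovered variable is $\hat{\bar z}_j = \hat{\mathrm{\bar g}}_j(\hat z_j)$ for some estimated invertible component-wise distortion $\hat{\mathrm{\bar g}}_j$. Inverting this relation gives $\hat z_j = \hat{\mathrm{\bar g}}_j^{-1}(\hat{\bar z}_j)$, and combining with the true distortion $\bar z_i = \mathrm{\bar g}_i(z_i) = \mathrm{\bar g}_i(s_j \hat z_j + c_i)$ yields
\begin{equation}
\bar z_i = \mathrm{\bar g}_i\!\left(s_j\, \hat{\mathrm{\bar g}}_j^{-1}(\hat{\bar z}_j) + c_i\right).
\end{equation}
Defining $M_j \defeq \mathrm{\bar g}_i\circ (s_j \cdot {+}\, c_i)\circ \hat{\mathrm{\bar g}}_j^{-1}$ as the composition of invertible maps (the affine rescaling $t\mapsto s_j t + c_i$ is invertible since $s_j\neq 0$), we obtain $\bar z_i = M_j(\hat{\bar z}_j)$, an invertible nonlinear relationship. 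Absorbing any additive offset into a constant $c_j$ recovers exactly the claimed form $\bar z_i = M_j(\hat{\bar z}_j) + c_j$.

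The main obstacle I anticipate is justifying that matching $p(\mathbf{x}\mid\mathbf{u})$ in the post-nonlinear class indeed forces the estimator into the assumed generative form, i.e., that the estimated model decomposes as an invertible component-wise distortion composed with an additive-noise model, so that the reduction to Theorem~\ref{theory: anm} is legitimate rather than circular. Concretely, one must verify that the composite map $\mathbf{\tilde f} = \mathbf{f}\circ\mathbf{\bar g}^{-1}$ retains the smoothness and invertibility required by assumption~\ref{itm:injective}, and that pushing the distortions $\mathrm{\bar g}_i$ into the mixing does not disturb assumptions~\ref{itm:nu}–\ref{itm:lambda}, which constrain only the noise distribution $p_{(\mathbf{T},\boldsymbol{\eta})}(\mathbf{n}\mid\mathbf{u})$ and the causal mechanisms $\mathrm{g}^{\mathbf{u}}_i$ acting on the pre-distortion variables $\mathbf{z}$. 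Since $\mathbf{\bar g}$ is applied after the additive structure and before the mixing, and depends on neither $\mathbf{u}$ nor on more than one coordinate, these three assumptions are left intact, so the reduction goes through; I would devote the bulk of the argument to making this invariance precise. The remaining steps—composition of invertible maps and bookkeeping of constants—are routine.
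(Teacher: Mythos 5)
Your proposal is correct and follows essentially the same route as the paper's proof: absorb the component-wise invertible distortion into the mixing so that Theorem~\ref{theory: anm} applies to the additive-noise latents $\mathbf{z}$ (the paper writes this composite as $\widetilde{\mathbf{f}} = \mathbf{f}\circ\bar{\mathbf{g}}$, a trivial relabeling of your $\mathbf{f}\circ\bar{\mathbf{g}}^{-1}$), then sandwich the resulting affine relation $z_i = s_j\hat z_j + c_i$ between $\bar{\mathrm{g}}_i$ and $\hat{\bar{\mathrm{g}}}_j^{-1}$ to obtain $M_j = \bar{\mathrm{g}}_i\circ(s_j\cdot{+}\,c_i)\circ\hat{\bar{\mathrm{g}}}_j^{-1}$, exactly matching the paper's final identity $\bar{\mathbf{g}}^{-1}(\bar{\mathbf{z}}) = \mathbf{P}\,\hat{\bar{\mathbf{g}}}^{-1}(\hat{\bar{\mathbf{z}}}) + \mathbf{c}'$.
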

{ \paragraph{Proof sketch} The proof proceeds intuitively as follows. Since each function $\bar g_i$ in Eq.~\eqref{eq:Generative:pz} is invertible, we can define a new invertible function by composing $\mathbf{f}$ with $\mathbf{\bar g}$, component-wise via $\bar g_i$. This composition preserves invertibility and allows us to directly apply Theorem \ref{theory: anm}, yielding the stated identifiability result. Full details are provided in Appendix \ref{appendix:pnm:partial}..}

Similar to Theorem \ref{theory: partial}, we have partial identifiability result:
\begin{corollary}
Suppose latent causal variables $\mathbf{z}$ and the observed variable $\mathbf{x}$ follow the causal generative models defined in Eqs.~\eqref{eq:Generative:n}, \eqref{eq:Generative:pz} and \eqref{eq:Generative:x}. Under the condition that the assumptions \ref{itm:injective}-\ref{itm:nu} are satisfied, for each $\bar z_i$, {(a)} if it is a root node or condition \ref{itm:lambda} is satisfied, then the true $\bar z_i$ is related to the recovered one ${\hat{\bar z}_j}$, obtained by matching the true marginal data distribution $p(\mathbf{x}|\mathbf{u})$, by the following relationship: $\bar z_i=M_{j}(\hat {\bar z}_j) + c_j$, where $M_{j}$ denotes a invertible mapping, $c_j$ denotes a constant, {(b)} if condition \ref{itm:lambda} is not satisfied, then $\bar z_i$ is unidentifiable.
  \label{corollary: partial}
\end{corollary}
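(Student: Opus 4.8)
The plan is to reduce Corollary~\ref{corollary: partial} to the additive-noise partial identifiability result of Theorem~\ref{theory: partial} by the same composition device used to prove Corollary~\ref{corollary: pnm}, and then to transport the resulting correspondences back through the post-nonlinear distortions $\bar g_i$. The entire argument hinges on the invertibility of each $\bar g_i$, which lets us regard the post-nonlinear model of Eq.~\eqref{eq:Generative:pz} as an ordinary additive noise model in the hidden variables $z_i = \mathrm{g}_i^{\mathbf{u}}(\mathrm{pa}_i) + n_i$ after a smooth invertible change of the mixing map.

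First I would set up the reduction. Define the component-wise map $\bar{\mathbf{g}}(\mathbf{z}) = (\bar g_1(z_1),\dots,\bar g_\ell(z_\ell))$ and the composed observation map $\tilde{\mathbf{f}} \defeq \mathbf{f}\circ\bar{\mathbf{g}}$, so that $\mathbf{x} = \mathbf{f}(\bar{\mathbf{z}}) = \tilde{\mathbf{f}}(\mathbf{z})$. Since $\mathbf{f}$ is smooth and invertible by assumption~\ref{itm:injective} and each $\bar g_i$ is invertible by hypothesis, $\tilde{\mathbf{f}}$ is again smooth and invertible, so assumption~\ref{itm:injective} holds for the reduced model; assumption~\ref{itm:nu} is untouched because it constrains only the natural parameters $\eta_{i,j}(\mathbf{u})$ of the noise law in Eq.~\eqref{eq:Generative:n}. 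Hence $(\mathbf{n},\mathbf{z},\mathbf{x})$ with $\mathbf{x}=\tilde{\mathbf{f}}(\mathbf{z})$ and $z_i=\mathrm{g}_i^{\mathbf{u}}(\mathrm{pa}_i)+n_i$ is exactly an instance of the additive noise model of Eqs.~\eqref{eq:Generative:n}--\eqref{eq:Generative:x} satisfying assumptions~\ref{itm:injective}--\ref{itm:nu}. Assumption~\ref{itm:lambda} also carries over: because the parent distortions are invertible, the chain rule gives $\partial \mathrm{g}_i^{\mathbf{u}}/\partial z_j = (\partial \mathrm{g}_i^{\mathbf{u}}/\partial \bar z_j)\,\bar g_j'$ with $\bar g_j'\neq 0$, so the vanishing-gradient condition for the post-nonlinear model is equivalent to the one for the reduced additive model.

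Next I would invoke Theorem~\ref{theory: partial} on the reduced model. For a node where assumption~\ref{itm:lambda} holds---including every root node, for which the derivative condition is vacuous since $\mathrm{pa}_i=\emptyset$---part~\ref{theory: partial:a} gives $z_i = s_j\hat z_j + c_j$ for the recovered additive-noise latent $\hat z_j$. Writing the estimated post-nonlinear latent as $\hat{\bar z}_j=\hat{\bar g}_j(\hat z_j)$ for the estimator's invertible distortion $\hat{\bar g}_j$, I substitute $\hat z_j=\hat{\bar g}_j^{-1}(\hat{\bar z}_j)$ and apply $\bar g_i$ to obtain $\bar z_i=\bar g_i\!\big(s_j\,\hat{\bar g}_j^{-1}(\hat{\bar z}_j)+c_j\big)$, a composition of invertible maps; collecting this into $M_j$ (and absorbing any residual constant) yields $\bar z_i=M_j(\hat{\bar z}_j)+c_j$ with $M_j$ invertible, which is part (a). For a node where assumption~\ref{itm:lambda} fails, part~\ref{theory: partial:b} makes the corresponding $z_i$ unidentifiable; since $\bar z_i=\bar g_i(z_i)$ is an invertible reparametrization, the pair of distinct additive-noise generative models matching $p(\mathbf{x}\mid\mathbf{u})$ (built as in Example~\ref{exam: 1} by folding the invariant component into $\tilde{\mathbf{f}}$) maps under $\bar g_i$ to a pair of post-nonlinear models matching the same distribution, so $\bar z_i$ is likewise unidentifiable, giving part (b).

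The main obstacle I expect is making the correspondence between the estimated post-nonlinear latents $\hat{\bar z}_j$ and the estimated additive-noise latents $\hat z_j$ fully rigorous: I must argue that an estimator matching $p(\mathbf{x}\mid\mathbf{u})$ within the post-nonlinear class induces, via its own invertible distortions $\hat{\bar g}_j$, a bona fide additive-noise estimator to which Theorem~\ref{theory: partial} applies, and that this holds uniformly in $\mathbf{u}$. A secondary point requiring care is transporting the unidentifiability construction of Example~\ref{exam: 1} through $\bar g_i$ while checking that the absorbed invariant component can still be folded into the composed mixing $\tilde{\mathbf{f}}$ without leaving the post-nonlinear model class; in both places it is precisely the invertibility of $\bar g_i$ that keeps the transfer clean.
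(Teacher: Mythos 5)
Your proposal is correct and follows essentially the same route as the paper's own proof: compose $\mathbf{f}$ with the component-wise invertible distortions $\bar{\mathbf{g}}$ to get an invertible mixing $\widetilde{\mathbf{f}} = \mathbf{f}\circ\bar{\mathbf{g}}$, reduce to the additive-noise setting of Theorem~\ref{theory: partial}, and then transport part (a) through $\bar g_i$ (yielding the invertible $M_j$) and part (b) directly via the invertible reparametrization $\bar z_i = \bar g_i(z_i)$. The extra care you take — the chain-rule argument that assumption~\ref{itm:lambda} carries over, and the explicit correspondence $\hat{\bar z}_j = \hat{\bar g}_j(\hat z_j)$ between the two estimators' latents — is rigor the paper's terser proof glosses over, but it does not change the argument.
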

\paragraph{Proof sketch}
{Again, since the function $\mathrm{\bar g}_i$ is invertible defined in Eq.~\eqref{eq:Generative:pz} and $\mathbf{f}$ is invertible in theorem \ref{theory: partial}, we can use the result of theorem \ref{theory: partial} \ref{theory: partial:b} to conclude the proof. Refer to Appendix \ref{appendix:cor:pnm:partial}.}

\begin{remark}[Sharing Properties] 
Corollary \ref{corollary: partial} establishes that the properties outlined in Theorem \ref{theory: partial}, including remarks \ref{remark1} to \ref{remark1e}, remain applicable in latent post-nonlinear causal models.
\end{remark}

\section{Learning Latent Additive Noise Models}
In this section, we translate our theoretical findings into a novel method for learning latent causal models. Our primary focus is on learning additive noise models, as extending the method to latent post-nonlinear models is straightforward, simply involving the utilization of invertible nonlinear mappings. Without loss of generality, due to permutation indeterminacy in latent space, we can naturally enforce a causal order $z_1\succ z_2\succ ...,\succ z_\ell$ \citep{seigal2022linear,liu2022identifying}. This does not imply that we require knowledge of the true causal order, refer to Appendix \ref{appendix: causalorder} for more details. With guarantee from Theorem \ref{theory: anm}, each variable $z_i$ can be imposed to learn the corresponding latent variables in the correct causal order. As a result, we formulate {a} prior model as follows:
{\small
\begin{align}
&p({\bf{z}}|\mathbf{u})  = \prod \nolimits_{i=1}^{\ell}{p({{z_i}}|{\bf z}_{<i} \odot \mathbf{m}_i(\mathbf{u}) ,\mathbf{u})}, \\
    &=\prod \nolimits_{i=1}^{\ell}{{\cal N}(\mu_{z_i}({\bf z}_{<i} \odot \mathbf{m}_i(\mathbf{u}), \mathbf{u}),\delta^2_{z_i}({\bf z}_{<i} \odot \mathbf{m}_i(\mathbf{u}), \mathbf{u}))}, \notag
\end{align}
}
where we focus on latent Gaussian noise variables, {to satisfy the exponential family assumption in Eq.~\eqref{eq:Generative:n} and naturally allow the implementation of the reparameterization trick. Moreover, we introduce additional vectors $\mathbf{m}_i(\mathbf{u})$, by enforcing sparsity on $\mathbf{m}_i(\mathbf{u})$ and the component-wise product $\odot$, which aligns with assumption~\ref{itm:lambda} and facilitates learning the latent causal graph structure.} In our implementation, we impose the L1 norm, though other methods may also be flexible, e.g., sparsity priors \citep{carvalho2009handling,liu2019variational}. We employ a variational posterior to approximate the true posterior $p(\mathbf{z}|\mathbf{x},\mathbf{u})$:
{\small
\begin{align}\label{posterior}
&q(\mathbf{z}|\mathbf{u},\mathbf{x})=
 \prod\nolimits_{i=1}^{\ell}{q({{z_i}}|{\bf z}_{<i}\odot \mathbf{m}_i,\mathbf{u},\mathbf{x})}  \\
 &=\prod \nolimits_{i=1}^{\ell}{{\cal N}(\mu_{z_i}({\bf z}_{<i} \odot \mathbf{m}_i(\mathbf{u}), \mathbf{u}, \mathbf{x}),\delta^2_{z_i}({\bf z}_{<i} \odot \mathbf{m}_i (\mathbf{u}), \mathbf{u}, \mathbf{x}))} , \notag
\end{align}
}
where the variational posterior shares the same parameter $\mathbf{m}_i$ to limit both the prior and the variational posterior, maintaining the same latent causal graph structure. {In addition, we enforce a Gaussian posterior conditioned on the parent nodes, to align with the prior model and model assumption in Eqs.~\eqref{eq:Generative:n} and \eqref{eq:Generative:z}}. Finally, we arrive at the objective:
{\small
\begin{align}
\label{obj}
\mathop {\max }\mathbb{E}_{q(\mathbf{z}|\mathbf{x,u})}(\log p(\mathbf {x}|\mathbf{z},\mathbf{u})) &- {D_{KL}}({q }(\mathbf{z|x,u})||p(\mathbf{z}|\mathbf{u})) \notag \\
& -\gamma\sum\nolimits_i \lVert \mathbf{m}_i (\mathbf{u}) \rVert^1_{{1}},
\end{align}
}
where ${D_{KL}}$ denotes the KL divergence, $\gamma$ denotes a hyperparameters to control the sparsity of latent causal structure. {The objective is known as the evidence lower bound (ELBO), which serves as a lower bound of the log-likelihood. Under certain conditions, it can match the true data distribution, which is one of the requirements for our identifiability guarantee in Theorem~\ref{theory: anm}. Moreover, the variational estimator is consistent in the sense that, as the number of data samples grows, the learned variational posterior converges to the true posterior \footnote{More strictly, this holds under the assumption that the variational posterior has sufficient capacity to represent the true posterior, and in the limit of the optimal ELBO solution \citep{khemakhem2020variational}. Note that the sparsity regularization term with weight $\gamma$ may introduce bias, therefore, strict consistency holds when $\gamma$ is sufficiently small, such that the regularized solution remains close to the unregularized ELBO optimum.}
, ensuring reliable recovery of the underlying latent variables.} Implementation details can be found in Appendix \ref{appendix: imple}.

\section{Experiments}
\paragraph{Synthetic Data}
We first conduct experiments on synthetic data, generated by the following process: we divide latent noise variables into $M$ segments, where each segment corresponds to one value of $\mathbf{u}$ as the segment label. Within each segment, the location and scale parameters are respectively sampled from uniform priors. After generating latent noise variables, we generate latent causal variables, and finally obtain the observed data samples by an invertible nonlinear mapping on the causal variables. Details can be found in \ref{appendix: data}.

\begin{figure}
  \centering
\includegraphics[width=0.48\linewidth]{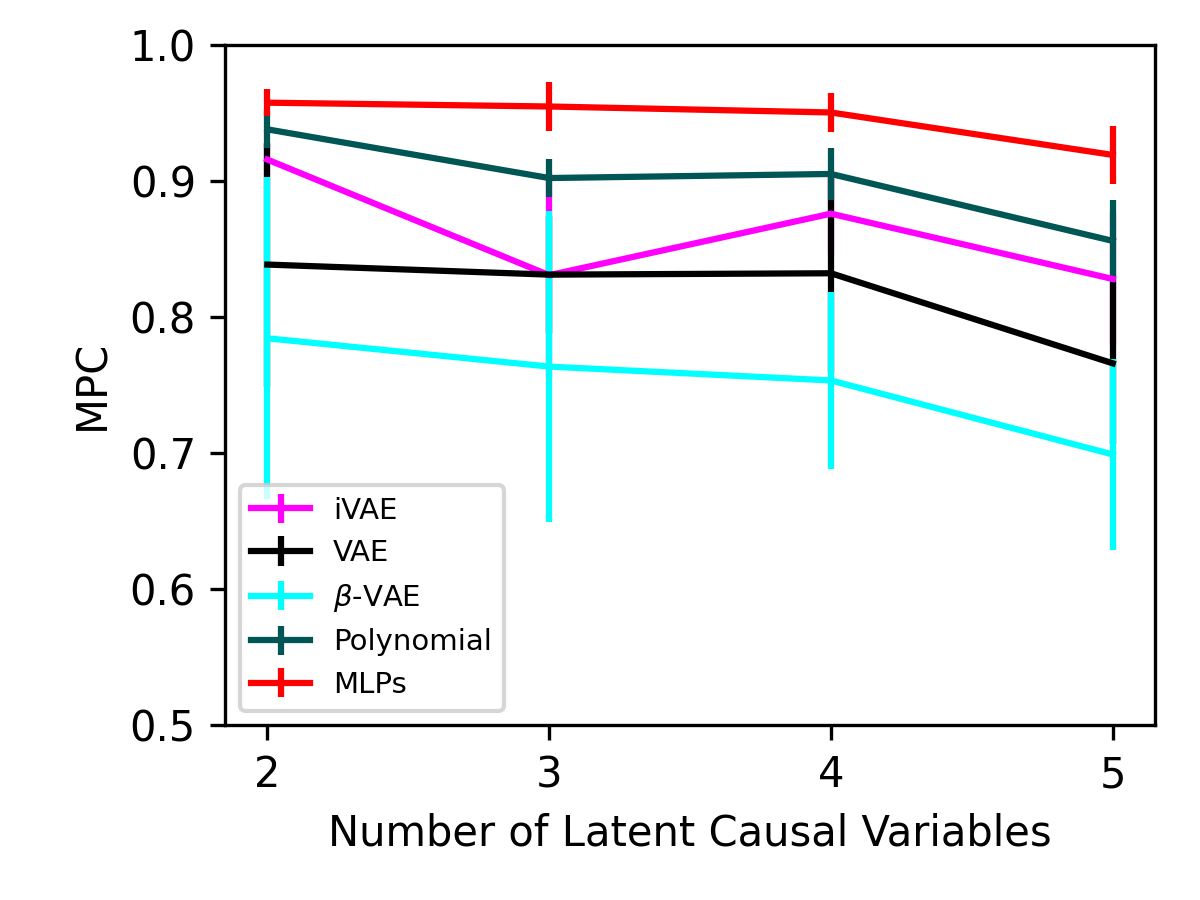}
  \hfill
\includegraphics[width=0.48\linewidth]{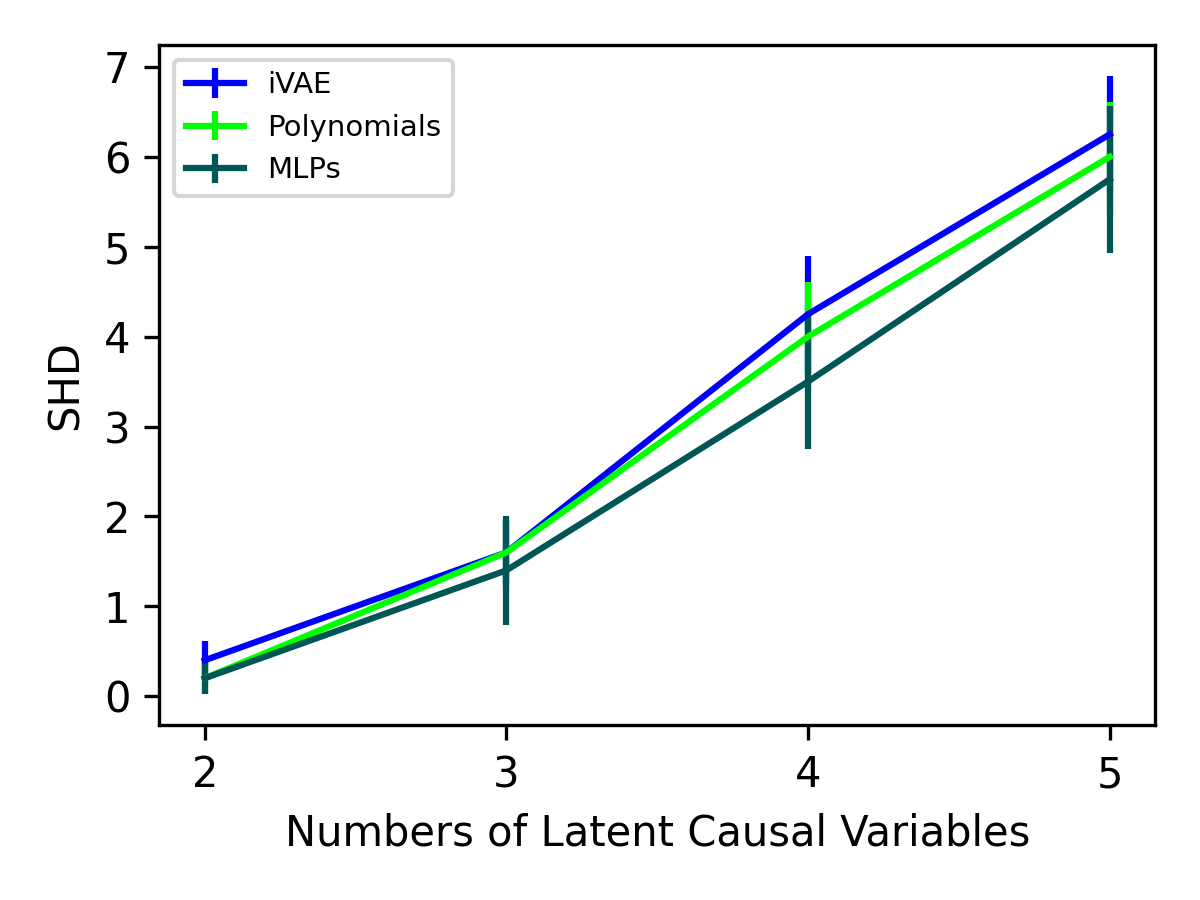}
  \caption{\small
    Performance comparison under latent additive Gaussian noise. Left: MPC scores for different methods, where the proposed MLPs method achieves the best performance, supporting our theoretical results. Right: SHD scores of the proposed method, Polynimals \citep{liu2023identifiable}, and iVAE combined with the method from~\citet{CDNOD_20}.
  }
  \label{fig:noises}
  \vspace{-1em}
\end{figure}
We evaluate our proposed method, implemented with multilayer perceptrons (MLPs) and hence referred to as MLPs, to model the causal relations among latent causal variables, against established models: vanilla VAE \citep{kingma2013auto}, $\beta$-VAE \citep{higgins2016beta}, identifiable VAE (iVAE) \citep{khemakhem2020variational}, and latent polynomial models (Polynomials) \citep{liu2023identifiable}. Notably, the iVAE demonstrates the capability to identify true independent noise variables, subject to certain conditions, with permutation and scaling. Polynomials, while sharing similar assumptions with our proposed method, are prone to certain limitations. Specifically, they may suffer from numerical instability and face challenges due to the exponential growth in the number of terms. While the $\beta$-VAE is popular in disentanglement tasks due to its emphasis on independence among recovered variables, it lacks robust theoretical backing. Our evaluation focuses on two metrics: the Mean of the Pearson Correlation Coefficient (MPC) to assess performance, and the Structural Hamming Distance (SHD) to gauge the accuracy of the latent causal graphs. The result for iVAE is obtained by applying the method from \citep{CDNOD_20} to the latent variables estimated by iVAE.
Figure \ref{fig:noises} illustrates the comparative performances of various methods, e.g., VAE and iVAE, across different models, e.g., models with different dimensions of latent variables. Based on MPC, the proposed method demonstrates satisfactory results, thereby supporting our identifiability claims. Additionally, Figure \ref{fig:partial} presents how the proposed method performs when condition \ref{itm:lambda} is not met. It is evident that condition \ref{itm:lambda} is a sufficient condition characterizing the types of distribution shifts for identifiability in the context of latent additive noise models. These empirical findings align with our partial identifiability results. More results on high-dimensional synthetic image data can be found in Appendix \ref{appendix:traversals}.

\begin{figure}[h]
  \centering
\includegraphics[width=0.3\linewidth]{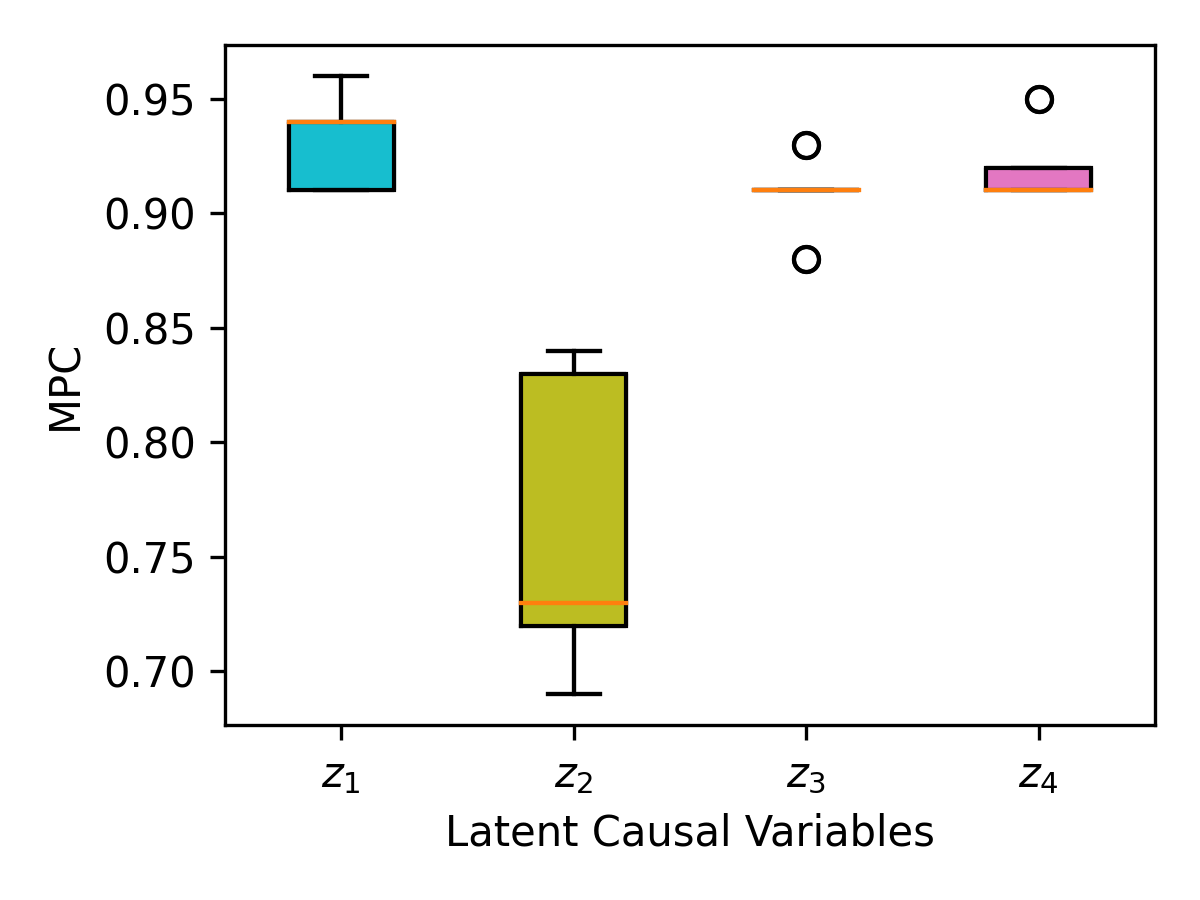}~
\includegraphics[width=0.3\linewidth]{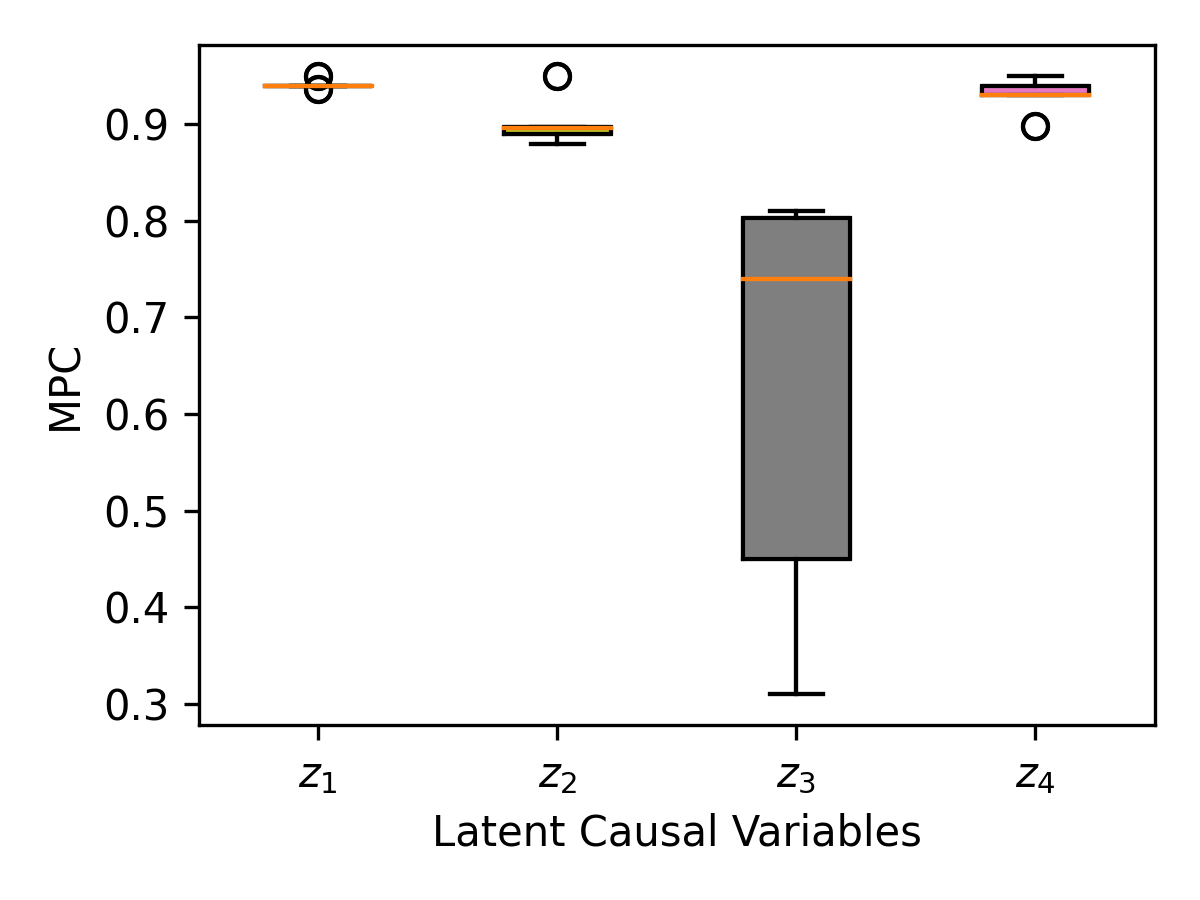}~
\includegraphics[width=0.3\linewidth]{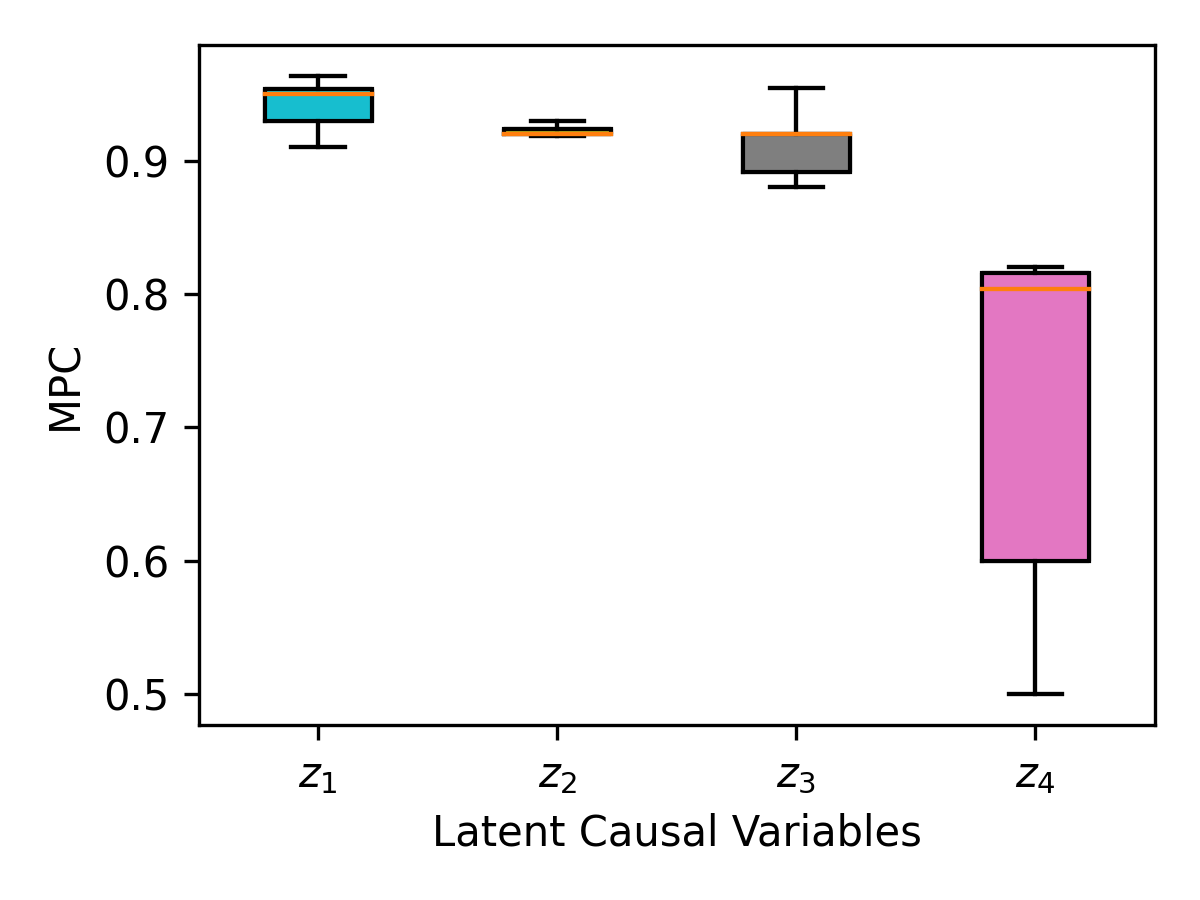}~
  \caption{\small Performance of the proposed method under scenarios where condition \ref{itm:lambda} is not satisfied regarding the causal influence of $z_1\rightarrow z_2$ (consequently, $z_2\rightarrow z_3$, and $z_3\rightarrow z_4$). The results are in agreement with  partial identifiability in Theorem \ref{theory: partial}, i.e., roughly speaking, latent variables that satisfy Condition~\ref{itm:lambda} are identifiable, while those that do not are not identifiable.}
  \label{fig:partial}
\end{figure}

\paragraph{Post-Nonlinear Models} In the above experiments, we obtain the observed data samples as derived from a random invertible nonlinear mapping applied to the latent causal variables. The nonlinear mapping can be conceptualized as a combination of an invertible transformation and the specific invertible mapping, $\mathrm{\bar g}_i$. From this perspective, the results depicted in Figures \ref{fig:noises} and \ref{fig:partial} also demonstrate the effectiveness of the proposed method in recovering the variables $z_i$ in latent post-nonlinear models Eq.~\eqref{eq:Generative:pz}, as well as the associated latent causal structures. Consequently, these results also serve to corroborate the assertions in Corollary \ref{corollary: pnm} and \ref{corollary: partial}, particularly given that $\mathrm{\bar g}_i$ are invertible.

\paragraph{Semi-Synthetic fMRI Data} Building on the works in \citet{liu2022identifying,liu2023identifiable}, we extended the application of the proposed method to the fMRI hippocampus dataset \citep{linkLP}. This dataset comprises signals from six distinct brain regions: perirhinal cortex (PRC), parahippocampal cortex (PHC), entorhinal cortex (ERC), subiculum (Sub), CA1, and CA3/Dentate Gyrus (DG). These signals, recorded during resting states, span 84 consecutive days from a single individual. Each day's data contributes to an 84-dimensional vector, e.g., $\mathbf{u}$. Our focus is on uncovering latent causal variables, therefore, we treat these six brain signals as such. Specifically, we assume that they undergo a random nonlinear mapping into the observable space, after which suitable methods can be applied to recover them.

\begin{figure}[t]
  \centering
  \begin{minipage}{0.3\linewidth}
    \centering
\includegraphics[width=\linewidth]{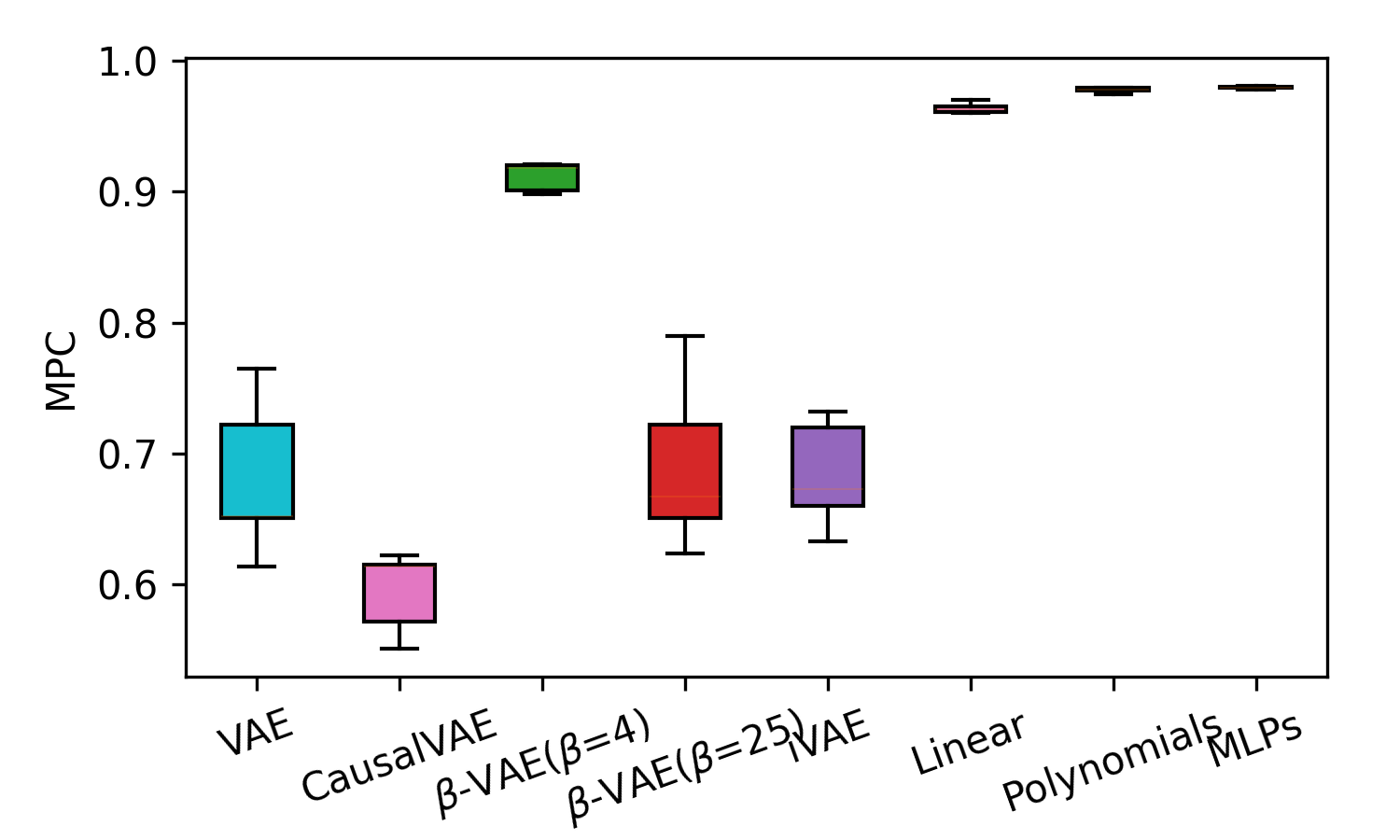}
    \small (a) MPC scores.
  \end{minipage}
  \hfill
  \begin{minipage}{0.65\linewidth}
    \centering
    \includegraphics[width=0.32\linewidth]{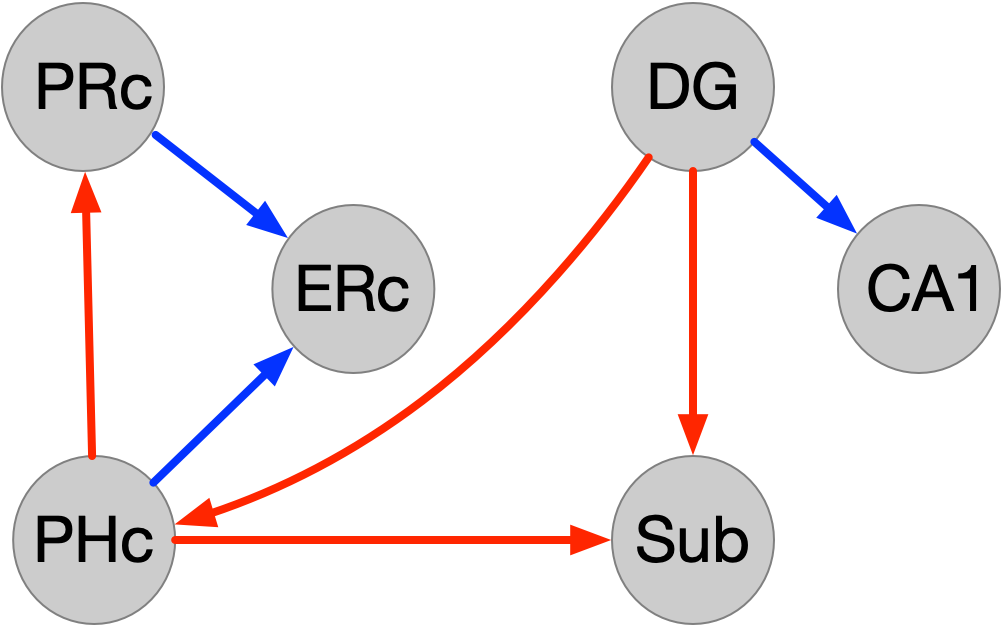}
    \hfill
    \includegraphics[width=0.32\linewidth]{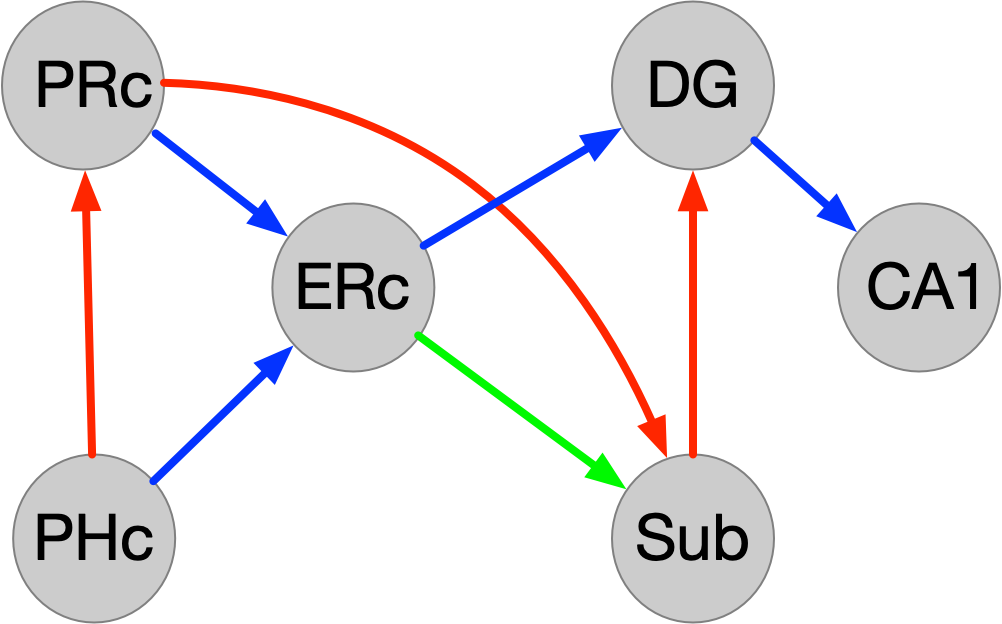}
    \hfill
    \includegraphics[width=0.32\linewidth]{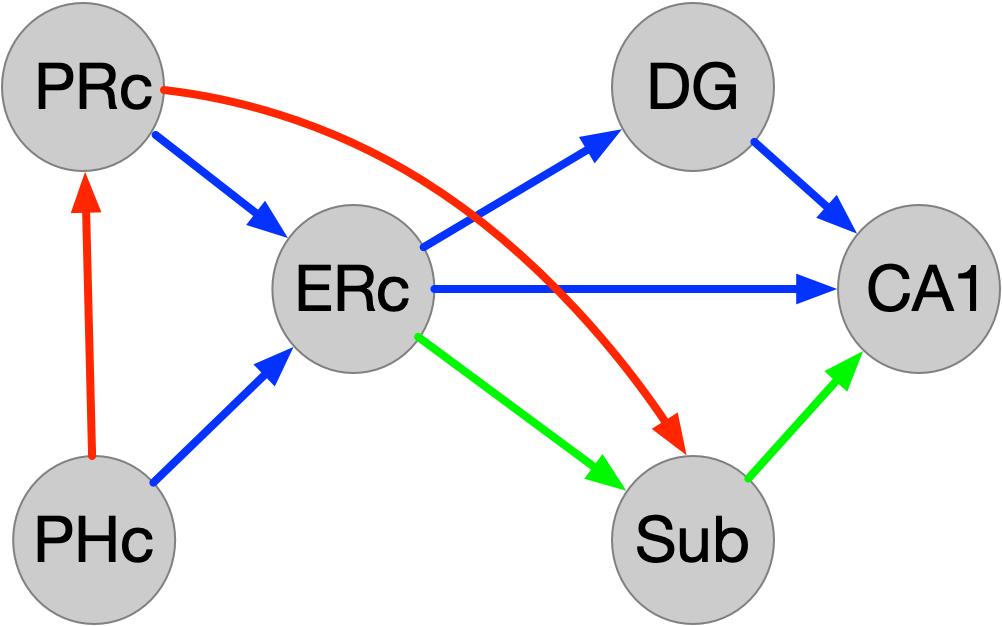}
    \vspace{2pt}
    
\begin{minipage}{0.98\linewidth}
        \centering
        \small (b1) Linear \hfill (b2) Polynomials \hfill (b3) MLPs
    \end{minipage}
  \end{minipage}
  
  \caption{\small
    (a) MPC scores achieved by different methods. Notably, the proposed MLPs achieve an outstanding average MPC of 0.981, outperforming polynomials (0.977) and linear models (0.965). 
    (b) Recovered latent causal structures using (b1) latent linear models, (b2) latent polynomials, and (b3) latent MLPs. Results for linear models and polynomials are sourced from \citep{liu2023identifiable}. 
    Blue edges align with known anatomical connectivity, red edges violate anatomical constraints, and green edges are reversed directions.
  }\vspace{-1em}
  \label{fig:fmri_combined}
\end{figure}

Figure \ref{fig:fmri_combined} presents the comparative results yielded by the proposed method alongside various other methods. Notably, the VAE, $\beta$-VAE, and iVAE models presume the independence of latent variables, rendering them incapable of discerning the underlying latent causal structure. 
Conversely, other methods, including latent linear models, latent polynomials, and latent MLPs, are able to accurately recover the latent causal structure with guarantees. {We also include CausalVAE \cite{yang2020causalvae} as a baseline.} Among these, the MLP models outperform the others in terms of MPC. In the study by \citet{liu2023identifiable}, it is noted that linear relationships among the examined signals tend to be more prominent than nonlinear ones. This observation might lead to the presumption that linear models would be effective. However, this is not necessarily the case, as these models can still yield suboptimal outcomes. In contrast, MLPs demonstrate superior performance in term of MPC, particularly when compared to polynomial models, which are prone to instability and exponential growth issues. The effectiveness of MLPs is further underscored by their impressive average MPC score of 0.981. It is important to emphasize that while the improvement in MPC over the proposed method (also achieving 0.981) may appear marginal, compared to prior methods such as linear models (MPC 0.965) and polynomial models (MPC 0.977), this seemingly "slight" gain in MPC corresponds to a substantial difference in the recovered graph structures, which is visually illustrated in Figure \ref{fig:fmri_combined} (b3). Moreover, {we found that CausalVAE consistently produces fully connected graphs across all different random seeds, resulting in an SHD of 9.0 ± 0.0.} MLP-based model achieves an SHD of 4.75 ± 0.22, outperforming the polynomial model (5.5 ± 0.25) and the linear model (5.0 ± 0.28). Here, we note that although the polynomial model may underperform the linear model on average, in this particular example its ability to capture non-linear relationships allows it to achieve a lower SHD.

\begin{figure}[h]
  \centering
\includegraphics[width=0.95\linewidth]{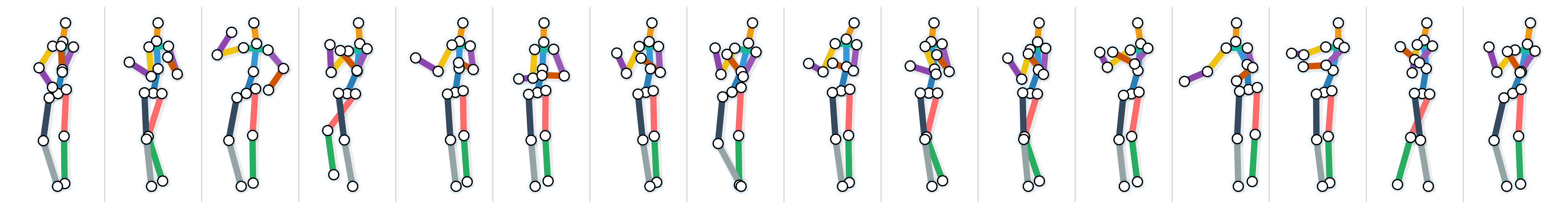}
  \caption{\small Some sample examples of the data we used.}
  \vspace{-2em}
  \label{fig: human}
\end{figure}

\paragraph{Real Human Motion Dataset} Lastly, we apply the proposed method to a potential real-world application: human motion analysis. The human nervous system adapts flexibly to different motor tasks (e.g., walking, running, or fine hand manipulation). In this setting, the observed data $\mathbf{x}$, i.e., human skeleton poses captured via motion capture, is influenced by different motor tasks $\mathbf{u}$ \citep{cappellini2006motor,yuan2015functional}. It is natural to consider that human motion is governed by a set of underlying latent variables (i.e., $\mathbf{z}$) \citep{schmidt2018motor,svoboda2018neural,taylor2006modeling,gallego2017neural}. Each $z_i$ capture patterns analogous to motor neuron activation dynamics \citep{gallego2017neural}. The interactions among these latent variables capture the coordination dynamics between control modules involved in executing the task. Crucially, previous studies demonstrate that these interactions are task-dependent and reconfigurable (i.e., $\mathrm{g}_i^{\mathbf{u}}$) \citep{doyon2005reorganization,bizzi2008combining, d2003combinations,rehme2013state}.

For example, compared to gait velocity in the single-task condition, the networks associated with gait velocity in the dual-task condition were associated with greater functional connectivity in supplementary motor and prefrontal regions \citep{yuan2015functional}. Dynamic causal modeling was applied for neuronal states of the regions of interest the motor task time series to estimate endogenous and context-dependent effective connectivity \citep{rehme2013state}. Therefore, modeling human motion with latent variables whose interaction strengths vary across motor tasks offers a biologically plausible framework for capturing the flexibility and task-specific nature of human motor control.

We use the Human3.6M dataset, which provides a diverse set of 17 motion tasks such as discussion, smoking, taking photos, and talking on the phone. Following pre-processing steps from prior works (see Appendix \ref{appendix: HUMANp}), we construct a filtered subset comprising 7 subjects and 15 motion tasks, resulting in a total of 105 distinct values of the condition variable $\mathbf{u}$. For each condition, we obtain 1,040 samples. Each sample is represented as a $2 \times 16$ matrix, where 2 denotes the spatial coordinates, and 16 corresponds to skeletal keypoints, such as joints of the head, shoulders, elbows, and knees. Figure \ref{fig: human} shows some samples we used in experiments after preprocessing. 

\begin{figure}
\centering
\includegraphics[width=0.9\linewidth]{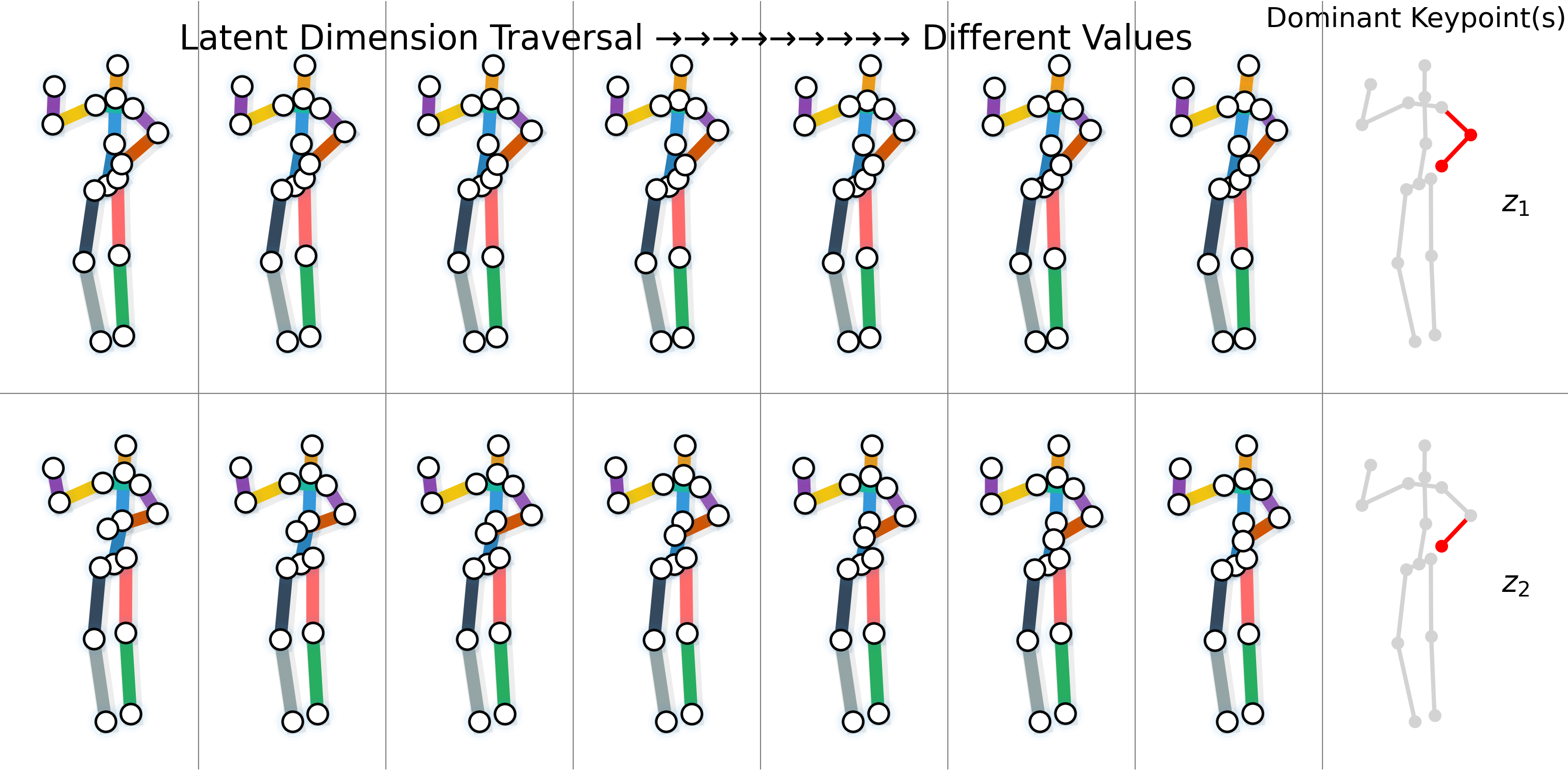}
  \caption{\small Intervention results on $z_1$ and $z_2$, which demonstrates the causal relationship from elbow to wrist in the right hand.}
  \label{fig: hminter1}
  \vspace{-2em}
\end{figure}

\begin{wrapfigure}{r}{0.25\textwidth}
  \vspace{-10pt}
\centering\includegraphics[width=\linewidth]{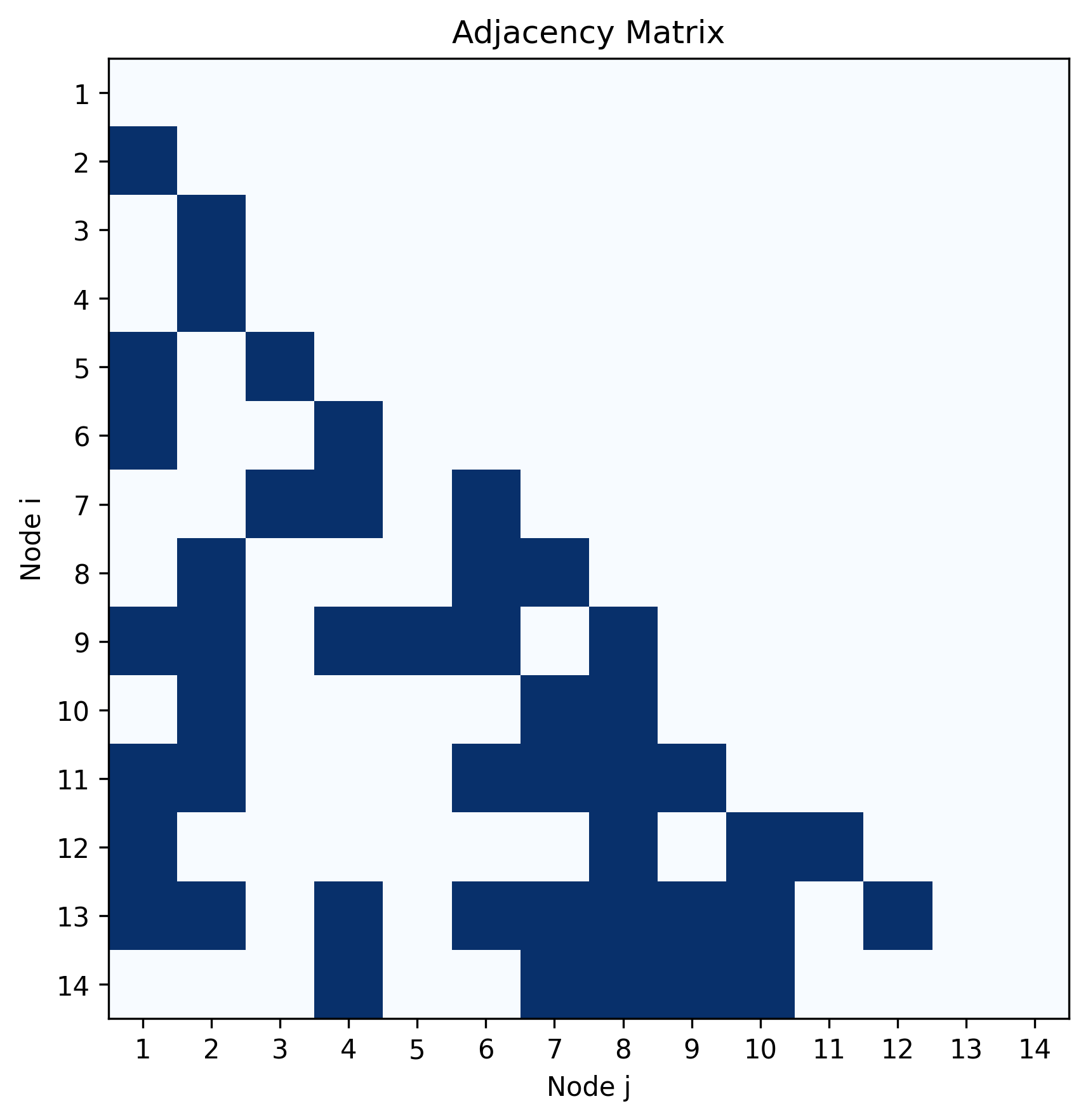}
  \caption{\small The estimated adjacency matrix by the proposed method.}
    \vspace{-10pt}
  \label{fig: aj}
\end{wrapfigure}Due to the absence of ground-truth semantics for latent variables, we evaluate the proposed method by intervening on each latent variable and observing the corresponding changes in the output data. This allows us to infer the potential semantic meaning associated with each latent factor. We emphasize that precisely disentangling latent variables remains a challenging task, even in synthetic settings. Therefore, our focus is on identifying dominant changes in the observed data. Together with the estimated adjacency matrix in Figure \ref{fig: aj}, we can observe from Figures \ref{fig: hminter1} and \ref{fig: hminter_all} that the proposed method obtains potential latent causal relations, including from the shoulder joint to the wrist joint, and from the elbow to the wrist joint. Such results tend to be plausible, as they are closely aligned with the dynamic model of intersegmental limb interactions. For example, it has been demonstrated that the nervous system predicts and compensates for interaction torques arising from shoulder and elbow movements to adjust wrist muscle activity reflexively, reflecting an internal model of limb dynamics \citep{kurtzer2008long}. In particular, they showed that elbow muscle activity precedes and modulates wrist muscle responses, indicating that the nervous system integrates information about elbow joint dynamics to coordinate distal muscle control effectively. See Appendix \ref{appendix: HUMAN} for details.

\begin{figure}[t]
  \centering
\includegraphics[width=0.9\linewidth]{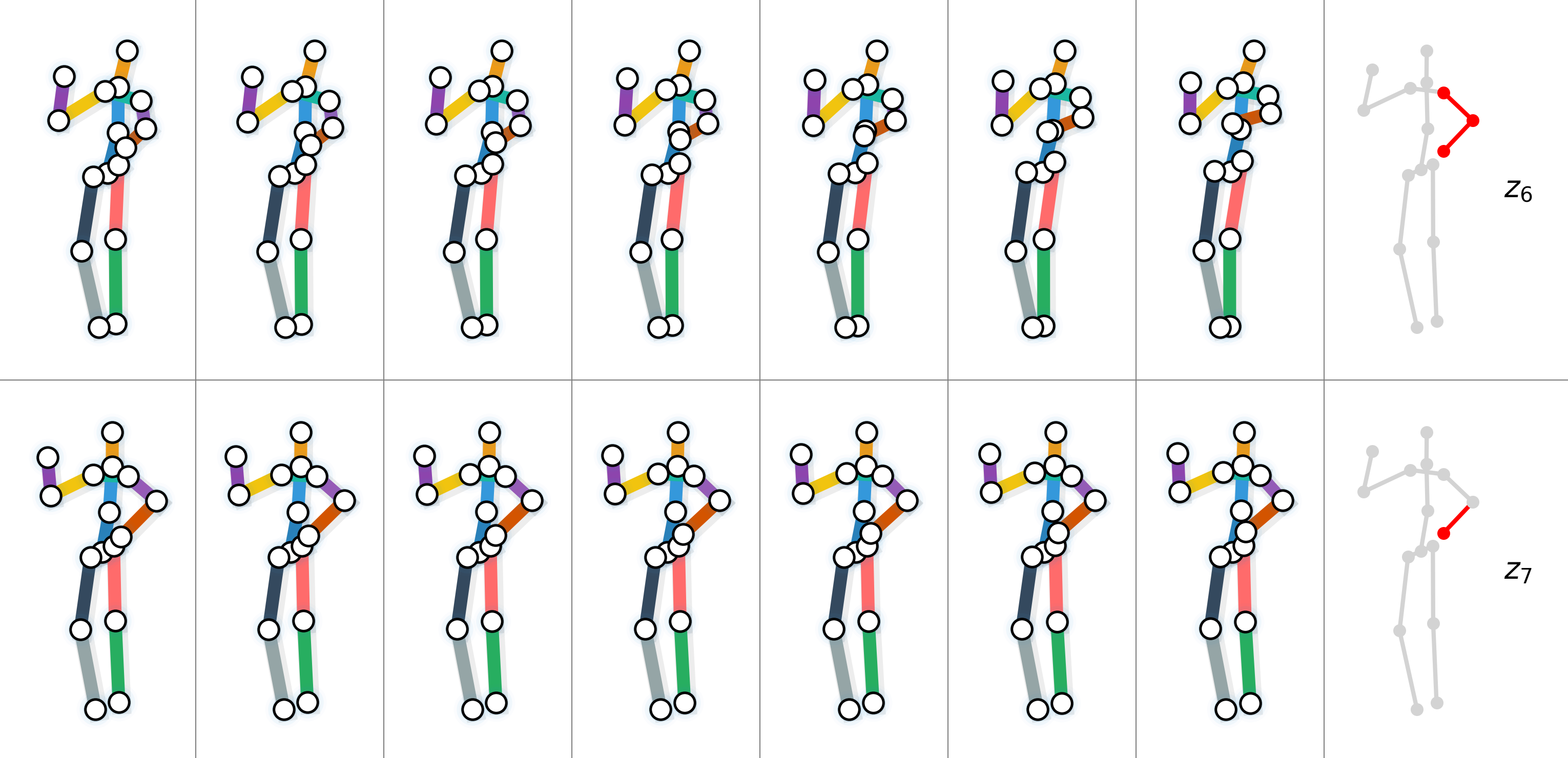}~\\
\centerline{\small (a) $z_6\rightarrow z_7$}
\includegraphics[width=0.9\linewidth]{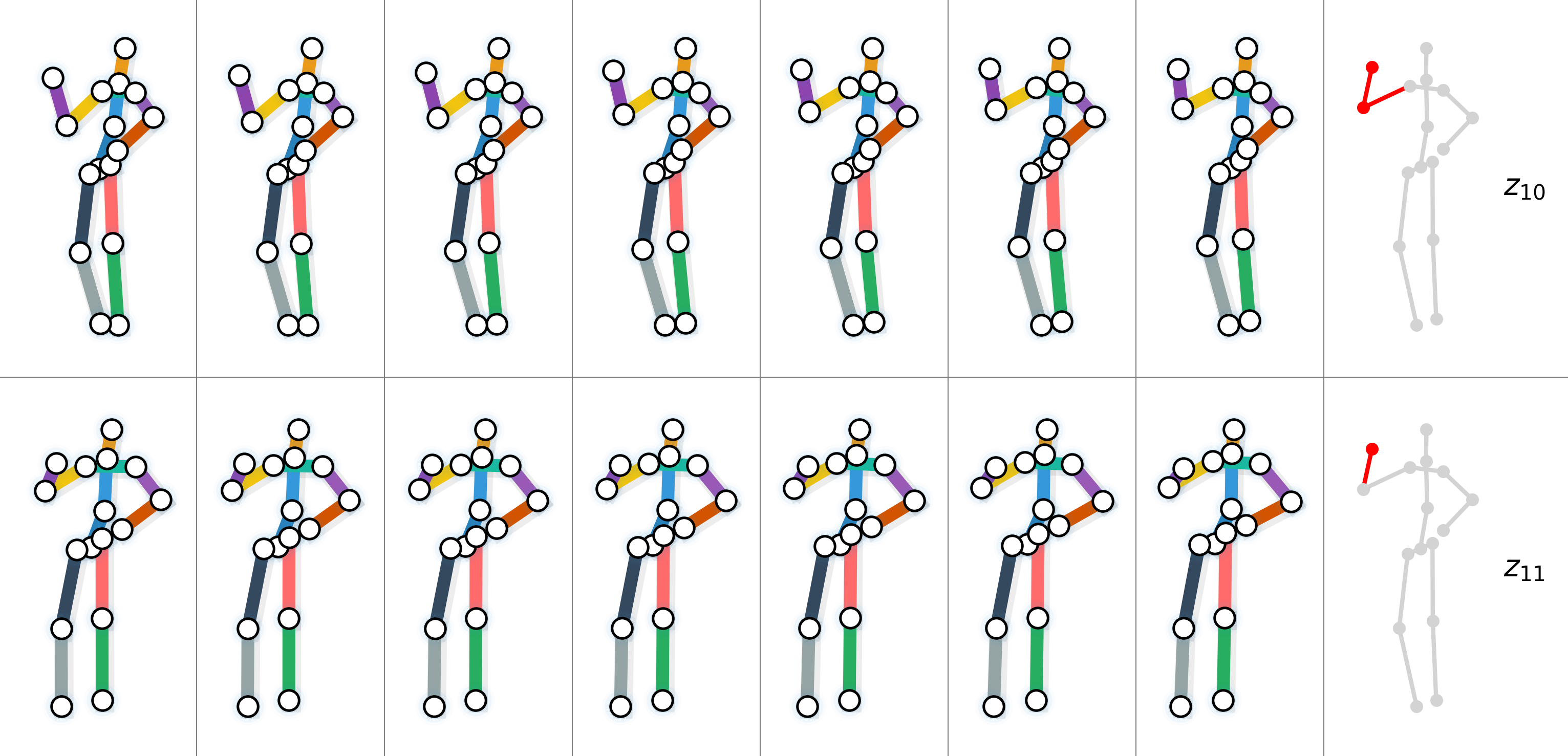}~\\
\centerline{\small (b) $z_{10}\rightarrow z_{11}$}
  \caption{\small Visualization of selected latent interventions. Intervention on $z_6$ leads to changes from the shoulder joint to the wrist joint, whereas intervention on $z_7$ affects only the wrist joint, indicating the causal relationship from the shoulder to the wrist. The right reveals the causal relationship from the elbow to the wrist. }
  \label{fig: hminter_all}
  \vspace{-2em}
\end{figure}
\section{Conclusion}
This study makes a significant contribution by establishing a precise condition for identifying the types of distribution shifts necessary for the identifiability of latent additive noise models. We also introduce partial identifiability, applicable in cases where only a subset of distribution shifts satisfies this condition. Furthermore, we extend the results to latent post-nonlinear causal models, thereby broadening the theoretical scope. These theoretical insights are translated into a practical method, and we conduct extensive empirical testing across a wide range of datasets. Importantly, we demonstrate a promising application in learning causal representations for human motion data. We hope that this work paves the way for the development of practical methods for learning causal representations.

\section*{Impact Statement}

This paper presents work whose goal is to advance the field of machine learning. There are many potential societal consequences of our work, none of which we feel must be specifically highlighted here.

\bibliography{icml2026}
\bibliographystyle{icml2026}

\clearpage                  
\onecolumn  
\appendix

\newpage
\appendix
\part{Appendix} 
\parttoc
\newpage


\section{Limitations and Discussions}
\label{sec: Limitations}
It should be noted that our framework relies on the Assumptions~\ref{itm:injective}-\ref{itm:lambda}, as well as the generative model defined in Eqs.~\eqref{eq:Generative:n}--\eqref{eq:Generative:x}, which are inherently untestable in practice. This limitation is not unique to our work but is a common challenge across the causal representation learning community. Precisely because of this, there is strong motivation to continue relaxing such assumptions and extending the scope of current theoretical results, which is what this work aims to do. In addition, empirical evaluations on real data provide evidence that our approach is effective, demonstrating that the insights derived from our theoretical analysis can meaningfully guide practical modeling, despite the untestable nature of the underlying assumptions.

\section{Related Work}
\label{sec: related}

Given the challenges associated with identifiability in causal representation learning, numerous existing works tackle this issue by introducing specific assumptions. We categorize these related works into three primary parts based on the nature of these assumptions.

{\textbf{Special graph structure}}~
Some progress in achieving identifiability centers around the imposition of specific graphical structure constraints \citep{silva2006learning,shimizu2009estimation,anandkumar2013learning,frot2019robust,NEURIPS2019_8c66bb19,xie2020generalized,xie2022identification,lachapelle2021disentanglement}. Essentially, these graph structure assumptions reduce the space of possible latent causal representations or structures, by imposing specific rules for how variables are connected in the graph. One popular special graph structure assumption is the presence of two pure children nodes for each causal variable \citep{xie2020generalized,xie2022identification, huang2022latent}. Very recently, the work in \citep{adams2021identification} provides a viewpoint of sparsity to understand previous various graph structure constraints.
However, any complex causal graph structures may appear in real-world scenarios, beyond the pure sparsity assumption. In contrast, our approach adopts a model-based representation for latent variables, allowing arbitrary underlying graph structures.

{\textbf{Temporal Information}}~ The temporal constraint that the effect cannot precede the cause has been applied in causal representation learning \citep{yao2021learning,lippe2022citris,yao2022learning,lippe2022causal,li2025on}. The success of utilizing temporal information to identify causal representations can be attributed to its innate ability to establish causal direction through time delay. By tracking the sequence of events over time, we gain the capacity to infer latent causal variables. In contrast to these approaches, our focus lies on discovering instantaneous causal relations among latent variables.

{\textbf{Changes in Causal Influences}}~ Recent advances have significantly developed the use of changes in causal influences within latent space as a means for identifying causal representations \citep{von2021self,liu2022identifying,liu2023identifiable,brehmer2022weakly,ahuja2023interventional,seigal2022linear,buchholz2023learning,varici2023score,von2024nonparametric,ahuja2022weakly,varici2024linear,varici2024general}. Several of these works leverage such changes in conjunction with model constraints on the mapping from latent to observed space, such as assuming linear or polynomial \citep{seigal2022linear,varici2023score, ahuja2023interventional,zhang2023identifiability,varici2024linear,jin2024learning}. In contrast, our approach allows for flexibility by employing MLPs for this mapping. In addition, some works focus on imposing model assumptions on the latent causal variables, such as linear Gaussian models \citep{buchholz2023learning, liu2022identifying}, linear additive noise models \citep{seigal2022linear, chen2024identifying}, and polynomial additive noise models \citep{liu2023identifiable}. In contrast, our work considers both additive noise models and the more general post-nonlinear models, thereby broadening the scope of identifiable causal structures. Furthermore, prior work often relies on paired data before and after random, unknown interventions \citep{ahuja2022weakly, brehmer2022weakly}, a requirement that has been relaxed in recent work \citep{varici2025score} to using data from two uncoupled intervention environments, whereas our method operates on fully unpaired data, a more realistic setting for applications such as biology \citep{seigal2022linear, btaa843}. Additionally, some approaches require single-node interventions \citep{von2024nonparametric, buchholz2023learning}, whereas our framework allows interventions on one node while other nodes may also be simultaneously affected. In contrast to \citet{ng2025causal}, our framework imposes exponential-family assumptions on the latent noise variables and achieves identifiability with fewer environments in certain settings, whereas theirs relies on nonparametric assumptions and typically requires a larger number of diverse environments.

\newpage

\section{Definition and Lemmas}

For ease of exposition in the following sections, we first introduce the following definition and lemmas.

{\begin{definition}\label{app: defi}[Mapping $\mathbf{h}^{\mathbf{u}}$ From $\mathbf{n}$ to $\mathbf{z}$]
Without loss of generality, since any directed acyclic causal graph admits a topological ordering,
we can fix an indexing of the latent causal variables \citep{seigal2022linear}, such that
\( z_1 \prec z_2 \prec \dots \prec z_\ell \). Together with additive noise model assumption in Eq.~\eqref{eq:Generative:z}, we have:
\begin{align}
    z_j = g^{\mathbf{u}}_j(z_{1:j-1}) + n_j, \qquad j = 1,\dots,\ell.
\end{align}
Then the mapping \( \mathbf{h}^{\mathbf{u}} : \mathbf{n} \mapsto \mathbf{z} \) induced by these equations is defined recursively by
\begin{align}
    h^{\mathbf{u}}_1(n_1) := n_1,
\end{align}
and, for each \( j \ge 2 \),
\begin{align}
    h^{\mathbf{u}}_j(n_{1:j})
    :=
    g^{\mathbf{u}}_j\!\left(
        h^{\mathbf{u}}_1(n_1),\,
        h^{\mathbf{u}}_2(n_{1:2}),\,
        \dots,\,
        h^{\mathbf{u}}_{j-1}(n_{1:j-1})
    \right)
    + n_j.
\end{align}

The overall mapping is
\begin{align}
    \mathbf{h}^{\mathbf{u}}(\mathbf{n})
    :=
    \big[
        h^{\mathbf{u}}_1(n_1),\;
        h^{\mathbf{u}}_2(n_{1:2}),\;
        \dots,\;
        h^{\mathbf{u}}_\ell(n_{1:\ell})
    \big].
\end{align}
\end{definition}}

\begin{lemma}\label{appendix: polyn}
    Let \(\mathbf{h}^{\mathbf{u}}\) denote the mapping from \(\mathbf{n}\) to \(\mathbf{z}\) as defined in Definition \ref{app: defi}. Then, \(\mathbf{h}^{\mathbf{u}}\) is invertible, and its Jacobian matrix is lower triangular with ones on the diagonal. Consequently,
\(|\det \mathbf{J}_{\mathbf{h}^{\mathbf{u}}}| = = 1\), and hence
$|\det \mathbf{J}_{(\mathbf{h}^{\mathbf{u}})^{-1}}| = 1$ .
\end{lemma}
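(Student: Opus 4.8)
The plan is to exploit the triangular structure of the recursion in Definition~\ref{app: defi} directly, establishing the Jacobian claim and invertibility by induction on the index $j$. First I would pin down the dependency pattern: by a straightforward induction, each $z_j = h^{\mathbf{u}}_j(n_{1:j})$ is a function of $n_1,\dots,n_j$ only. The base case $z_1 = n_1$ is immediate, and in the inductive step $h^{\mathbf{u}}_j$ is $g^{\mathbf{u}}_j$ composed with $h^{\mathbf{u}}_1,\dots,h^{\mathbf{u}}_{j-1}$ (each depending only on $n_{1:j-1}$ by the inductive hypothesis) plus the single new coordinate $n_j$. Consequently $\partial z_j/\partial n_k = 0$ whenever $k > j$, so the Jacobian $\mathbf{J}_{\mathbf{h}^{\mathbf{u}}}$ is lower triangular.

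Next I would compute the diagonal. Since $z_j = g^{\mathbf{u}}_j(z_{1:j-1}) + n_j$ by Eq.~\eqref{eq:Generative:z}, and $z_{1:j-1}$ does not depend on $n_j$ by the previous step, differentiating gives $\partial z_j/\partial n_j = 1$ for every $j$. A lower triangular matrix with unit diagonal has determinant equal to the product of its diagonal entries, namely $1$, so $|\det \mathbf{J}_{\mathbf{h}^{\mathbf{u}}}| = 1$. For invertibility I would avoid relying only on the local inverse function theorem (which yields local but not global bijectivity) and instead exhibit the global inverse explicitly via the same recursion: given $\mathbf{z}$, set $n_1 = z_1$ and, for $j \ge 2$, $n_j = z_j - g^{\mathbf{u}}_j(z_{1:j-1})$, where $z_{1:j-1}$ has already been matched to $n_{1:j-1}$. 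This recovers $\mathbf{n}$ from $\mathbf{z}$ as a well-defined smooth map, so $\mathbf{h}^{\mathbf{u}}$ is a diffeomorphism. Finally, since $\mathbf{J}_{(\mathbf{h}^{\mathbf{u}})^{-1}} = (\mathbf{J}_{\mathbf{h}^{\mathbf{u}}})^{-1}$ at corresponding points, its determinant is the reciprocal of $\det \mathbf{J}_{\mathbf{h}^{\mathbf{u}}} = 1$, giving $|\det \mathbf{J}_{(\mathbf{h}^{\mathbf{u}})^{-1}}| = 1$.

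There is no deep difficulty here; the only point requiring care is keeping the dependency bookkeeping correct in the induction, so that the chain rule through the nested compositions $g^{\mathbf{u}}_j \circ (h^{\mathbf{u}}_1,\dots,h^{\mathbf{u}}_{j-1})$ contributes nothing to $\partial z_j/\partial n_k$ for $k \ge j$ apart from the $+\,n_j$ term on the diagonal. Establishing the fact that $z_{1:j-1}$ is independent of $n_j$ \emph{before} computing entries makes both the off-diagonal vanishing and the unit diagonal automatic, which is why I would front-load that claim as a separate inductive lemma-step rather than differentiate the nested expression blindly.
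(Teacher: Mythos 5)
Your proposal is correct and follows essentially the same route as the paper's proof: both exploit the recursive definition of $\mathbf{h}^{\mathbf{u}}$ to obtain a lower-triangular Jacobian with unit diagonal and hence unit determinant. The one place you go further is worth noting: the paper merely asserts invertibility ``due to the structure of the additive noise model and the acyclicity of the DAG,'' whereas you make this rigorous by exhibiting the global inverse explicitly via $n_1 = z_1$ and $n_j = z_j - g^{\mathbf{u}}_j(z_{1:j-1})$, which is the cleaner and more complete justification.
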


\begin{proof}
The result follows directly from the structural form of the generative process. According to Eq.~\eqref{eq:Generative:z}, each variable \(z_i\) depends only on its parents and the corresponding noise variable \(n_i\). This allows us to recursively express each \(z_i\) in terms of its ancestral noise variables and \(n_i\).

According to the definition of $\mathbf{h}^{\mathbf{u}}$ in Definition \ref{app: defi}, we have:
\begin{align}
    &z_1 = \underbrace{n_1}_{h^{\mathbf{u}}_1(n_1)}, \\
    &z_2 = g^{\mathbf{u}}_2(z_1) + n_2 = \underbrace{g^{\mathbf{u}}_2(n_1) + n_2}_{h^{\mathbf{u}}_2(n_1, n_2)},  \\
    &z_3 = \underbrace{g^{\mathbf{u}}_3(n_1, g^{\mathbf{u}}_2(n_1) + n_2) + n_3}_{h^{\mathbf{u}}_3(n_1, n_2, n_3)}, \label{appendix:polyn2z}\\
    &\quad\vdots \nonumber
\end{align}

{Due to the structure of the additive noise model and the acyclicity of the underlying causal graph (DAG), the mapping \(\mathbf{h}^{\mathbf{u}}\) is invertible. Moreover, its Jacobian matrix is lower triangular with ones on the diagonal, which directly implies that \(|\det \mathbf{J}_{\mathbf{h}^{\mathbf{u}}}|  = 1\), thus
$|\det \mathbf{J}_{(\mathbf{h}^{\mathbf{u}})^{-1}}| = 1/\!|\det \mathbf{J}_{\mathbf{h}^{\mathbf{u}}}| = 1$..
}

\end{proof}

\begin{lemma} \label{appendix:partial0}
Under Assumption~\ref{itm:lambda} in Theorem~\ref{theory: anm}, consider the recursive mapping defined in Eq.~\eqref{appendix:polyn2z}. Let $n_{i'}$ denote the latent noise variable corresponding to a parent node $z_{i'} \in \mathrm{pa}_i$ of $z_i$, where $i' \textless i$. Then, the partial derivative of $h_i^{\mathbf{u}}$ with respect to $n_{i'}$ vanishes at $\mathbf{u} = \mathbf{u}_i$, i.e.,
\begin{align}
\frac{\partial h_i^{\mathbf{u} = \mathbf{u}_i}(n_1, \dots, n_i)}{\partial n_{i'}} = 0.
\end{align}
\end{lemma}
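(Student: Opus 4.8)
The plan is to compute the partial derivative $\partial h_i^{\mathbf{u}}/\partial n_{i'}$ directly by chain rule from the recursive definition in Definition~\ref{app: defi}, and then to evaluate at $\mathbf{u}=\mathbf{u}_i$, where Assumption~\ref{itm:lambda} forces the relevant gradient factors to vanish. First I would fix $i'<i$ and differentiate the recursion
\begin{align}
h_i^{\mathbf{u}}(n_{1:i}) = g_i^{\mathbf{u}}\!\left(h_1^{\mathbf{u}}(n_1),\dots,h_{i-1}^{\mathbf{u}}(n_{1:i-1})\right) + n_i
\end{align}
with respect to $n_{i'}$. Since $n_i$ does not depend on $n_{i'}$, the direct term drops out, and I obtain
\begin{align}
\frac{\partial h_i^{\mathbf{u}}}{\partial n_{i'}}
= \sum_{k=1}^{i-1} \frac{\partial g_i^{\mathbf{u}}}{\partial z_k}\cdot \frac{\partial h_k^{\mathbf{u}}}{\partial n_{i'}},
\end{align}
where $\partial g_i^{\mathbf{u}}/\partial z_k$ denotes the derivative of $g_i^{\mathbf{u}}$ with respect to its $k$-th argument, evaluated at the point $z_{1:i-1}=\big(h_1^{\mathbf{u}}(n_1),\dots,h_{i-1}^{\mathbf{u}}(n_{1:i-1})\big)$ produced by the recursion in Eq.~\eqref{appendix:polyn2z}.

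Next I would use the fact that $g_i^{\mathbf{u}}$ depends only on the parent set $\mathrm{pa}_i$, so that $\partial g_i^{\mathbf{u}}/\partial z_k = 0$ whenever $z_k \notin \mathrm{pa}_i$. This collapses the sum to the indices $k$ with $z_k \in \mathrm{pa}_i$. Finally, setting $\mathbf{u}=\mathbf{u}_i$ and invoking Assumption~\ref{itm:lambda}, every surviving coefficient $\partial g_i^{\mathbf{u}=\mathbf{u}_i}/\partial z_k$ with $z_k\in\mathrm{pa}_i$ vanishes, so each term in the sum is zero and the claim $\partial h_i^{\mathbf{u}=\mathbf{u}_i}/\partial n_{i'} = 0$ follows. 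The downstream factors $\partial h_k^{\mathbf{u}}/\partial n_{i'}$ need not be controlled at all: they are finite (by smoothness of $g_i$ and $\mathbf{f}$, Assumption~\ref{itm:injective}), and multiplying a finite quantity by a vanishing gate factor gives zero.

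The computation is essentially a single chain-rule expansion, so there is no deep obstacle; the only point requiring care is the bookkeeping of which factor carries the $\mathbf{u}$-dependence and the precise reading of Assumption~\ref{itm:lambda}. Specifically, I must use the interpretation that $\partial g_i^{\mathbf{u}=\mathbf{u}_i}/\partial z_j$ vanishes \emph{identically} in the parent arguments (not merely at a single point), so that it remains zero after composing with the recursively generated point $z_{1:i-1}=\mathbf{h}_{1:i-1}^{\mathbf{u}_i}(\mathbf{n})$. I would also remark that this argument in fact proves the stronger statement $\partial h_i^{\mathbf{u}=\mathbf{u}_i}/\partial n_{i'} = 0$ for \emph{every} $i'<i$, not only for parents, since the vanishing of the gate factors at $\mathbf{u}_i$ annihilates the whole sum regardless of the index $i'$; restricting to $z_{i'}\in\mathrm{pa}_i$ as in the statement is all that is needed for the subsequent identifiability argument.
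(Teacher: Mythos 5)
Your proof is correct and follows essentially the same route as the paper's: a single chain-rule expansion of the recursive mapping, collapsing the sum to the parent indices, and invoking Assumption~\ref{itm:lambda} at $\mathbf{u}=\mathbf{u}_i$ to annihilate every gate factor $\partial g_i^{\mathbf{u}=\mathbf{u}_i}/\partial z_j$. Your two added remarks---that the vanishing must be read as holding identically in the parent arguments so it survives composition with the recursively generated point $\mathbf{h}_{1:i-1}^{\mathbf{u}_i}(\mathbf{n})$, and that the argument actually gives $\partial h_i^{\mathbf{u}=\mathbf{u}_i}/\partial n_{i'}=0$ for every $i'<i$, not only for parents---are correct observations that the paper's proof leaves implicit.
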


\begin{proof}
From Eq.~\eqref{appendix:polyn2z}, the recursive mapping is
\begin{align}
h_i^{\mathbf{u}}(n_1, \dots, n_i) = g_i^{\mathbf{u}}(z_1, \dots, z_{i-1}) + n_i.
\end{align}

Applying the chain rule gives
\begin{align}
\frac{\partial h_i^{\mathbf{u}}}{\partial n_{i'}} = \sum_{z_j \in \mathrm{pa}_i} \frac{\partial g_i^{\mathbf{u}}}{\partial z_j} \cdot \frac{\partial z_j}{\partial n_{i'}}.
\end{align}

By Assumption~\ref{itm:lambda}, there exists a parameter $\mathbf{u}_i$ such that for all parent nodes $z_j \in \mathrm{pa}_i$,
\begin{align}
\frac{\partial g_i^{\mathbf{u} = \mathbf{u}_i}}{\partial z_j} = 0.
\end{align}

Therefore, each term in the sum above vanishes, yielding
\begin{align}
\frac{\partial h_i^{\mathbf{u} = \mathbf{u}_i}}{\partial n_{i'}} = \sum_{z_j \in \mathrm{pa}_i} 0 \cdot \frac{\partial z_j}{\partial n_{i'}} = 0.
\end{align}
This completes the proof.
\end{proof}

\begin{lemma}\label{appendix:polyn1}
The mapping between the latent causal variables $\mathbf{z}$ and the estimated latent variables $\hat{\mathbf{z}}$, obtained by matching the marginal data distribution $p(\mathbf{x} \mid \mathbf{u})$, does not depend on the environment variable $\mathbf{u}$.
\end{lemma}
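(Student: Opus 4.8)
The plan is to show that the latent-to-observation maps in the two models are both fixed across environments, so that inverting them yields an environment-independent change of variables between $\mathbf{z}$ and $\hat{\mathbf{z}}$. In the generative model of Eqs.~\eqref{eq:Generative:n}--\eqref{eq:Generative:x}, only the noise distribution $p(\mathbf{n}\mid\mathbf{u})$ and the mechanisms $\mathrm{g}_i^{\mathbf{u}}$ carry a dependence on $\mathbf{u}$; the mixing $\mathbf{f}$ itself is a single function shared by all $\mathbf{u}$. The estimated model is taken from the same class, so it posits a single decoder $\hat{\mathbf{f}}$ together with $\mathbf{u}$-dependent latent priors, and $\hat{\mathbf{z}}$ is recovered as $\hat{\mathbf{z}}=\hat{\mathbf{f}}^{-1}(\mathbf{x})$. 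Since both models reproduce the same $p(\mathbf{x}\mid\mathbf{u})$ for each of the environments in Assumption~\ref{itm:nu}, their observation supports coincide, and on this common support I would simply invert: $\mathbf{z}=\mathbf{f}^{-1}(\mathbf{x})$ and $\hat{\mathbf{z}}=\hat{\mathbf{f}}^{-1}(\mathbf{x})$ give $\mathbf{z}=(\mathbf{f}^{-1}\circ\hat{\mathbf{f}})(\hat{\mathbf{z}})=:\mathbf{v}(\hat{\mathbf{z}})$.

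It then remains to verify that $\mathbf{v}$ is invertible and free of $\mathbf{u}$. Invertibility is immediate from Assumption~\ref{itm:injective}: $\mathbf{f}$ is a smooth bijection and $\hat{\mathbf{f}}$ is learned to be invertible, so $\mathbf{v}=\mathbf{f}^{-1}\circ\hat{\mathbf{f}}$ is a composition of diffeomorphisms and hence itself a diffeomorphism. Independence of $\mathbf{u}$ is the key point: because neither $\mathbf{f}$ nor $\hat{\mathbf{f}}$ is indexed by $\mathbf{u}$, the right-hand side $\mathbf{f}^{-1}\circ\hat{\mathbf{f}}$ is literally the same map in every environment, so the relation $\mathbf{z}=\mathbf{v}(\hat{\mathbf{z}})$ holds with one fixed $\mathbf{v}$ throughout.

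The delicate step, and the one I expect to require the most care, is reconciling this conclusion with the $\mathbf{u}$-dependence that genuinely lives inside the latent space. Writing $\mathbf{z}=\mathbf{h}^{\mathbf{u}}(\mathbf{n})$ and $\hat{\mathbf{z}}=\hat{\mathbf{h}}^{\mathbf{u}}(\hat{\mathbf{n}})$ for the additive-noise maps of Definition~\ref{app: defi}, and recalling from the nonlinear-ICA step that $\mathbf{n}$ and $\hat{\mathbf{n}}$ are related by a single $\mathbf{u}$-independent permutation-and-scaling, one sees that $\mathbf{v}$ also equals $\hat{\mathbf{h}}^{\mathbf{u}}$ composed with that permutation-scaling and with $(\mathbf{h}^{\mathbf{u}})^{-1}$; a priori this composite looks genuinely $\mathbf{u}$-dependent. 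The composition argument above shows the apparent dependence must cancel, and I would make this rigorous by invoking Lemma~\ref{appendix: polyn} to guarantee that each $\mathbf{h}^{\mathbf{u}}$ (and its estimate) is invertible with unit-modulus Jacobian determinant, so that $\mathbf{f}\circ\mathbf{h}^{\mathbf{u}}$ is a valid bijection from $\mathbf{n}$ to $\mathbf{x}$ in every environment. The only genuine modelling commitment required is that the learner fits a single shared decoder $\hat{\mathbf{f}}$ rather than an environment-specific one, consistent with the $\mathbf{u}$-independent mixing in Eq.~\eqref{eq:Generative:x}; once that is granted, the environment-independence of $\mathbf{v}$ follows purely from the composition structure and needs no further distributional assumptions.
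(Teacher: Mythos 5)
Your proposal is correct and takes essentially the same route as the paper: both arguments write $\mathbf{z}$ and $\hat{\mathbf{z}}$ as images of the same $\mathbf{x}$ under the inverses of the two decoders, obtain the relation $\mathbf{z} = (\mathbf{f}^{-1}\circ\hat{\mathbf{f}})(\hat{\mathbf{z}})$, and conclude environment-independence because neither $\mathbf{f}$ nor $\hat{\mathbf{f}}$ is indexed by $\mathbf{u}$. Your additional reconciliation with the $\mathbf{u}$-dependent latent maps $\mathbf{h}^{\mathbf{u}}$, $\hat{\mathbf{h}}^{\mathbf{u}}$ is a sound consistency check but not needed for the lemma itself, which the paper proves exactly by your composition argument.
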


\begin{proof}
From Eq.~\eqref{eq:Generative:x}, the observed variable is generated as
\begin{align}
\mathbf{x} = \mathbf{f}(\mathbf{z}),
\end{align}
where $\mathbf{f}$ is smooth and invertible by assumption~\ref{itm:injective}.
Suppose there exists an alternative solution such that
\begin{align}
\mathbf{x} = \hat{\mathbf{f}}(\hat{\mathbf{z}}),
\end{align}
where $\hat{\mathbf{f}}$ is also invertible and $\hat{\mathbf{z}}$ is learned by matching the marginal distributions $p(\mathbf{x} \mid \mathbf{u})$.

Since both $\mathbf{f}$ and $\hat{\mathbf{f}}$ are invertible, we can write
\begin{align}
\hat{\mathbf{z}} = \hat{\mathbf{f}}^{-1} \circ \mathbf{f} (\mathbf{z}).
\end{align}
Because neither $\mathbf{f}$ nor $\hat{\mathbf{f}}$ depends on $\mathbf{u}$, the resulting mapping between $\mathbf{z}$ and $\hat{\mathbf{z}}$ is independent of $\mathbf{u}$.
\end{proof}

\paragraph{Illustrative Example.}
Consider the following structural equations:
\begin{align}
    z_1 &= n_1, \\
    z_2 &= \lambda(\mathbf{u}) \cdot z_1 + n_2, \\
    x_1 &= z_1, \\
    x_2 &= z_2^3 + z_1,
\end{align}
In this example, although the latent variable $z_2$ depends on the environment $\mathbf{u}$ through $\lambda(\mathbf{u})$, the observed variables $\mathbf{x}=(x_1,x_2)$ are functions of $\mathbf{z}=(z_1,z_2)$, and this generative mapping does not involve $\mathbf{u}$.

Now suppose the estimated latent variables $\hat{\mathbf{z}}=(\hat z_1,\hat z_2)$ from $\mathbf{x}$. Since $x_1=z_1$, we have $\hat z_1 = x_1$. Moreover, from $x_2 = z_2^3 + z_1 = z_2^3 + x_1$, we can uniquely solve for $z_2$ as
\begin{align}
    \hat z_2 = (x_2 - x_1)^{1/3}.
\end{align}
Therefore, the inverse mapping
\begin{align}
    \hat{\mathbf{z}} = \bigl(x_1,\,(x_2 - x_1)^{1/3}\bigr),
\end{align}
is independent of $\lambda(\mathbf{u})$, even though $\lambda(\mathbf{u})$ affects the distribution of the latent variables.
Since the forward mapping from $\mathbf{z}$ to $\mathbf{x}$ is also independent of $\mathbf{u}$, it follows that the induced mapping between $\mathbf{z}$ and $\hat{\mathbf{z}}$ does not depend on $\mathbf{u}$.

\newpage

\section{The Proof of Theorem \ref{theory: anm}}
\label{appendix:thoery}
\renewcommand{\thetheorem}{\arabic{theorem}.1}
\setcounter{theorem}{2} 
\begin{theorem} 
Suppose latent causal variables $\mathbf{z}$ and the observed variable $\mathbf{x}$ follow the causal generative models defined in Eqs.~\eqref{eq:Generative:n} - \eqref{eq:Generative:x}. Assume the following holds:
\begin{itemize}
\item [\namedlabel{itm:injective:app}{(ii)}]  The function $\mathbf{f}$ in Eq.~\eqref{eq:Generative:x} is smooth and invertible, 
\item [\namedlabel{itm:nu:app} {(iii)}] There exist $2\ell+1$ values of $\mathbf{u}$, i.e.,  $\mathbf{u}_{0},\mathbf{u}_{1},...,\mathbf{u}_{2\ell}$, such
that the matrix
\begin{align}
    \mathbf{L} = (&\boldsymbol{\eta}(\mathbf{u} = \mathbf{u}_1)-\boldsymbol{\eta}(\mathbf{u} = \mathbf{u}_0),..., \boldsymbol{\eta}(\mathbf{u} = \mathbf{u}_{2\ell})-\boldsymbol{\eta}(\mathbf{u}= \mathbf{u}_0))
\end{align}
of size $2\ell \times 2\ell$ is invertible. Here  $\boldsymbol{\eta}(\mathbf{u}) = {[\eta_{i,j}(\mathbf{u})]_{i,j}}$, 
\item [\namedlabel{itm:lambda:app} {(iv)}]
The function class of $\mathrm{g}_i^{\mathbf{u}}$ satisfies the following condition: there exists $\mathbf{u}_{i}$, such that, for all parent nodes $z_j$ of $z_i$, $\frac{\partial \mathrm{g}_i^{\mathbf{u}=\mathbf{u}_{i}}(\mathrm{pa}_i)}{\partial z_{j}} =0$.
\end{itemize}
Then each true latent variable \( z_i \) is linearly related to exactly one estimated latent variable \( \hat{z}_j \), as $z_i = s_j \hat{z}_j + c_i$,
for some constants \( s_j \) and \( c_i \), where all \( \hat{z}_j \) are learned by matching the true marginal data distribution \( p(\mathbf{x} \mid \mathbf{u}) \).
\label{theory: anm:app}
\end{theorem}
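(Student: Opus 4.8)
The plan is to proceed in three stages, mirroring the proof sketch, but with attention to where the technical work actually lives. \textbf{Stage I: identify the latent noise variables.} First I would establish that the composed generative map $\mathbf{f} \circ \mathbf{h}^{\mathbf{u}} : \mathbf{n} \mapsto \mathbf{x}$ is smooth and invertible. Invertibility of $\mathbf{f}$ is Assumption~\ref{itm:injective:app}, and invertibility of $\mathbf{h}^{\mathbf{u}}$ (together with unit Jacobian determinant) is given by Lemma~\ref{appendix: polyn}; composition preserves invertibility. Since $\mathbf{n} \mid \mathbf{u}$ follows the two-parameter exponential family in Eq.~\eqref{eq:Generative:n}, and the sufficiency/invertibility condition of Assumption~\ref{itm:nu:app} supplies the required $2\ell+1$ distinct values of $\mathbf{u}$ with an invertible difference matrix $\mathbf{L}$, I can invoke the nonlinear ICA identifiability result of \citet{sorrenson2020disentanglement} (as adapted by \citet{khemakhem2020variational}). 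This yields that the estimated latent variables, after matching $p(\mathbf{x}\mid\mathbf{u})$, recover the true noise variables $\mathbf{n}$ up to a permutation and a component-wise scaling (plus offset). Concretely, if $\hat{\mathbf{n}}$ denotes the noise variables of the estimated model, there is a permutation $P$ and diagonal scaling so that $\hat n_{\sigma(i)} = a_i n_i + b_i$.

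\textbf{Stage II: transfer to a $\mathbf{u}$-independent invertible map on $\mathbf{z}$.} Next I would show the true and estimated latent causal variables are related by an invertible map that does not depend on $\mathbf{u}$. This is exactly Lemma~\ref{appendix:polyn1}: since $\mathbf{x} = \mathbf{f}(\mathbf{z}) = \hat{\mathbf{f}}(\hat{\mathbf{z}})$ and neither $\mathbf{f}$ nor $\hat{\mathbf{f}}$ depends on $\mathbf{u}$, the map $\hat{\mathbf{z}} = \hat{\mathbf{f}}^{-1}\circ\mathbf{f}(\mathbf{z})$ is $\mathbf{u}$-independent. The subtlety is to reconcile this $\mathbf{u}$-independence with the Stage I conclusion, which is naturally phrased in terms of the noise variables. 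Writing $\mathbf{z} = \mathbf{h}^{\mathbf{u}}(\mathbf{n})$ and $\hat{\mathbf{z}} = \hat{\mathbf{h}}^{\mathbf{u}}(\hat{\mathbf{n}})$, I would combine the two relations to get a relation between $\mathbf{n}$ and $\hat{\mathbf{n}}$ that must hold at every $\mathbf{u}$; the scaling-plus-permutation form from Stage I pins down how the noise variables correspond, and since this correspondence is forced to be the same across all $\mathbf{u}$ (because the $\mathbf{z}$-to-$\hat{\mathbf{z}}$ map is $\mathbf{u}$-free), the permutation $\sigma$ and constants $a_i, b_i$ are genuinely constant in $\mathbf{u}$.

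\textbf{Stage III: collapse the invertible map to permutation-and-scaling using Assumption~\ref{itm:lambda:app}.} This is the heart of the argument and the step I expect to be the main obstacle. Abstractly, after Stages I--II I have an invertible, $\mathbf{u}$-independent map $T := \hat{\mathbf{f}}^{-1}\circ\mathbf{f}$ with $\hat{\mathbf{z}} = T(\mathbf{z})$, and I know the noise variables are matched up to permutation/scaling. I want to upgrade this to: each $z_i$ is an affine function of a single $\hat z_j$. The idea is to exploit Assumption~\ref{itm:lambda:app} via its noise-space reformulation in Lemma~\ref{appendix:partial0}: at the special value $\mathbf{u}=\mathbf{u}_i$, the dependence of $z_i = h^{\mathbf{u}_i}_i(n_{1:i})$ on every ancestral noise $n_{i'}$ with $z_{i'}\in\mathrm{pa}_i$ vanishes, so in that environment $z_i$ depends only on $n_i$ (it collapses to a scaling of $n_i$, as illustrated in Example~\ref{exam:implicit_target}). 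Following \textbf{Step III} of the sketch, I would argue as follows: pick the estimated component $\hat z_j$ that corresponds to $n_i$ under the Stage~I matching; at $\mathbf{u}=\mathbf{u}_i$ both $z_i$ and $\hat z_j$ reduce to affine functions of $n_i$ alone, which anchors the identity $z_i \leftrightarrow \hat z_j$ and forces the relevant block of $T$ to be affine and diagonal in that coordinate. Because $T$ is $\mathbf{u}$-independent (Stage II), the affine relation $z_i = s_j\hat z_j + c_i$ established at $\mathbf{u}_i$ must persist at all $\mathbf{u}$. The technical care needed here is twofold: first, to show that the matching of $z_i$ to a single $\hat z_j$ (rather than a mixture) follows from the Jacobian of $T$ being, in the appropriate sense, a generalized permutation — this uses that the noise Jacobians are generalized permutations and that $\mathbf{h}^{\mathbf{u}}$, $\hat{\mathbf{h}}^{\mathbf{u}}$ have unit-triangular Jacobians (Lemma~\ref{appendix: polyn}); and second, to verify that the anchoring at distinct $\mathbf{u}_i$ for different $i$ is mutually consistent, which is where the $\mathbf{u}$-independence of $T$ does the essential work. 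Discharging this consistency, and ruling out residual cross-terms so that $T$ is genuinely coordinate-wise affine rather than merely block-triangular, is the delicate part of the proof.
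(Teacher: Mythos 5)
Your proposal is correct and follows essentially the same route as the paper's proof: Stage I matches the paper's Step I (nonlinear ICA identification of $\mathbf{n}$ up to permutation and scaling, enabled by the unit-determinant triangular Jacobian of $\mathbf{h}^{\mathbf{u}}$ from Lemma~\ref{appendix: polyn}), Stage II is the paper's Step II via Lemma~\ref{appendix:polyn1}, and Stage III is the paper's Step III, where the anchoring argument at $\mathbf{u}_i$ (via Lemma~\ref{appendix:partial0}) and the triangular-Jacobian comparison force $\mathbf{J}_{\boldsymbol{\Phi}}$ to be diagonal. The "delicate part" you flag — ruling out residual cross-terms — is exactly what the paper discharges by element-wise comparison of $\mathbf{J}_{\mathbf{h}^{\mathbf{u}}}\mathbf{P} = \mathbf{J}_{\boldsymbol{\Phi}}\mathbf{J}_{\hat{\mathbf{h}}^{\mathbf{u}}}$ together with the contradiction argument showing $\partial \hat h^{\mathbf{u}_i}_i/\partial \hat n_j = 0$.
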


\begin{proof}
The proof of Theorem \ref{theory: anm} unfolds in three distinct steps. Initially, Step I shows how the nonlinear ICA identifiability result \citep{sorrenson2020disentanglement,khemakhem2020variational} holds in the context of latent causal generative models defined in Eqs.~\eqref{eq:Generative:n} - \eqref{eq:Generative:x}. Specifically, it confirms that each true latent noise variables $\mathbf{n}$ are related to the estimated latent variables $\mathbf{\hat n}$, e.g., $\mathbf{n}=\mathbf{P}\mathbf{\hat n}+ \mathbf{c}$, for a permutation and scaling matrix $\mathbf{P}$ and constant $\mathbf{c}$. Building on this, Step II demonstrates a linkage between the estimated latent causal variables $\mathbf{\hat z}$ and the true $\mathbf{z}$, formulated as $\mathbf{z}=\mathbf{\Phi}(\mathbf{\hat z})$, where $\mathbf{\Phi}$ is invertible and, more importantly, do not depend on $\mathbf{u}$ (Lemma~\ref{appendix:polyn1}).  Finally, Step III utilizes Lemma \ref{appendix:partial0} to illustrate that the transformation $\mathbf{\Phi}$ essentially reduces to a permutation and scaling one, articulated as $\mathbf{z}=\mathbf{P}\mathbf{\hat z}+\mathbf{c}$.

Before proceeding with the proof, we introduce the following notation.

\paragraph{Notation} We consider two sets of model parameters:
\begin{align}
\boldsymbol{\theta} = (\mathbf{f}, \mathbf{T}, \mathbf{h}^{\mathbf{u}}, \boldsymbol{\eta})
\quad \text{and} \quad
\boldsymbol{\hat{\theta}} = (\hat{\mathbf{f}}, \hat{\mathbf{T}}, \hat{\mathbf{h}}^{\mathbf{u}}, \hat{\boldsymbol{\eta}}),
\end{align}
where $\boldsymbol{\theta}$ corresponds to the true generative model and $\boldsymbol{\hat{\theta}}$ to an estimated model. Recall that $\mathbf{T}$ denotes the sufficient statistics of the latent noise variables $\mathbf{n}$, and $\boldsymbol{\eta}$ their corresponding natural parameters, as defined in Eq.~\eqref{eq:Generative:n}.
The function $\mathbf{f}$ maps latent causal variables $\mathbf{z}$ to observations $\mathbf{x}$, as in Eq.~\eqref{eq:Generative:x}. The mapping $\mathbf{h}^{\mathbf{u}}$ (see Definition~\ref{app: defi}) specifies the transformation from latent noise variables $\mathbf{n}$ to latent causal variables $\mathbf{z}$, i.e., $\mathbf{z} = \mathbf{h}^{\mathbf{u}}(\mathbf{n})$.

Both parameter sets are assumed to induce the same conditional data distribution, i.e.,
\begin{align}
p_{(\mathbf{f}, \mathbf{T}, \mathbf{h}^{\mathbf{u}}, \boldsymbol{\eta})}(\mathbf{x}\mid\mathbf{u})
=
p_{(\hat{\mathbf{f}}, \hat{\mathbf{T}}, \hat{\mathbf{h}}^{\mathbf{u}}, \hat{\boldsymbol{\eta}})}(\mathbf{x}\mid\mathbf{u}),
\quad \forall\, (\mathbf{x}, \mathbf{u}).
\label{app: match}
\end{align}
The quantities $\hat{\mathbf{f}}, \hat{\mathbf{T}}, \hat{\mathbf{h}}^{\mathbf{u}}, \hat{\boldsymbol{\eta}}$ denote the corresponding components of the estimated model.

{\bf{Step I:}} Since Eq.~\eqref{app: match} is assumed to hold in the limit of infinite data,
expanding both sides using the change of variables formula and taking logarithms yields:
\begin{align}
\log {|\det \mathbf{J}_{\mathbf{f}^{-1}}(\mathbf{x})|} +   
\log {|\det \mathbf{J}_{(\mathbf{h}^\mathbf{u})^{-1}}(\mathbf{z})|} + 
\log p_{\mathbf{(\mathbf{T},\boldsymbol{ \eta})}}({\mathbf{n}|\mathbf u}) =\log {| \det \mathbf{J}_{(\mathbf{\hat f} \circ \mathbf{\hat h}^{\mathbf{u}})^{-1}}(\mathbf{x})|} + \log p_{\mathbf{(\mathbf{\hat T},\boldsymbol{ \hat \eta})}}({\mathbf{\hat n}|\mathbf u}).\label{proofn}
\end{align}
Expanding it by using the exponential family as defined in Eq.~\eqref{eq:Generative:n}, we have:
\begin{align}
&\log {|\det \mathbf{J}_{\mathbf{f}^{-1}}(\mathbf{x})|} +  
\log {|\det \mathbf{J}_{(\mathbf{h}^{\mathbf{u}})^{-1}}(\mathbf{z})|} + \mathbf{T}^{T}(\mathbf{n})\boldsymbol{\eta}(\mathbf u) -\log \prod \limits_{i} {Z_i(\mathbf{u})}
= 
\log {|\det\mathbf{J}_{(\mathbf{\hat f} \circ \mathbf{\hat h}^{\mathbf{u}})^{-1}}(\mathbf{x})|} + \mathbf{\hat T}^{T}(\mathbf{\hat n})\boldsymbol{ \hat \eta}(\mathbf u) -\log \prod \limits_{i} {\hat Z_i(\mathbf{u})}.\label{proofn12}
\end{align}
By Lemma~\ref{appendix: polyn}, the mapping $\mathbf{h}^{\mathbf{u}}$ is invertible and its Jacobian determinant satisfies
$|\det \mathbf{J}_{(\mathbf{h}^{\mathbf{u}})^{-1}}| = 1$ for all $\mathbf{u}$.
Since $\hat{\mathbf{h}}^{\mathbf{u}}$ belongs to the same function class as $\mathbf{h}^{\mathbf{u}}$, the same property holds for the estimated model, i.e.,
$|\det \mathbf{J}_{(\hat{\mathbf{h}}^{\mathbf{u}})^{-1}}| = 1$. Using $(\mathbf{\hat f} \circ \mathbf{\hat h}^{\mathbf u})^{-1} = (\mathbf{\hat h}^{\mathbf u})^{-1}\circ \mathbf{\hat f}^{-1}$ and the chain rule for Jacobians,
the term $\log|\det \mathbf{J}_{(\mathbf{\hat f}\circ \mathbf{\hat h}^{\mathbf u})^{-1}}(\mathbf{x})|$ decomposes as
$\log|\mathbf{\det J}_{(\mathbf{\hat h}^{\mathbf u})^{-1}}| + \log|\mathbf{\det J}_{\mathbf{\hat f}^{-1}}(\mathbf{x})|$,
and the first term vanishes since $|\det \mathbf{J}_{(\mathbf{\hat h}^{\mathbf u})^{-1}}|=1$.
 Given the above, Eq.~\eqref{proofn12} can be reduced to:
\begin{align}
&\log {|\det \mathbf{J}_{\mathbf{f}^{-1}}(\mathbf{x})|} +  \mathbf{T}^{T}(\mathbf{n})\boldsymbol{\eta}(\mathbf u) -\log \prod \limits_{i} {Z_i(\mathbf{u})}
=  
\log {|\det\mathbf{J}_{\mathbf{\hat f}^{-1}}(\mathbf{x})|} + \mathbf{\hat T}^{T}(\mathbf{\hat n})\boldsymbol{ \hat \eta}(\mathbf u) -\log \prod \limits_{i} {\hat Z_i(\mathbf{u})}.\label{proofn22}
\end{align}
Then by using Eq.~\eqref{proofn22} at point $\mathbf{u}_l$ subtract Eq.~\eqref{proofn22} at point $\mathbf{u}_0$, we find:
\begin{equation}
\langle \mathbf{T}{(\mathbf{n})},\boldsymbol{\bar \eta }(\mathbf{u}_l)  \rangle + \sum_i\log \frac{Z_i(\mathbf{u}_0)}{Z_i(\mathbf{u}_l)} = \langle \mathbf{\hat T}{(\mathbf{\hat n})},\boldsymbol{\bar{\hat \eta}}(\mathbf{u}_l)  \rangle + \sum_i\log \frac{\hat Z_i(\mathbf{u}_0)}{\hat Z_i(\mathbf{u}_l)}.
\label{equT}
\end{equation}
Here $\boldsymbol{\bar \eta}(\mathbf{u}_l) = \boldsymbol{\eta}(\mathbf{u}_l) - \boldsymbol{ \eta}(\mathbf{u}_0)$, similarly, $\boldsymbol{\bar {\hat \eta}}(\mathbf{u}_l) = \boldsymbol{{\hat \eta}}(\mathbf{u}_l) - \boldsymbol{{\hat \eta}}(\mathbf{u}_0)$. By assumption~\ref{itm:nu:app}, there exist $2\ell$ distinct environments $\{\mathbf{u}_l\}_{l=1}^{2\ell}$,
and hence $2\ell$ equations of the form Eq.~\eqref{equT}.
Stacking these equations yields the following matrix form:
\begin{equation}
\mathbf{L}^{T}\mathbf{T}{(\mathbf{n})}  =\mathbf{\hat L}^{T} \mathbf{\hat T}{(\mathbf{\hat n})} + \mathbf{b},
\end{equation}
where $\mathbf{L}$ and $\hat{\mathbf{L}}$ are the matrices whose $l$-th rows are
$\boldsymbol{\bar \eta}(\mathbf{u}_l)^\top$ and
$\boldsymbol{\bar{\hat \eta}}(\mathbf{u}_l)^\top$, respectively, and
$\mathbf{b} \in \mathbb{R}^{2\ell}$ is a constant vector whose $l$-th entry is
$b_l := \sum_i \log \frac{Z_i(\mathbf{u}_l)\hat Z_i(\mathbf{u}_0)}{Z_i(\mathbf{u}_0) \hat Z_i(\mathbf{u}_l)}.$ Since $\mathbf{L}^{\top}$ is invertible by assumption~\ref{itm:nu:app},
we can left-multiply both sides by $(\mathbf{L}^{\top})^{-1}$ to obtain
\begin{equation}
\mathbf{T}(\mathbf{n})
=
\mathbf{A}\,\hat{\mathbf{T}}(\hat{\mathbf{n}}) + \mathbf{c},
\label{eqAn}
\end{equation}
where $\mathbf{A} := (\mathbf{L}^{\top})^{-1}\hat{\mathbf{L}}^{\top}$ and
$\mathbf{c} := (\mathbf{L}^{\top})^{-1}\mathbf{b}$.

Since we assume the latent noise variables to be two-parameter exponential family members as defined in Eq.~\eqref{eq:Generative:n}, Eq.~\eqref{eqAn} can be re-expressed as:
\begin{equation}
\left(                
  \begin{array}{c} 
    \mathbf{T}_1(\mathbf{n}) \\  
    \mathbf{T}_2(\mathbf{n}) \\ 
  \end{array}
\right)= \mathbf{A} \left(    
  \begin{array}{c} 
     \mathbf{\hat T}_1(\mathbf{\hat n}) \\  
    \mathbf{\hat T}_2(\mathbf{\hat n})  \\ 
  \end{array}
\right) + \mathbf{c},
\label{appendix:contr}
\end{equation}
Then, we re-express $\mathbf{T}_2$ in term of $\mathbf{T}_1$, e.g., ${T}_2(n_i) = t({T}_1(n_i))$ where $t$ is a nonlinear mapping. As a result, we have from Eq.~\eqref{appendix:contr} that: (a) $T_1(n_i)$ can be linear combination of $\mathbf{\hat T}_1(\mathbf{\hat n})$ and $\mathbf{\hat T}_2(\mathbf{\hat n})$, and 
(b) $t({T}_1(n_i))$ can also be linear combination of $\mathbf{\hat T}_1(\mathbf{\hat n})$ and $\mathbf{\hat T}_2(\mathbf{\hat n})$. This implies the contradiction that both $T_1(n_i)$ and its nonlinear transformation $t({T}_1(n_i))$ can be expressed by linear combination of $\mathbf{\hat T}_1(\mathbf{\hat n})$ and $\mathbf{\hat T}_2(\mathbf{\hat n})$. This contradiction leads to that
each true latent noise variable $n_i$
is related to exactly one estimated latent variable $\hat n_j$ (See APPENDIX C in \citet{sorrenson2020disentanglement} for more details), as:
\begin{align}  
\label{appendix:npermuta}
\mathbf{n} = \mathbf{P}\mathbf{\hat n} + \mathbf{c},
\end{align}
where $\mathbf{P}$ is a permutation with scaling matrix. Note that this result holds for two-parameter Gaussian, inverse Gaussian, Beta, Gamma, and Inverse Gamma (See Table 1 in \citet{sorrenson2020disentanglement}).

\paragraph{Step II} By Lemma \ref{appendix: polyn}, the true and estimated latent causal variables
can be expressed as:
\begin{align}    
&\mathbf{z}= \mathbf{h}^{\mathbf{u}}(\mathbf{n}), \label{appendix:zlinear}\\
&\mathbf{\hat z}=  \mathbf{\hat h}^{\mathbf{u}}(\mathbf{\hat n}), \label{appendix:hatzlinear}
\end{align}
Substituting \eqref{appendix:zlinear} and \eqref{appendix:hatzlinear} into
Eq.~\eqref{appendix:npermuta}, we obtain, we have:
\begin{align}    
({\mathbf{h}^\mathbf{u}})^{-1}(\mathbf{z})= \mathbf{P}({\mathbf{\hat h}^\mathbf{u}})^{-1}(\mathbf{\hat z})+\mathbf{c}, \label{appendix:hatzlinearp}
\end{align}
where $\mathbf{h}$ (as well as $\mathbf{\hat h}$) are invertible supported by Lemma \ref{appendix: polyn}. We then can left-multiply both sides by ${\mathbf{h}^\mathbf{u}}$:
\begin{align}   
\mathbf{z}= \mathbf{h}^\mathbf{u}(\mathbf{P}({\mathbf{\hat h}^\mathbf{u}})^{-1}(\mathbf{\hat z})+\mathbf{c}).
\end{align}
Denote the composition by $\mathbf{\Phi}$, we have:
\begin{align}    \label{appendix:polyzhatz}
\mathbf{z}= \mathbf{\Phi}(\mathbf{\hat z}). 
\end{align}
By Lemma~\ref{appendix:polyn1}, the mapping between $\mathbf{z}$ and $\hat{\mathbf{z}}$ is independent of $\mathbf{u}$. Therefore, the transformation $\boldsymbol{\Phi}$ in
\eqref{appendix:polyzhatz} does not depend on the environment~$\mathbf{u}$.

\paragraph{Step III} Next, replacing $\mathbf{z}$ and $\mathbf{\hat z}$ in Eq.~\eqref{appendix:polyzhatz} by Eqs.~\eqref{appendix:npermuta}-\eqref{appendix:hatzlinear}:

\begin{align}\label{appendix:polyhatn}
\mathbf{h}^{\mathbf{u}}(\mathbf{P \hat n} + \mathbf{c})=\mathbf{\Phi}(\mathbf{\hat h}^{\mathbf{u}}(\mathbf{\hat n}))
\end{align}

Differentiating both sides of \eqref{appendix:polyhatn}
with respect to $\hat{\mathbf{n}}$ yields:
\begin{equation}\label{appendix:polygn}
\mathbf{J}_{\mathbf{h}^{\mathbf{u}}}\mathbf{P}=\mathbf{J}_{\mathbf{\Phi}} \mathbf{J}_{\mathbf{\hat h}^{\mathbf{u}}},
\end{equation}
where $\mathbf{J}_{(\cdot)}$ denotes the Jacobian matrix of the corresponding mapping.

Next, we show that the Jacobian $\mathbf{J}_{\mathbf{\Phi}}$ must reduce to the same diagonal form as $\mathbf{P}$. For convenience, we resolve the permutation ambiguity by relabeling the latent variables. Since the indices of the latent variables carry no semantic meaning, any permutation matrix
can be absorbed into a reindexing of the latent coordinates.
Without loss of generality, we therefore assume $\mathbf P$ to be diagonal, with
$\mathbf P = \operatorname{diag}(s_{1,1}, \ldots, s_{\ell,\ell})$. Further, as shown in Lemma~\ref{appendix: polyn}, the Jacobian matrix $\mathbf{J}_{\mathbf{h}^{\mathbf{u}}}$ is lower triangular with ones on its diagonal. In this context, since $\hat{\mathbf{h}}^{\mathbf{u}}$ is assumed to belong to the same function class as $\mathbf{h}^{\mathbf{u}}$, its Jacobian $\mathbf{J}_{\hat{\mathbf{h}}^{\mathbf{u}}}$ must satisfy the same lower-triangular structural constraint.

Given the fact that the equality of two matrices implies element-wise equality, we the following results.

\paragraph{Elements above the diagonal of $\mathbf{J}_{\boldsymbol{\Phi}}$.}
Since the product of lower-triangular matrices is itself lower triangular, and
$\mathbf{J}_{\mathbf{h}^{\mathbf{u}}}$ and $\mathbf{J}_{\hat{\mathbf{h}}^{\mathbf{u}}}$
are lower-triangular matrices while $\mathbf{P}$ is diagonal, it follows that
$\mathbf{J}_{\boldsymbol{\Phi}}$ must also be lower triangular.

Next, expanding the left-hand side of Eq.~\eqref{appendix:polygn}, we obtain:
\begin{equation}      
\mathbf{J}_{\mathbf{h}^{\mathbf{u}}}\mathbf{P} = \left(    
  \begin{array}{cccc} 
    s_{1,1}&0&0&\cdots\\  
    s_{1,1}\frac{\partial {h^{\mathbf{u}}_2(n_1,n_2)}}{\partial n_{1}}&s_{2,2}&0&\cdots\\ 
    s_{1,1}\frac{\partial {h^{\mathbf{u}}_3(n_1,n_2,n_3)}}{\partial n_{1}}&s_{2,2}\frac{\partial {h^{\mathbf{u}}_3(n_1,n_2,n_3)}}{\partial n_{2}}&s_{3,3}&\cdots\\ 
    \vdots&\vdots&\vdots&\ddots\\
  \end{array}
\right).\label{bpexpand1}
\end{equation}

By expanding the right-hand side of Eq.~\eqref{appendix:polygn}, we obtain
\begin{equation}
\mathbf{J}_{\boldsymbol{\Phi}}\,\mathbf{J}_{\hat{\mathbf h}^{\mathbf{u}}}
=
\left(
\begin{array}{cccc}
J_{\boldsymbol{\Phi}_{1,1}} & 0 & 0 & \cdots \\[2pt]
J_{\boldsymbol{\Phi}_{2,1}} + J_{\boldsymbol{\Phi}_{2,2}}\frac{\partial \hat h^{\mathbf u}_2(\hat n_1,\hat n_2)}{\partial \hat n_1}
& J_{\boldsymbol{\Phi}_{2,2}} & 0 & \cdots \\[2pt]
J_{\boldsymbol{\Phi}_{3,1}} + \sum_{i=2}^{3} J_{\boldsymbol{\Phi}_{3,i}}\frac{\partial \hat h^{\mathbf u}_i(\hat n_1,\ldots,\hat n_i)}{\partial \hat n_1}
&
J_{\boldsymbol{\Phi}_{3,2}} + J_{\boldsymbol{\Phi}_{3,3}}\frac{\partial \hat h^{\mathbf u}_3(\hat n_1,\ldots,\hat n_3)}{\partial \hat n_2}
& J_{\boldsymbol{\Phi}_{3,3}} & \cdots \\
\vdots & \vdots & \vdots & \ddots
\end{array}
\right),
\label{bpexpand2}
\end{equation}

\paragraph{Diagonal entries of $\mathbf{J}_{\boldsymbol{\Phi}}$.}
By comparing the diagonal entries of Eq.~\eqref{bpexpand1} and Eq.~\eqref{bpexpand2}, we obtain
$J_{\boldsymbol{\Phi}_{i,i}} = s_{i,i}, \quad \forall\, i.$

\paragraph{Elements below the diagonal of $\mathbf{J}_{\boldsymbol{\Phi}}$.}
We next show that all strictly lower-triangular entries of $\mathbf{J}_{\boldsymbol{\Phi}}$ vanish.
By comparing Eq.~\eqref{bpexpand1} and Eq.~\eqref{bpexpand2}, together with Lemma~\ref{appendix:partial0},
we obtain that for all $i>j$, $J_{\boldsymbol{\Phi}_{i,j}} = 0$. To see this concretely, consider the $(3,2)$-th entry. From Eq.~\eqref{bpexpand1}, we have
\begin{equation}\label{app: matchj}
s_{2,2}\,\frac{\partial h^{\mathbf{u}}_3(n_1,n_2,n_3)}{\partial n_2}=J_{\boldsymbol{\Phi}_{3,2}} + J_{\boldsymbol{\Phi}_{3,3}}
\frac{\partial \hat h^{\mathbf{u}}_3(\hat n_1,\ldots,\hat n_3)}{\partial \hat n_2}.
\end{equation}
By Lemma~\ref{appendix:partial0}, there exists an environment $\mathbf{u}_3$ in which $\frac{\partial h^{\mathbf{u}_3}_3}{\partial n_2} = 0$, thus the left in Eq.~\eqref{app: matchj} is 0.

Under assumption~\ref{itm:lambda:app}, in environment $\mathbf{u}_3$ the latent causal variable
$z_3$ has no parent contribution and thus reduces to its own latent noise,
i.e., $z_3 = n_3$ under the additive noise model.
By the nonlinear ICA identifiability result established in Step~I, we have
$\mathbf{n}=\mathbf{P}\hat{\mathbf{n}}+\mathbf{c}$, hence in particular
$n_3 = s_{3,3}\hat n_3 + c_3$ (after absorbing the permutation into relabeling).
Combining this with $z_3=n_3$ in environment $\mathbf{u}_3$ yields that
\textbf{$z_3$ (and hence any valid representation matching $p(\mathbf{x}\mid\mathbf{u}_3)$ through an invertible decoder) can depend on exactly one independent source coordinate $\hat n_3$ up to an affine transformation.}

Suppose, toward a contradiction, that $\hat z_3$ (i.e., $\hat h^{\mathbf{u}_3}_3$) depends on $\hat n_2$,
i.e., $\frac{\partial \hat h^{\mathbf{u}_3}_3(\hat n_1,\ldots,\hat n_3)}{\partial \hat n_2}\neq 0$.
\textbf{Since $\boldsymbol{\Phi}$ in Eq.~\eqref{appendix:polyzhatz} is an invertible reparametrization between $\hat{\mathbf{z}}$ and $\mathbf{z}$ (Lemma~\ref{appendix:polyn1}), this would induce a nontrivial dependence of $z_3$ on $\hat n_2$ in environment $\mathbf{u}_3$,}
which contradicts the fact that $z_3=n_3=s_{3,3}\hat n_3+c_3$ depends on $\hat n_3$ alone.
Therefore, $\hat h^{\mathbf{u}_3}_3$ must satisfy
$\frac{\partial \hat h^{\mathbf{u}_3}_3}{\partial \hat n_2} = 0$.
Substituting this into Eq.~\eqref{app: matchj}, and noting that the left-hand side of Eq.~\eqref{app: matchj} is zero as discussed above, we obtain $J_{\boldsymbol{\Phi}_{3,2}} = 0$.

The same argument applies to any pair $(i,j)$ with $i>j$, since assumption~\eqref{itm:lambda:app}
guarantees the existence of an environment $\mathbf{u}_i$ in which
$\partial h^{\mathbf{u}_i}_i / \partial n_j = 0$, and the nonlinear ICA identifiability
argument used above depends only on this local parent-removal property.
Therefore, all strictly lower-triangular entries of $\mathbf{J}_{\boldsymbol{\Phi}}$
must vanish.

As a result, the Jacobian matrix $\mathbf{J}_{\boldsymbol{\Phi}}$ in
Eq.~\eqref{appendix:polygn} must be equal to the diagonal matrix
$\mathbf{P}$. This implies that $\boldsymbol{\Phi}$ has a constant Jacobian,
and hence $\boldsymbol{\Phi}$ is an component-wise affine transformation. Therefore, the
relation in Eq.~\eqref{appendix:polyzhatz} reduces to
\begin{align} \label{appendix:theory_end}
\mathbf{z} = \mathbf{P}\,\hat{\mathbf{z}} + \mathbf{c}',
\end{align}
\end{proof}

\newpage
\section{The Proof of Theorem \ref{theory: partial} }\label{appendix:partial}

\renewcommand{\thetheorem}{\arabic{theorem}.4}
\setcounter{theorem}{2} 

\begin{theorem}
Suppose latent causal variables $\mathbf{z}$ and the observed variable $\mathbf{x}$ follow the causal generative models defined in Eqs.~\eqref{eq:Generative:n} - \eqref{eq:Generative:x}, under the condition that the assumptions \ref{itm:injective:app}-\ref{itm:nu:app} are satisfied, for each $z_i$,
 \begin{itemize}
     \item [\namedlabel{theory: partial:a:app} {(a)}] if it is a root node or condition \ref{itm:lambda} is satisfied, then the true $z_i$ is related to the recovered one $\hat z_j$, obtained by matching the true marginal data distribution $p(\mathbf{x}|\mathbf{u})$, by the following relationship: $z_i=s_j\hat z_j + c_j$, where $s_j$ denotes scaling, $c_j$ denotes a constant,
     \item [\namedlabel{theory: partial:b:app} {(b)}] if condition \ref{itm:lambda} is not satisfied, then $z_i$ is unidentifiable.
 \end{itemize}
  \label{theory: partial:app}
\end{theorem}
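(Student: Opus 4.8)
The plan is to recycle Steps~I and~II from the proof of Theorem~\ref{theory: anm:app}, neither of which invokes assumption~\ref{itm:lambda:app}, and then settle \ref{theory: partial:a:app} and \ref{theory: partial:b:app} node by node. Concretely, assumptions~\ref{itm:injective:app}--\ref{itm:nu:app} alone yield the nonlinear-ICA conclusion $\mathbf{n}=\mathbf{P}\hat{\mathbf{n}}+\mathbf{c}$ (Step~I), and the additive-noise structure together with Lemma~\ref{appendix:polyn1} gives an invertible, $\mathbf{u}$-independent reparametrization $\mathbf{z}=\boldsymbol{\Phi}(\hat{\mathbf{z}})$ (Step~II). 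Differentiating the identity $\mathbf{h}^{\mathbf{u}}(\mathbf{P}\hat{\mathbf{n}}+\mathbf{c})=\boldsymbol{\Phi}(\hat{\mathbf{h}}^{\mathbf{u}}(\hat{\mathbf{n}}))$ with respect to $\hat{\mathbf{n}}$ produces $\mathbf{J}_{\mathbf{h}^{\mathbf{u}}}\mathbf{P}=\mathbf{J}_{\boldsymbol{\Phi}}\mathbf{J}_{\hat{\mathbf{h}}^{\mathbf{u}}}$. Since $\mathbf{J}_{\mathbf{h}^{\mathbf{u}}}$ and $\mathbf{J}_{\hat{\mathbf{h}}^{\mathbf{u}}}$ are lower triangular with unit diagonal (Lemma~\ref{appendix: polyn}) and $\mathbf{P}$ is diagonal after relabeling, $\mathbf{J}_{\boldsymbol{\Phi}}$ is lower triangular with diagonal entries $J_{\boldsymbol{\Phi}_{i,i}}=s_{i,i}$. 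All of this holds unconditionally; what assumption~\ref{itm:lambda:app} adds is the vanishing of the strictly lower-triangular entries, and I establish this only for the qualifying rows.

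For part~\ref{theory: partial:a:app}, fix an index $i$ that is either a root node or satisfies condition~\ref{itm:lambda:app}. If $z_i$ is a root node then $h_i^{\mathbf{u}}(n_{1:i})=n_i$ for every $\mathbf{u}$; otherwise Lemma~\ref{appendix:partial0} supplies an environment $\mathbf{u}_i$ with $\partial h_i^{\mathbf{u}_i}/\partial n_{j}=0$ for all $j<i$, so in that environment $z_i=n_i=s_{i,i}\hat n_i+c_i$ depends on the single source coordinate $\hat n_i$. Exactly as in Step~III, invertibility of $\boldsymbol{\Phi}$ forces $\hat h_i^{\mathbf{u}_i}$ to depend on $\hat n_i$ alone, i.e.\ $\partial \hat h_i^{\mathbf{u}_i}/\partial\hat n_j=0$ for $j<i$; feeding this back into row $i$ of $\mathbf{J}_{\mathbf{h}^{\mathbf{u}}}\mathbf{P}=\mathbf{J}_{\boldsymbol{\Phi}}\mathbf{J}_{\hat{\mathbf{h}}^{\mathbf{u}}}$ at $\mathbf{u}=\mathbf{u}_i$ collapses the row to $J_{\boldsymbol{\Phi}_{i,j}}=0$ for all $j<i$. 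Because $\boldsymbol{\Phi}$ is $\mathbf{u}$-independent this vanishing persists over all environments, the $i$-th row of $\mathbf{J}_{\boldsymbol{\Phi}}$ equals $s_{i,i}\mathbf{e}_i^{\top}$, and integrating the constant-Jacobian row gives $z_i=s_{i,i}\hat z_i+c_i$. The point I would stress is that this argument is entirely \emph{local} to node $i$: it uses only the parent-removal environment $\mathbf{u}_i$ and the triangularity of the two Jacobians, and never requires the rows indexed by the parents of $z_i$ to collapse. This is precisely the content of Remark~\ref{remark1} and is what makes $z_i$ identifiable even when its parents are not.

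For part~\ref{theory: partial:b:app}, I would prove non-identifiability by the explicit absorption construction of Example~\ref{exam: 1}. When~\ref{itm:lambda:app} fails for $z_i$ in that manner, the parent map carries a $\mathbf{u}$-invariant, parent-dependent component; writing $\mathrm{g}_i^{\mathbf{u}}(\mathrm{pa}_i)=\mathrm{g}_{i,\mathrm{var}}^{\mathbf{u}}(\mathrm{pa}_i)+\mathrm{g}_{i,\mathrm{inv}}(\mathrm{pa}_i)$, define an alternative latent $z_i':=z_i-\mathrm{g}_{i,\mathrm{inv}}(\mathrm{pa}_i)=\mathrm{g}_{i,\mathrm{var}}^{\mathbf{u}}(\mathrm{pa}_i)+n_i$ and leave every other coordinate unchanged. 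The map $\mathbf{f}_1:\mathbf{z}'\mapsto\mathbf{z}$ that adds $\mathrm{g}_{i,\mathrm{inv}}$ back into the $i$-th coordinate (using only the lower-indexed, unchanged parent coordinates) is triangular and hence invertible, so $\mathbf{f}':=\mathbf{f}\circ\mathbf{f}_1$ is smooth and invertible, and $(\mathbf{z}',\mathbf{f}')$ is a bona fide model in the same class: the noise $\mathbf{n}$, and hence Eq.~\eqref{eq:Generative:n} with assumption~\ref{itm:nu:app}, are untouched, the graph is preserved, and $z_i'$ obeys a valid additive-noise equation. Since $\mathbf{x}=\mathbf{f}(\mathbf{z})=\mathbf{f}'(\mathbf{z}')$, the two models induce identical $p(\mathbf{x}\mid\mathbf{u})$, yet $z_i-z_i'=\mathrm{g}_{i,\mathrm{inv}}(\mathrm{pa}_i)$ is a non-constant function of the parents that depends on $\mathrm{pa}_i$ but not on $n_i$, so it cannot equal any affine $s\,z_i+c$; therefore $z_i$ is not identifiable up to scaling and shift.

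I expect part~\ref{theory: partial:b:app} to be the main obstacle: one must verify that the perturbed tuple $(\mathbf{z}',\mathbf{f}')$ genuinely stays inside the admissible model class (invertible mixing, unchanged exponential-family noise, DAG-respecting structural equations) and that the residual discrepancy $\mathrm{g}_{i,\mathrm{inv}}(\mathrm{pa}_i)$ is provably non-affine in $z_i$, which is where the separation of $z_i'$-dependence on $n_i$ from the parent-only dependence of $\mathrm{g}_{i,\mathrm{inv}}$ does the real work. By contrast, the only delicate point in part~\ref{theory: partial:a:app} is justifying the locality claim---that collapsing row $i$ of $\mathbf{J}_{\boldsymbol{\Phi}}$ requires nothing about the parent rows---which follows from the triangular bookkeeping established in the first paragraph.
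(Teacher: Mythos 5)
Your proposal is correct, and its skeleton matches the paper's: recycle Steps~I and~II of the proof of Theorem~\ref{theory: anm:app} (which never invoke assumption~\ref{itm:lambda:app}), then settle each node by a case analysis. For part~\ref{theory: partial:a:app} your argument is essentially identical to the paper's: it applies Lemma~\ref{appendix:partial0} together with the entrywise comparison of Eqs.~\eqref{bpexpand1} and~\eqref{bpexpand2} to kill the strictly lower-triangular entries of row $i$ of $\mathbf{J}_{\boldsymbol{\Phi}}$, and your emphasis on locality (no condition needed on the parents' rows) is exactly the content of Remark~\ref{remark1}. Where you genuinely diverge is part~\ref{theory: partial:b:app}. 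The paper's appendix proves non-identifiability by pointing at the Jacobian identity $s_{1,1}\,\partial h^{\mathbf{u}}_2/\partial n_1 = J_{\boldsymbol{\Phi}_{2,1}} + J_{\boldsymbol{\Phi}_{2,2}}\,\partial \hat h^{\mathbf{u}}_2/\partial \hat n_1$ and observing that, once Lemma~\ref{appendix:partial0} is unavailable, nothing forces $J_{\boldsymbol{\Phi}_{2,1}}=0$, so the distribution-matching constraint leaves that entry free; the existence of an actual alternative solution is left implicit. You instead build the alternative solution explicitly: absorb the $\mathbf{u}$-invariant parent-dependent component into the mixing function via a triangular, invertible $\mathbf{f}_1$, check that $(\mathbf{z}',\mathbf{f}\circ\mathbf{f}_1)$ stays in the admissible class, and verify the residual $\mathrm{g}_{i,\mathrm{inv}}(\mathrm{pa}_i)$ cannot be affine in $z_i$ because it is independent of $n_i$. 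This is precisely the construction the paper sketches in the main text around Example~\ref{exam: 1} but does not carry out in the appendix, so your version is the more rigorous of the two. The one step you should make explicit is that \emph{any} failure of assumption~\ref{itm:lambda:app} yields such a decomposition, not just failures ``in the manner of'' Example~\ref{exam: 1}: fix an arbitrary reference environment $\mathbf{u}_0$ and set $\mathrm{g}_{i,\mathrm{inv}} := \mathrm{g}_i^{\mathbf{u}_0}$, $\mathrm{g}_{i,\mathrm{var}}^{\mathbf{u}} := \mathrm{g}_i^{\mathbf{u}} - \mathrm{g}_i^{\mathbf{u}_0}$; the negation of~\ref{itm:lambda:app} guarantees $\mathrm{g}_i^{\mathbf{u}_0}$ depends non-trivially on some parent, which is all your non-affineness argument needs.
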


\begin{proof}
Since the proof process in Steps I and II in Appendix \ref{appendix:thoery} do not depend on the assumption \ref{itm:lambda}, the results in both Eq.~\eqref{bpexpand1} and Eq.~\eqref{bpexpand2} hold. Then consider the following two cases. 
\begin{itemize}
    \item In cases assumption \ref{itm:lambda} holds true for $z_i$, by using Lemma \ref{appendix:partial0}, and by comparison between Eq.~\eqref{bpexpand1} and Eq.~\eqref{bpexpand2}, we have: for all $j<i$ we have ${J}_{{\mathbf{\Phi}}_{i,j}}=0$, which implies that we can obtain that $z_i=s_{i,i}\hat z_i + c_i$.
    \item In cases where assumption \ref{itm:lambda} does not hold for $z_i$, such as when we compare Eq.~\eqref{bpexpand1} with Eq.~\eqref{bpexpand2}, we are unable to conclude that the $i$-th row of the Jacobian matrix $\mathbf{J}_{\mathbf{\Phi}}$ contains only one element. For example, consider $i=2$, and by comparing Eq.~\eqref{bpexpand1} with Eq.~\eqref{bpexpand2}, we can derive the following equation: $s_{1,1}\frac{\partial {h^{\mathbf{u}}_2(n_1,n_2)}}{\partial n_{1}} = J_{{\mathbf{\Phi}}_{2,1}} + {J}_{{\mathbf{\Phi}}_{2,2}}\frac{\partial {\hat h^{\mathbf{u}}_2(\hat n_1,\hat n_2)}}{\partial \hat n_{1}}$. In this case, if assumption \ref{itm:lambda} does not hold (thus Lemma~\ref{appendix:partial0} does not hold too), then once $J_{{\mathbf{\Phi}}_{2,1}} = s_{1,1}\frac{\partial {h^{\mathbf{u}}_2(n_1,n_2)}}{\partial n_{1}} - {J}_{{\mathbf{\Phi}}_{2,2}}\frac{\partial {\hat h^{\mathbf{u}}_2(\hat n_1,\hat n_2)}}{\partial \hat n_{1}}$ holds true, we can match the true marginal data distribution $p(\mathbf{x}|\mathbf{u})$. This implies that $J_{{\mathbf{\Phi}}_{2,1}}$ does not necessarily need to be zero, and thus can be nonzero. Consequently, $z_2$ can be represented as a combination of $\hat{z}_1$ and $\hat{z}_2$, resulting in unidentifiability. Note that this unidentifiability result also show that the necessity of condition \ref{itm:lambda} for achieving complete identifiability, by the contrapositive, i.e., if $z_i$ is identifiable, then condition \ref{itm:lambda} is satisfied.
\end{itemize}
\end{proof}
\section{The Proof of Corollary \ref{corollary: pnm} }\label{appendix:pnm:partial}
\renewcommand{\thetheorem}{\arabic{theorem}.8}
\setcounter{theorem}{2} 
\begin{corollary} 
Suppose latent causal variables $\mathbf{z}$ and the observed variable $\mathbf{x}$ follow the causal generative models defined in Eqs.~\ref{eq:Generative:n}, \ref{eq:Generative:pz} and \ref{eq:Generative:x}. Assume that conditions \ref{itm:injective:app} - \ref{itm:lambda:app} in Theorem \ref{theory: anm} hold, then each true latent variable \( \bar z_i \) is related to exactly one estimated latent variable \( \hat{\bar z}_j \), which is learned by matching the true marginal data distribution $p(\mathbf{x}|\mathbf{u})$, by the following relationship: ${\bar z_i} = {M}_j ({\hat{\bar z}_j}) + {c_j},$ where ${M}_j$ and ${c}_j$ denote a invertible nonlinear mapping and a constant, respectively.
\label{corollary: pnm:app}
\end{corollary}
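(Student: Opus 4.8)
The plan is to reduce the post-nonlinear claim to the already-established additive-noise result of Theorem~\ref{theory: anm} by absorbing the component-wise distortions $\bar g_i$ into the mixing map. The post-nonlinear model of Eq.~\eqref{eq:Generative:pz} retains an underlying additive-noise layer $z_i = \mathrm{g}_i^{\mathbf{u}}(\mathrm{pa}_i)+n_i$, with the observed latent factor obtained as $\bar z_i = \bar g_i(z_i)$; collecting these component-wise gives a map $\bar{\mathbf{g}}:\mathbf{z}\mapsto\bar{\mathbf{z}}$ whose Jacobian is diagonal and whose inverse exists because each $\bar g_i$ is an invertible (hence strictly monotone, smooth) scalar map. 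The first step is therefore to rewrite the generative process as $\mathbf{x}=\mathbf{f}(\bar{\mathbf{z}})=\mathbf{f}\circ\bar{\mathbf{g}}(\mathbf{z})=:\tilde{\mathbf{f}}(\mathbf{z})$ and to verify that $\tilde{\mathbf{f}}$ is smooth and invertible, as a composition of the diffeomorphism $\mathbf{f}$ (assumption~\ref{itm:injective}) with the diagonal diffeomorphism $\bar{\mathbf{g}}$. Thus assumption~\ref{itm:injective} holds verbatim for the composed decoder $\tilde{\mathbf{f}}$.

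The second step is to observe that assumptions~\ref{itm:nu} and~\ref{itm:lambda} are formulated entirely at the additive-noise level: \ref{itm:nu} constrains the exponential-family natural parameters $\boldsymbol{\eta}(\mathbf{u})$ of the noise $\mathbf{n}$ (Eq.~\eqref{eq:Generative:n}), and \ref{itm:lambda} constrains the parent derivatives of $\mathrm{g}_i^{\mathbf{u}}$ (Eq.~\eqref{eq:Generative:z}); neither is touched by the distortion $\bar{\mathbf{g}}$. Hence the triple $(\mathbf{z},\mathbf{n},\tilde{\mathbf{f}})$ satisfies all three hypotheses of Theorem~\ref{theory: anm}, and applying that theorem yields the component-wise affine identifiability of the inner variables, $z_i = s_j\hat z_j + c_i$, each matched to exactly one estimated inner variable $\hat z_j$. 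A small but necessary check here is that the estimated model is of the same post-nonlinear form, so that its composed decoder $\hat{\tilde{\mathbf{f}}}=\hat{\mathbf{f}}\circ\hat{\bar{\mathbf{g}}}$ is likewise smooth, invertible, and $\hat{\bar{\mathbf{g}}}$ acts component-wise; this is what permits Theorem~\ref{theory: anm} to be invoked symmetrically on the estimated side.

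The final step pushes the affine relation forward through the distortions. Using $\bar z_i=\bar g_i(z_i)$ on the true side and $\hat z_j=\hat{\bar g}_j^{-1}(\hat{\bar z}_j)$ on the estimated side, substitution into $z_i=s_j\hat z_j+c_i$ gives
\begin{align}
\bar z_i = \bar g_i\!\big(s_j\,\hat{\bar g}_j^{-1}(\hat{\bar z}_j) + c_i\big) =: M_j(\hat{\bar z}_j),
\end{align}
where $M_j$ is invertible as a composition of the invertible maps $\bar g_i$, an affine map, and $\hat{\bar g}_j^{-1}$; writing an additive constant $c_j$ explicitly (absorbable into $M_j$) recovers the stated form $\bar z_i = M_j(\hat{\bar z}_j)+c_j$. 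Because both $\bar{\mathbf{g}}$ and $\hat{\bar{\mathbf{g}}}$ act component-wise, the one-to-one correspondence $z_i\leftrightarrow\hat z_j$ transfers intact to $\bar z_i\leftrightarrow\hat{\bar z}_j$, so each $\bar z_i$ is tied to exactly one $\hat{\bar z}_j$.

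I expect the main obstacle to be bookkeeping rather than conceptual: confirming that matching the marginal $p(\mathbf{x}\mid\mathbf{u})$ in the post-nonlinear parametrization is exactly the same constraint as matching it for the composed additive-noise model, and that the estimated distortions $\hat{\bar g}_j$ are genuinely invertible and coordinate-wise, so that no coordinate mixing is hidden inside $\hat{\bar{\mathbf{g}}}$ (which would break the ``exactly one $\hat{\bar z}_j$'' conclusion). Once these are established, the invertibility of the composition is precisely what prevents $M_j$ from collapsing to a linear map: the scalar distortions $\bar g_i,\hat{\bar g}_j$ are not themselves identifiable, so the affine guarantee of Theorem~\ref{theory: anm} degrades exactly to an invertible-nonlinear one, which is the weaker conclusion the corollary asserts.
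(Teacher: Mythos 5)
Your proposal is correct and follows essentially the same route as the paper's own proof: both absorb the component-wise invertible distortions $\bar{\mathbf{g}}$ into the decoder to form an invertible composition $\tilde{\mathbf{f}} = \mathbf{f}\circ\bar{\mathbf{g}}$, invoke Theorem~\ref{theory: anm} to get the affine relation $z_i = s_j\hat z_j + c_i$ on the inner additive-noise variables, and then transfer this relation through $\bar g_i$ and $\hat{\bar g}_j^{-1}$ to obtain the invertible-nonlinear correspondence $\bar z_i = M_j(\hat{\bar z}_j) + c_j$. Your additional checks (that assumptions~\ref{itm:nu} and~\ref{itm:lambda} live entirely at the noise level and are untouched by the distortion, and that the estimated distortions must be component-wise) make explicit what the paper leaves implicit, but the argument is the same.
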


\begin{proof}
The proof can be done from the following: since in Theorem \ref{theory: anm}, the only constraint imposed on the function $\mathbf{f}$ is that the function $\mathbf{f}$ is invertible
, as mentioned in condition \ref{itm:injective}. Consequently, we can create a new function $\widetilde{\mathbf{f}}$ by composing $\mathbf{f}$ with function $\mathbf{{\bar g}}$, in which each component is defined by the function $\mathrm{\bar g}_i$. Since $\mathrm{\bar g}_i$ in invertible as defined in Eq.~\eqref{eq:Generative:pz}, $\widetilde{\mathbf{f}}$ remains invertible. As a result, we can utilize the proof from Appendix \ref{appendix:thoery} to obtain that $\mathbf{z}$ can be identified up to permutation and scaling, i.e., Eq.~\eqref{appendix:theory_end} holds. Finally, given the existence of a component-wise invertible nonlinear mapping between $\mathbf{\bar z}$ and $\mathbf{z}$ as defined in Eq.~\eqref{eq:Generative:pz}, i.e.,
\begin{align}
    \mathbf{\bar z} = \mathbf{{\bar g}}(\mathbf{z}). \label{eq: barz}
\end{align}
we can also obtain estimated $\mathbf{\hat{ \bar z}}$ by enforcing a component-wise invertible nonlinear mapping on the recovered $\mathbf{\hat{z}}$
\begin{align}
    \mathbf{\hat{ \bar z}} = \mathbf{{\hat{ \bar g}}}(\mathbf{\hat z}). \label{eq: barhatz}
\end{align}
Replacing $\mathbf{z}$ and $\mathbf{\hat z}$ in Eq.~\eqref{appendix:theory_end} by Eq.~\eqref{eq: barz} and Eq.~\eqref{eq: barhatz}, respectively, we have
\begin{align}
    \mathbf{{\bar g}}^{-1}(\mathbf{\bar 
 z}) = \mathbf{P}\mathbf{{\hat{ \bar g}}}^{-1}(\mathbf{\hat{ \bar z}} )+\mathbf{c}'. 
\end{align}
As a result, we conclude the proof.
\end{proof}

\section{The Proof of Corollary \ref{corollary: partial} }\label{appendix:cor:pnm:partial}

\renewcommand{\thetheorem}{\arabic{theorem}.9}
\setcounter{theorem}{3} 
\begin{corollary}
Suppose latent causal variables $\mathbf{z}$ and the observed variable $\mathbf{x}$ follow the causal generative models defined in Eqs.~\ref{eq:Generative:n}, \ref{eq:Generative:pz} and \ref{eq:Generative:x}. Under the condition that the assumptions \ref{itm:injective:app}-\ref{itm:nu:app} are satisfied, for each $\bar z_i$, {(a)} if condition \ref{itm:lambda} is satisfied, then the true latent variable \( \bar z_i \) is related to one estimated latent variable \( \hat{\bar z}_j \), which is learned by matching the true marginal data distribution $p(\mathbf{x}|\mathbf{u})$, by the following relationship: ${\bar z_i} = {M}_j ({\hat{\bar z}_j}) + {c_j}$, {(b)} if condition \ref{itm:lambda} is not satisfied, then $\bar z_i$ is unidentifiable.
  \label{corollary: partial:app}
\end{corollary}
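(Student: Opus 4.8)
The plan is to reduce the post-nonlinear setting to the additive-noise setting already resolved in Theorem~\ref{theory: partial}, and then transport the two conclusions back through the invertible component-wise distortions, exactly as in the proof of Corollary~\ref{corollary: pnm} but starting from the partial rather than the complete result. First I would compose the decoder with the distortions: since $\mathbf{f}$ is invertible by assumption~\ref{itm:injective} and each $\bar g_i$ in Eq.~\eqref{eq:Generative:pz} is invertible, the map $\widetilde{\mathbf{f}} := \mathbf{f}\circ\bar{\mathbf{g}}$ (with $\bar{\mathbf{g}}$ acting component-wise) is again smooth and invertible, and $\mathbf{x}=\widetilde{\mathbf{f}}(\mathbf{z})$. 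Under this reparametrization the pre-distortion variables $\mathbf{z}$ obey precisely the additive-noise generative model of Eqs.~\eqref{eq:Generative:n}--\eqref{eq:Generative:z} with observation map $\widetilde{\mathbf{f}}$, so assumptions~\ref{itm:injective}--\ref{itm:nu} and, where it holds, assumption~\ref{itm:lambda} are all in force for the $\mathbf{z}$-model. I would note in passing that condition~\ref{itm:lambda} constrains the inner function $g_i^{\mathbf{u}}$ and therefore carries over to the $\mathbf{z}$-model: because each $\bar g_j$ depends on $z_j$ alone, any chain-rule factor $\bar g_j'$ is scalar and nonvanishing and cannot change whether $\partial g_i^{\mathbf{u}=\mathbf{u}_i}/\partial(\cdot)$ vanishes at $\mathbf{u}_i$. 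Root nodes are subsumed, since for them \ref{itm:lambda} holds vacuously.

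With this reduction, Theorem~\ref{theory: partial} applies coordinate by coordinate to $\mathbf{z}$. For part~(a), when $\bar z_i$ is a root node or satisfies \ref{itm:lambda}, Theorem~\ref{theory: partial}\ref{theory: partial:a} yields an affine relation $z_i = s_j\hat z_j + c_j$ to a single estimated coordinate. Since the estimated model lies in the same post-nonlinear class, $\hat{\bar z}_j = \hat{\bar g}_j(\hat z_j)$ with $\hat{\bar g}_j$ a component-wise bijection; substituting $\hat z_j = \hat{\bar g}_j^{-1}(\hat{\bar z}_j)$ and applying $\bar g_i$ gives
\begin{align}
\bar z_i = \bar g_i(z_i) = \bar g_i\!\bigl(s_j\,\hat{\bar g}_j^{-1}(\hat{\bar z}_j) + c_j\bigr) =: M_j(\hat{\bar z}_j),
\end{align}
where $M_j$ is a composition of invertible maps with a nonzero-scale affine map, hence an invertible nonlinear mapping. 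This is the asserted relation $\bar z_i = M_j(\hat{\bar z}_j) + c_j$, with the constant absorbed into $M_j$.

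For part~(b), when \ref{itm:lambda} fails for $\bar z_i$, Theorem~\ref{theory: partial}\ref{theory: partial:b} shows the corresponding $z_i$ is unidentifiable, meaning every representation matching $p(\mathbf{x}\mid\mathbf{u})$ expresses $z_i$ as a genuine function of at least two estimated coordinates (the mechanism illustrated in Example~\ref{exam: 1}). The final step is to argue this entanglement survives the distortion: because $\bar z_i = \bar g_i(z_i)$ is an invertible function of $z_i$ alone, it depends on exactly the same set of $\hat z$-coordinates as $z_i$, and because each $\hat{\bar z}_j = \hat{\bar g}_j(\hat z_j)$ is an invertible function of $\hat z_j$ alone, $\bar z_i$ must likewise depend on at least two distinct $\hat{\bar z}$-coordinates and so cannot reduce to any single $M_j(\hat{\bar z}_j)$. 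Hence $\bar z_i$ is unidentifiable.

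I expect the only genuinely delicate point to be this last preservation-of-unidentifiability step. The composition argument of the first paragraph and the algebra of part~(a) are routine, borrowed essentially verbatim from Corollary~\ref{corollary: pnm}. What requires care is that \emph{both} the true distortions $\bar g_i$ and the estimated ones $\hat{\bar g}_j$ act strictly component-wise (diagonal Jacobians), so that neither can disentangle a mixture introduced at the $\mathbf{z}$-level nor manufacture a spurious one; it is precisely this diagonal structure that lets the non-identifiability of $z_i$ transfer intact to $\bar z_i$. I would therefore make the component-wise nature of the estimated post-nonlinear class explicit before invoking it, as it is the load-bearing assumption of the argument.
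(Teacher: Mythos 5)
Your proposal is correct and takes essentially the same route as the paper's proof: compose $\mathbf{f}$ with the component-wise invertible distortions $\bar{\mathbf{g}}$ to reduce the post-nonlinear model to the additive-noise setting, invoke Theorem~\ref{theory: partial} coordinate by coordinate, and transport both conclusions back through $\bar{g}_i$ and $\hat{\bar{g}}_j$. Your treatment of part (b) is actually more explicit than the paper's, which simply asserts that unidentifiability of $z_i$ directly yields unidentifiability of $\bar{z}_i$ without spelling out the preservation-of-entanglement argument you give.
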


\begin{proof}
Again, since in Theorem \ref{theory: anm}, the only constraint imposed on the function $\mathbf{f}$ is that the function $\mathbf{f}$ is invertible, as mentioned in condition \ref{itm:injective}. Consequently, we can create a new function $\widetilde{\mathbf{f}}$ by composing $\mathbf{f}$ with function $\mathbf{{\bar g}}$, in which each component is defined by the function $\mathrm{\bar g}_i$. Since $\mathrm{\bar g}_i$ is invertible as defined in Eq.~\eqref{eq:Generative:pz}, $\widetilde{\mathbf{f}}$ remains invertible. Given the above, the results in both Eq.~\eqref{bpexpand1} and Eq.~\eqref{bpexpand2} hold. Then consider the following two cases. 
\begin{itemize}
    \item In cases where $z_i$ represents a root node or assumption \ref{itm:lambda} holds true for $z_i$, using the proof in Appendix \ref{appendix:partial} we can obtain that $z_i=s_{i,i}\hat z_i + c_i$. Then, given the existence of a component-wise invertible nonlinear mapping between ${\bar z}_i$ and ${z}_i$ as defined in Eq.~\eqref{eq:Generative:pz},
    we can proof that there is a invertible mapping between the recovered $\hat{ \bar z}_i$ and the true ${ \bar z}_i$.
    \item In cases where assumption \ref{itm:lambda} does not hold for $z_i$, using the proof in Appendix \ref{appendix:partial} $z_i$ is unidentifiable, we can directly conclude that ${ \bar z}_i$ is also unidentifiable.
\end{itemize}

\end{proof}
\section{Understanding Enforcing Causal Order}
\label{appendix: causalorder}

In the inference model, we naturally enforce a causal order \( z_1 \succ z_2 \succ \dots \succ z_\ell \) without requiring specific semantic information. This does not imply that we need to know the true causal order a prior. Instead, we leverage the permutation indeterminacy in latent space, as demonstrated in \citet{liu2022identifying}. 

For instance, suppose the underlying latent causal variables correspond to properties such as the size (\( z_1 \)) and color (\( z_2 \)) of an object. Permutation indeterminacy implies that we cannot guarantee whether the recovered latent variable \( \hat{z}_1 \) represents the size or the color. This ambiguity in the latent space, however, offers an advantage: by predefining a causal order, we enforce that \( \hat{z}_1 \) causes \( \hat{z}_2 \), without explicitly specifying the semantic meaning of these variables. 

Due to the identifiability guarantee, \( \hat{z}_1 \), as the first node in the predefined causal order, will learn the semantic information of the first node in the true underlying causal order, e.g., the size. Similarly, \( \hat{z}_2 \), as the second node in the predefined causal order, will be assigned to learn the semantic feature of the second node (e.g., color). As a result, we can naturally establish a causal fully-connected graph by pre-defining causal order, ensuring the estimation of a directed acyclic graph (DAG) in inference model and avoiding DAG constraints, such as those proposed by \citet{zheng2018dags}.

\section{Data Details} \label{appendix: data}
\paragraph{Synthetic Data} In our experimental results using synthetic data, we utilize 50 segments, with each segment containing a sample size of 1000. Furthermore, we explore latent causal or noise variables with dimensions of 2, 3, 4, and 5, respectively. Specifically, our analysis centers around the following structural causal model:
\begin{align}
n_i &: \sim \mathcal{N}(\alpha,\beta), \label{appendix:polydata1} \\
z_1 &:= n_1, \\
z_2 &: = \lambda_{1,2}(\mathbf{u})\sin({z_1}) + n_2,\\
z_3 &: = \lambda_{2,3}(\mathbf{u})\cos(z_2)  + n_3,\\
z_4 &: = \lambda_{3,4}(\mathbf{u})\log({z^2_3})  + n_4,  \label{appendix:polydata4} \\
z_5 &: = \lambda_{3,5}(\mathbf{u})\exp(\sin({z^2_3}))  + n_5.
\label{appendix:polydata}
\end{align} 
In this context, both $\alpha$ and $\beta$ for Gaussian noise are drawn from uniform distributions within the ranges of $[-2.0, 2.0]$ and $[0.1, 3.0]$, respectively. The values of $\lambda_{i,j}(\mathbf{u})$ are sampled from a uniform distribution spanning $[-2.0, -0.1]\cup[0.1, 2.0]$. {After sampling the latent variables, we use a random three-layer feedforward neural network as the mixing function, as described in \citet{hyvarinen2016unsupervised, hyvarinen2019nonlinear, khemakhem2020variational}.}

\paragraph{Synthetic Data for Partial Identifiability}
In our experimental results, which utilized synthetic data to explore partial identifiability, we modified the Eqs \eqref{appendix:polydata1}-\eqref{appendix:polydata4} by 
\begin{align}
\dot z_i &:= z_i + z_{i-1}.
\end{align} 
In this formulation, for each $i$, there exists a $z_{i-1}$ that remains unaffected by $\mathbf{u}$, thereby violating condition \ref{itm:lambda}.

\paragraph{Image Data} In our experimental results using image data, we consider the following latent structural causal model:
\begin{align}
n_i &: \sim \mathcal{N}(\alpha,\beta), \\
z_1 &:= n_1 \\
z_2 &: = \lambda_{1,2}(\mathbf{u})(\sin({z_1})+z_1) + n_2,\\
z_3 &: = \lambda_{2,3}(\mathbf{u})(\cos(z_2)+z_2)  + n_3,
\label{appendix:imgdata}
\end{align} 
where both $\alpha$ and $\beta$ for Gaussian noise are drawn from uniform distributions within the ranges of $[-2.0, 2.0]$ and $[0.1, 3.0]$, respectively. The values of $\lambda_{i,j}(\mathbf{u})$ are sampled from a uniform distribution spanning $[-2.0, -0.1]\cup[0.1, 2.0]$.

\section{Latent Causal Graph Structure} 
Our identifiability result, as presented in Theorem \ref{theory: anm}, establishes the identifiability of latent causal variables, thereby ensuring the unique recovery of the corresponding latent causal graph. This result builds upon the intrinsic identifiability of nonlinear additive noise models, as demonstrated in prior work \citep{hoyer2008nonlinear, peters14a}, and holds regardless of any scaling applied to $\mathbf{z}$. Moreover, while linear Gaussian models are unidentifiable in a single environment \citep{shimizu2006linear}, identifiability can be achieved in multiple environments (e.g., across different values of $\mathbf{u}$), supported by the principle of independent causal mechanisms \citep{CDNOD_20, ghassami2018multi, liu2022identifying}.

\section{Implementation Framework} \label{appendix: imple}
We perform all experiments using the GPU RTX 4090, equipped with 32 GB of memory. Figure \ref{fig:model_our} illustrates our proposed method for learning latent nonlinear models with additive Gaussian noise. In our experiments with synthetic and fMRI data, we implemented the encoder, decoder, and MLPs using three-layer fully connected networks, complemented by Leaky-ReLU activation functions. For optimization, the Adam optimizer was employed with a learning rate of 0.001. In the case of image data experiments, the prior model also utilized a three-layer fully connected network with Leaky-ReLU activation functions. The encoder and decoder designs are detailed in Table \ref{tab:encoder} and Table \ref{tab:decoder}, respectively.

\begin{table}[h]
\centering
\tiny
\begin{minipage}{0.48\linewidth}
\centering
\begin{tabular}{ll}
\toprule
Layer & Output / Activation \\
\midrule
Conv2d(3, 32, 4, stride=2, padding=1) & Leaky-ReLU \\
Conv2d(32, 32, 4, stride=2, padding=1) & Leaky-ReLU \\
Conv2d(32, 32, 4, stride=2, padding=1) & Leaky-ReLU \\
Conv2d(32, 32, 4, stride=2, padding=1) & Leaky-ReLU \\
Linear(32$\times$32$\times$4 + size($\mathbf{u}$), 30) & Leaky-ReLU \\
Linear(30, 30) & Leaky-ReLU \\
Linear(30, 3$\times$2) & - \\
\bottomrule
\end{tabular}
\caption{Encoder for the image data.}
\label{tab:encoder}
\end{minipage}
\hfill
\begin{minipage}{0.48\linewidth}
\centering
\begin{tabular}{ll}
\toprule
Layer & Output / Activation \\
\midrule
Linear(3, 30) & Leaky-ReLU \\
Linear(30, 30) & Leaky-ReLU \\
Linear(30, 32$\times$32$\times$4) & Leaky-ReLU \\
ConvTranspose2d(32, 32, 4, stride=2, padding=1) & Leaky-ReLU \\
ConvTranspose2d(32, 32, 4, stride=2, padding=1) & Leaky-ReLU \\
ConvTranspose2d(32, 32, 4, stride=2, padding=1) & Leaky-ReLU \\
ConvTranspose2d(32, 3, 4, stride=2, padding=1) & - \\
\bottomrule
\end{tabular}
\caption{Decoder for the image data.}
\label{tab:decoder}
\end{minipage}
\end{table}

\begin{figure}[h]
  \centering
\includegraphics[width=0.7\linewidth]{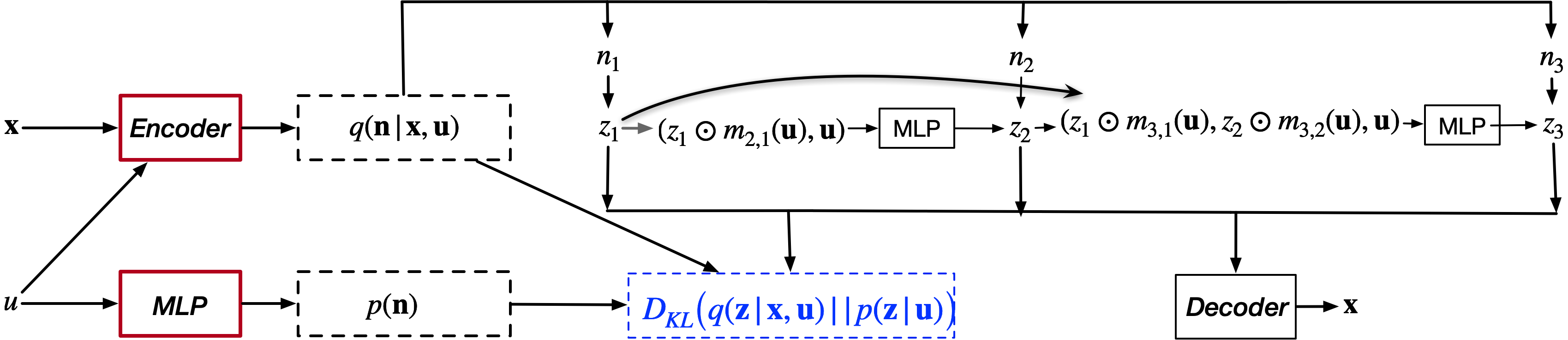}~
  \caption{\small Implementation Framework to learn latent nonlinear models (i.e., MLP) with Gaussian noise. Here we demonstrate the method using 3 latent variables, however, our approach is versatile and can be effectively generalized to accommodate much larger graphs.}\label{fig:model_our}
\end{figure}

\clearpage
\newpage
\section{Results on Synthetic High-Dimension Data}
{In this section, we present additional experimental results on synthetic data to evaluate the effectiveness of the proposed method in scenarios with a large number of latent variables. The performance in these cases is shown in Figure \ref{fig:large}. Compared to the polynomial-based approach in \citep{liu2023identifiable}, the proposed method, such as MLP, achieves significantly better MCC scores, demonstrating its advantages over polynomials. This superiority becomes particularly evident as the number of latent variables increases. MLPs, being highly flexible, can effectively adapt to the growing complexity. In contrast, when the number of latent variables increases, the number of parent nodes also tends to grow, requiring polynomial-based approaches to incorporate additional nonlinear components to capture the complex relationships among latent variables, which becomes increasingly challenging.}

{While much of the current work on causal representation learning focuses on foundational identifiability theory, optimization challenges in the latent space remain underexplored. We hope this work not only provides a general theoretical result but also inspires further research on inference methods in the latent space.}

\begin{figure}[h]\centering\includegraphics[width=0.4\linewidth]{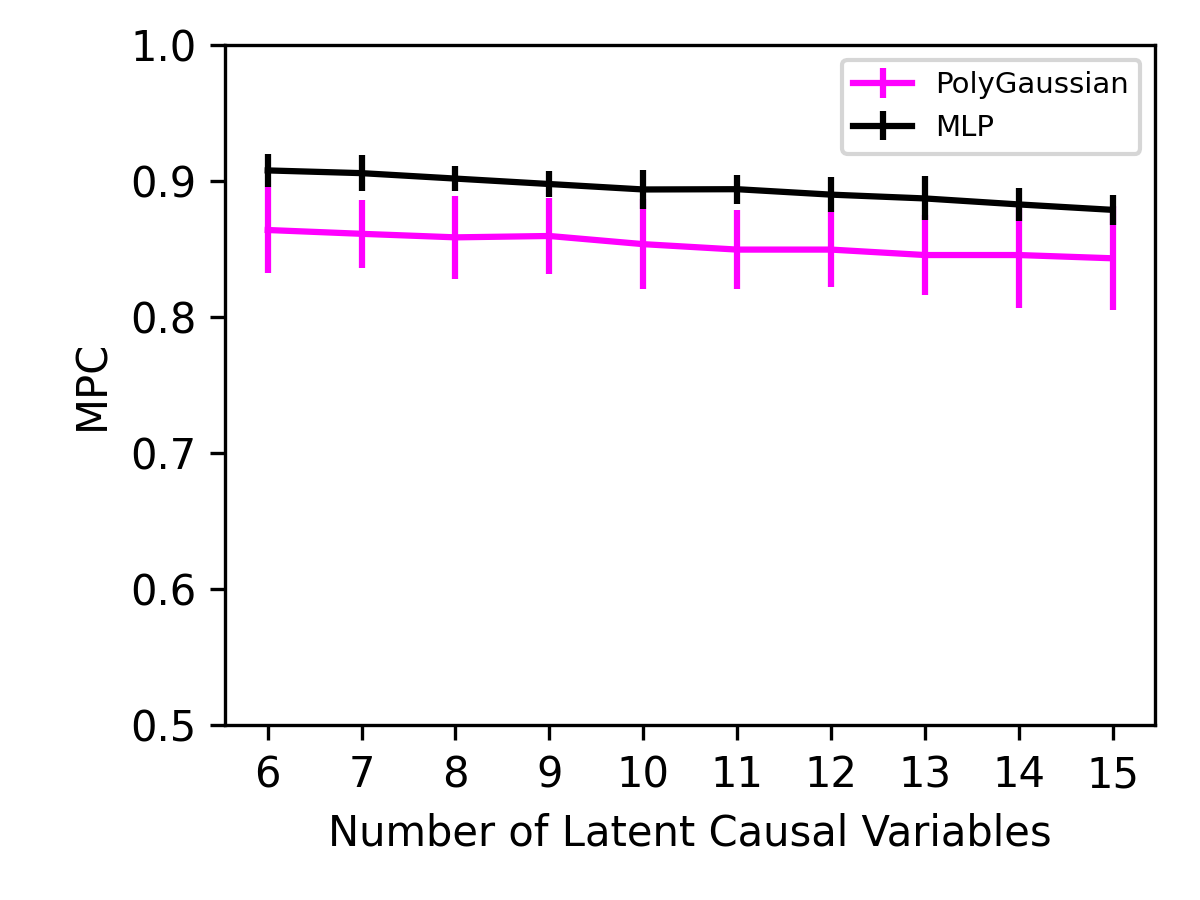}~
  \caption{\small Performances of the proposed method on a large number of latent variables.}
  \label{fig:large}
\end{figure}

\section{Experiments on High-Dimensional Synthetic Image Data} \label{appendix:traversals}

\begin{figure}[h]
  \centering
\includegraphics[width=0.4\linewidth]{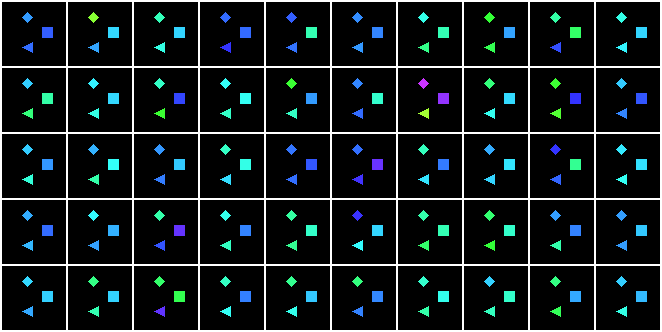}~
  \caption{\small Samples generated by using a modified version of the chemistry dataset originally presented in \citep{ke2021systematic}. In this adaptation, the objects' colors (representing different states) change in accordance with a specified causal graph, e.g., `diamond' causes `triangle', and `triangle' causes 'square'.}\label{fig:Img_samples}
\end{figure}

We further validate our proposed identifiability results and methodology using images from the chemistry dataset introduced by \citep{ke2021systematic}. This dataset is representative of chemical reactions where the state of one element can influence the state of another. The images feature multiple objects with fixed positions, but their colors, representing different states, change according to a predefined causal graph. To align with our theoretical framework, we employ a nonlinear model with additive Gaussian noise for generating latent variables that correspond to the colors of these objects. The established latent causal graph within this context indicates that the `diamond' object (denoted as $z_1$) influences the `triangle' ($z_2$), which in turn affects the `square' ($z_3$). Figure \ref{fig:Img_samples} provides a visual representation of these observational images, illustrating the causal relationships in a tangible format.

Figure \ref{fig:imagemcc} presents MPC outcomes as derived from various methods. Among these, the proposed method demonstrates superior performance. In addition, both the proposed method (MLPs) and Polynomials can accurately learn the causal graph with guarantee. However, Polynomial encounters issues such as numerical instability and exponential growth in terms, which compromises its performance in MPC, as seen in Figure \ref{fig:imagemcc}. This superiority of MLPs is further evidenced in the intervention results, as depicted in Figure \ref{fig:imagemlpinterv}, compared with results of Polynomial shown in Figure~\ref{fig:imagepolyinterv}. Additional traversal results concerning the learned latent variables from other methodologies are detailed in Figure~\ref{fig:interv_vae} (VAE), Figure~\ref{fig:interv_betavae} ($\beta$-VAE) and Figure~\ref{fig:interv_ivae} (iVAE). For these methods without identifiability, traversing any learned variable results in a change in color across all objects.

\begin{figure}[!ht]
  \centering
  \includegraphics[width=0.3\linewidth]{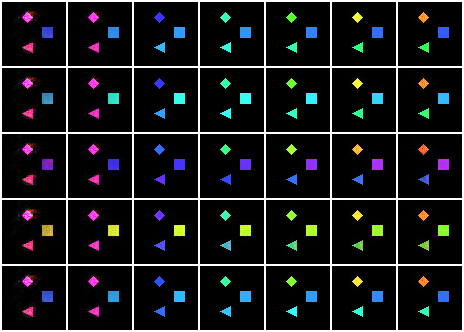}~
  \includegraphics[width=0.3\linewidth]{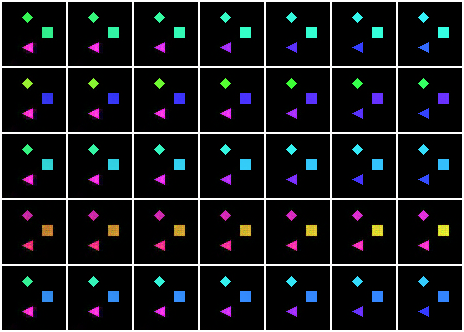}~
  \includegraphics[width=0.3\linewidth]{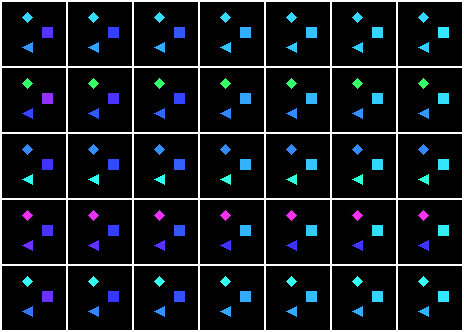}~\\
  \caption{\small From left to right, the interventions are applied to the causal representations $z_1$, $z_2$, and $z_3$ learned by the proposed method (MLPs), respectively. The vertical axis represents different samples, while the horizontal axis represents the enforcement of various values on the learned causal representation.}\label{fig:imagemlpinterv}
\end{figure}

\begin{figure}[h]
  \centering
  \includegraphics[width=0.18\linewidth]{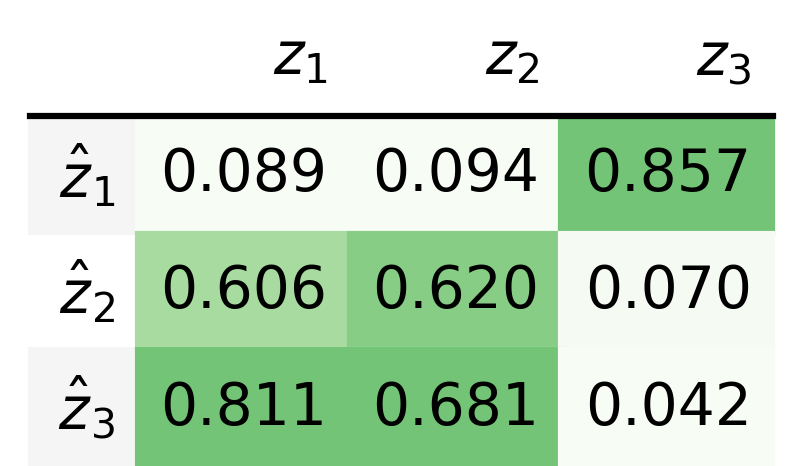}~
  \includegraphics[width=0.18\linewidth]{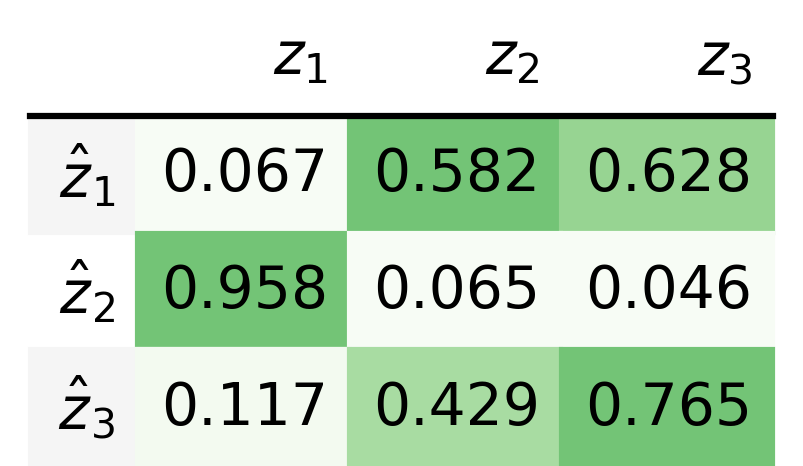}~
  \includegraphics[width=0.18\linewidth]{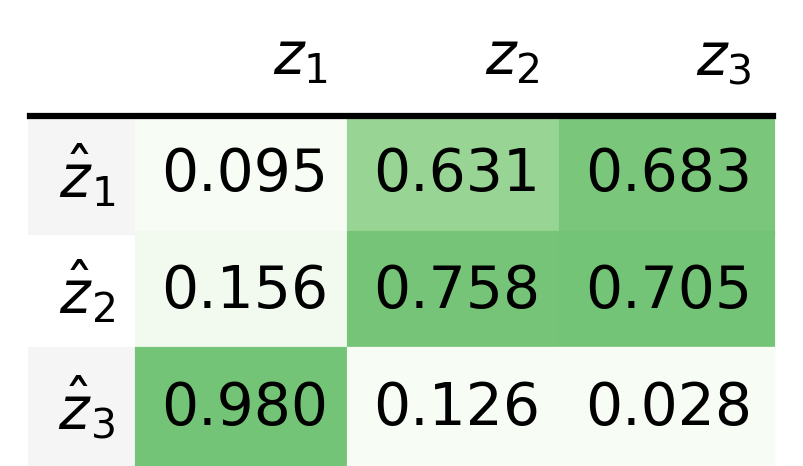}
    \includegraphics[width=0.18\linewidth]{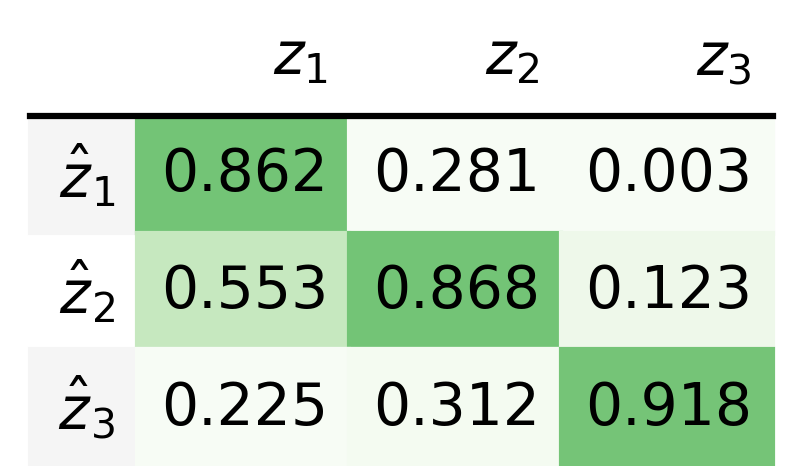}~
  \includegraphics[width=0.18\linewidth]{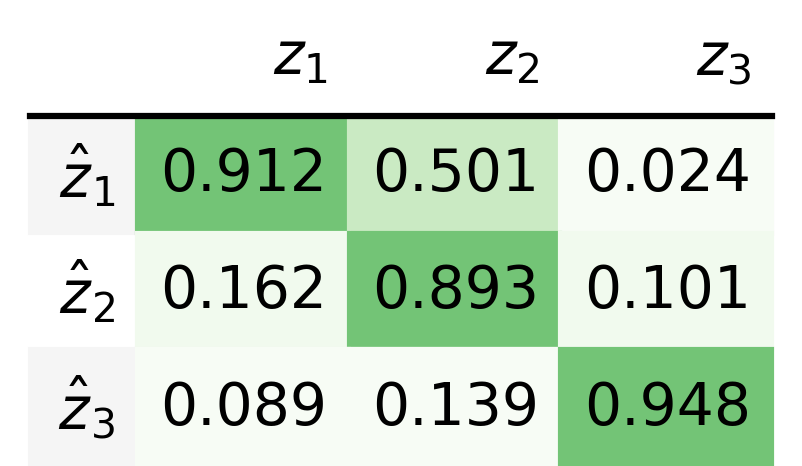}~
  \caption{\small MPC obtained by different methods on the image dataset. From top to bottom and left to right: VAE, $\beta$-VAE, iVAE, Polynomials, and the proposed method (MLPs). The proposed method performs better than others, which is not only in line with our identifiability claims but also highlights the flexibility of MLPs. }\label{fig:imagemcc}
\end{figure}

\begin{figure}[h]
  \centering
  \includegraphics[width=0.3\linewidth]{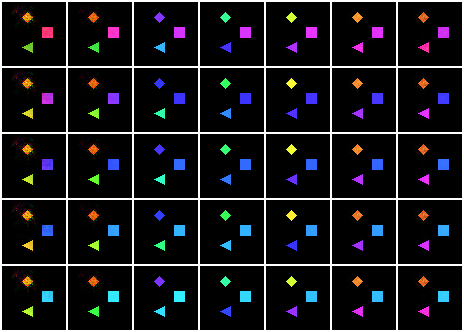}~
  \includegraphics[width=0.3\linewidth]{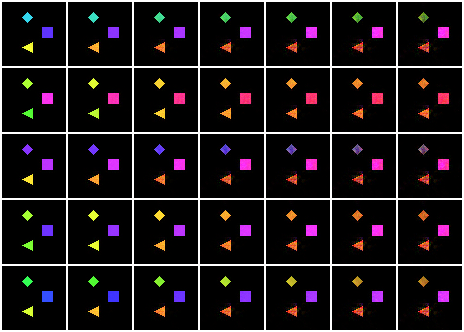}~
  \includegraphics[width=0.3\linewidth]{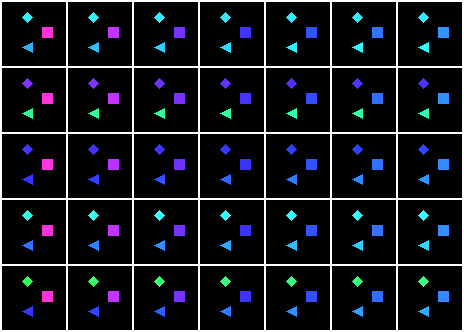}~\\
  \caption{ From left to right, the interventions are applied to the causal representations $z_1$, $z_2$, and $z_3$ learned by Polynomials, respectively. The vertical axis represents different samples, while the horizontal axis represents the enforcement of various values on the learned causal representation. }\label{fig:imagepolyinterv}
\end{figure}

\begin{figure}
  \centering
  \includegraphics[width=0.32\linewidth]{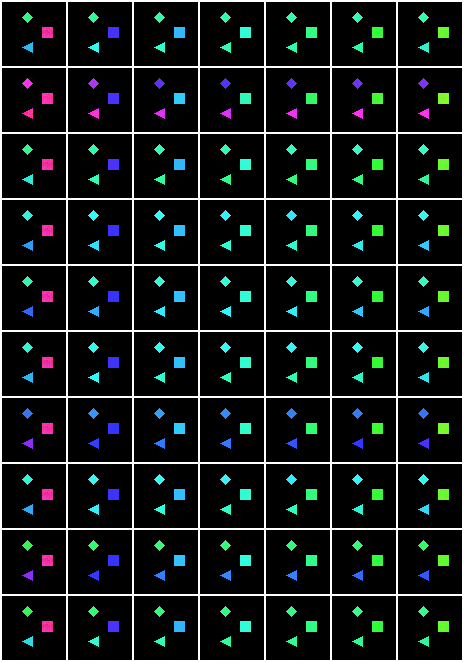}~
  \includegraphics[width=0.32\linewidth]{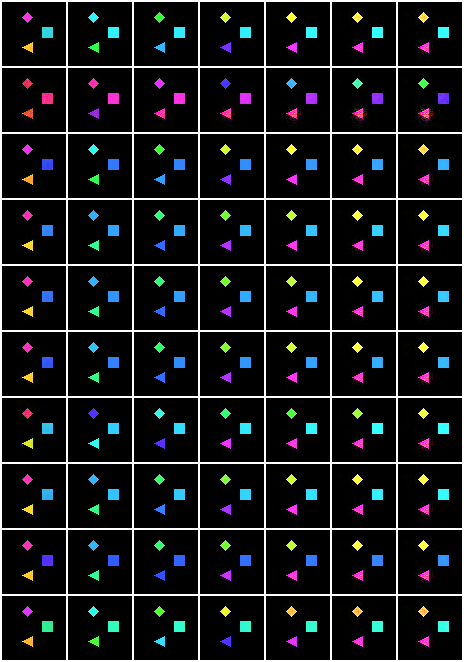}~
  \includegraphics[width=0.32\linewidth]{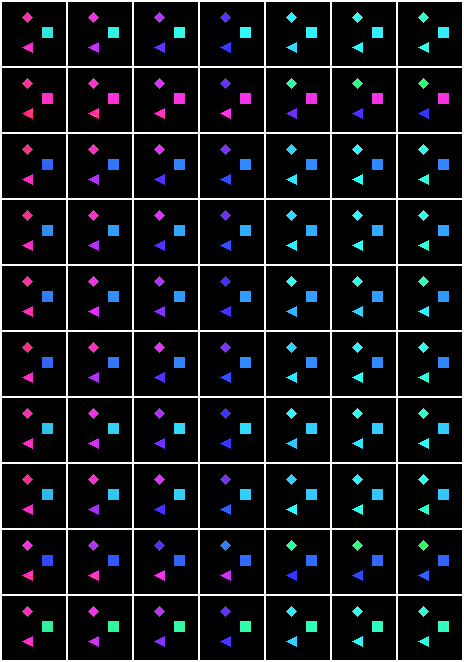}\\
  \begin{minipage}{0.3\linewidth}
    \centering 
    Traversals on $z_1$
  \end{minipage}
  \hfill 
  \begin{minipage}{0.3\linewidth}
    \centering 
    Traversals on $z_2$
  \end{minipage}
  \hfill
  \begin{minipage}{0.3\linewidth}
    \centering 
    Traversals on $z_3$
  \end{minipage}
  \caption{The traversal results achieved using VAE on image datasets are depicted. On this representation, the vertical axis corresponds to different data samples, while the horizontal axis illustrates the impact of varying values on the identified causal representation. According to the latent causal graph's ground truth, the 'diamond' variable (denoted as $z_1$) influences the ‘triangle’ variable ($z_2$), which in turn affects the 'square' variable ($z_3$). Notably, modifications in each of the learned variables lead to observable changes in the color of all depicted objects.}\label{fig:interv_vae}
\end{figure}

\begin{figure}
  \centering
  \includegraphics[width=0.32\linewidth]{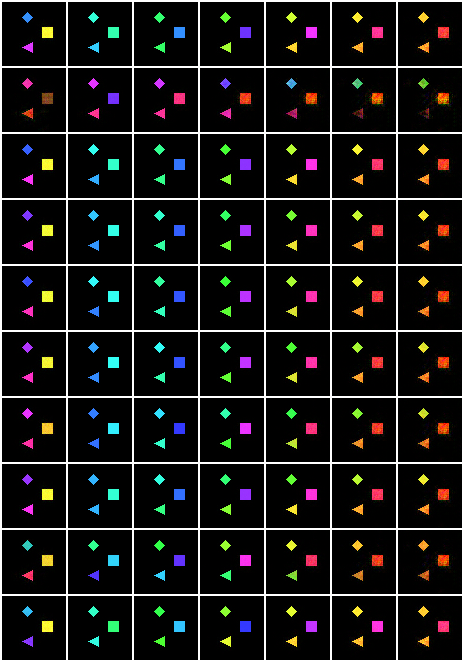}~
  \includegraphics[width=0.32\linewidth]{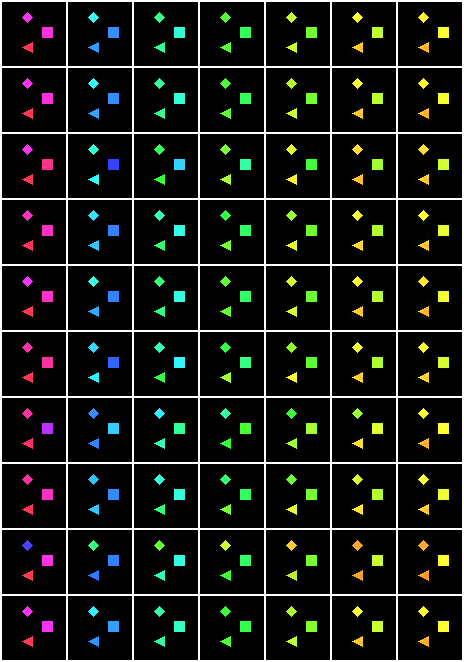}~
  \includegraphics[width=0.32\linewidth]{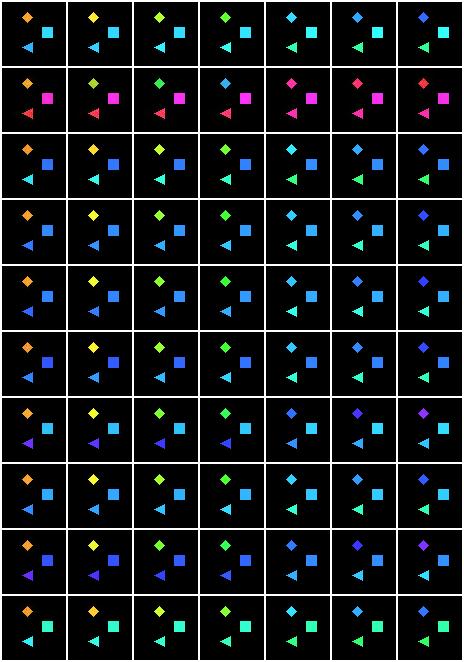}\\
  \begin{minipage}{0.3\linewidth}
    \centering 
    Traversals on $z_1$
  \end{minipage}
  \hfill 
  \begin{minipage}{0.3\linewidth}
    \centering 
    Traversals on $z_2$
  \end{minipage}
  \hfill
  \begin{minipage}{0.3\linewidth}
    \centering 
    Traversals on $z_3$
  \end{minipage}
  \caption{ The traversal results achieved using $\beta$-VAE on image datasets are depicted. On this representation, the vertical axis corresponds to different data samples, while the horizontal axis illustrates the impact of varying values on the identified causal representation. According to the latent causal graph's ground truth, the 'diamond' variable (denoted as $z_1$) influences the ‘triangle’ variable ($z_2$), which in turn affects the 'square' variable ($z_3$). Notably, modifications in each of the learned variables lead to observable changes in the color of all depicted objects. }\label{fig:interv_betavae}
\end{figure}

\begin{figure}[h]
  \centering
  \includegraphics[width=0.3\linewidth]{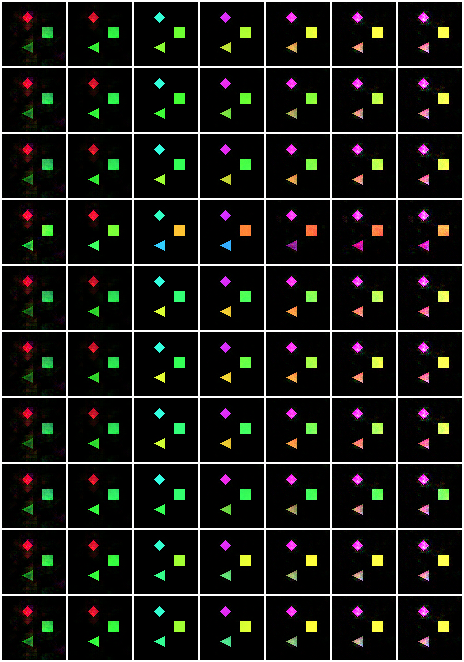}~
  \includegraphics[width=0.3\linewidth]{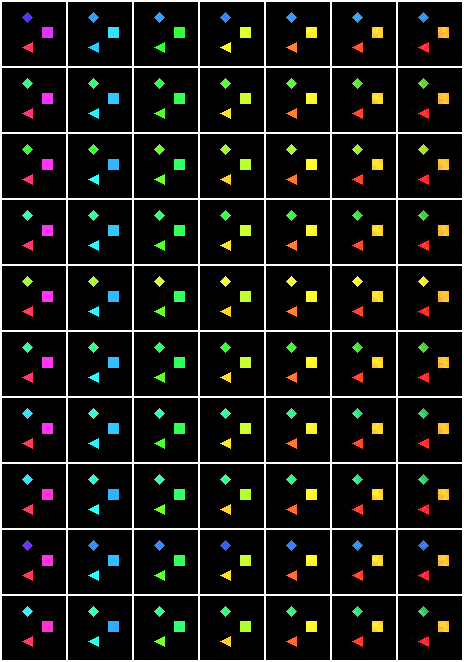}~
  \includegraphics[width=0.3\linewidth]{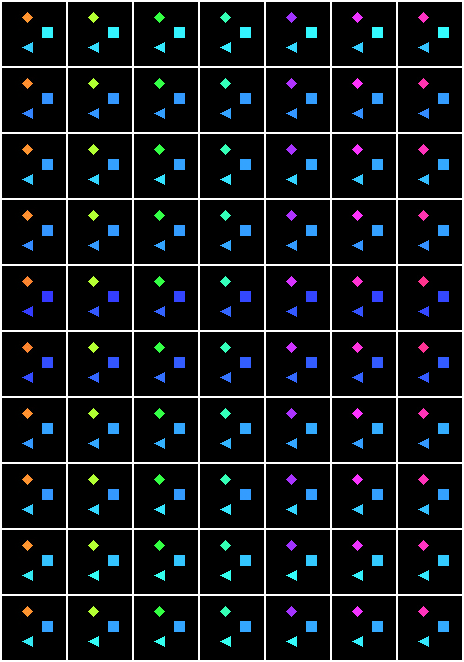}\\
  \begin{minipage}{0.3\linewidth}
    \centering 
    Traversals on $z_1$
  \end{minipage}
  \hfill 
  \begin{minipage}{0.3\linewidth}
    \centering 
    Traversals on $z_2$
  \end{minipage}
  \hfill
  \begin{minipage}{0.3\linewidth}
    \centering 
    Traversals on $z_3$
  \end{minipage}
  \caption{The traversal results achieved using iVAE on image datasets are depicted. On this representation, the vertical axis corresponds to different data samples, while the horizontal axis illustrates the impact of varying values on the identified causal representation. According to the latent causal graph's ground truth, the 'diamond' variable (denoted as $z_1$) influences the ‘triangle’ variable ($z_2$), which in turn affects the 'square' variable ($z_3$). Notably, modifications in each of the learned variables lead to observable changes in the color of all depicted objects. }\label{fig:interv_ivae}
\end{figure}

\section{Details and More Results of Experiments on Human Motion Data} \label{appendix: HUMAN}
\subsection{Preprocessing}\label{appendix: HUMANp}
We adopt a two-stage preprocessing pipeline to construct 2D pose sequences from the Human3.6M dataset. First, we extract ground-truth 3D joint positions provided in the dataset. Each 3D pose is transformed from world coordinates into the camera coordinate frame using the associated extrinsic parameters (rotation and translation). Subsequently, we apply a perspective projection using the intrinsic parameters (focal length and principal point), and the resulting 2D coordinates are converted into image-space pixel positions based on the camera resolution. This process follows the implementation provided by  \url{https://github.com/facebookresearch/VideoPose3D/blob/main/data/prepare_data_h36m.py}. Corrupted sequences (e.g., \texttt{Directions} for subject S11) are excluded, and only valid data is retained. The final output is stored as structured 2D keypoint arrays indexed by subject, action, and camera.

In the second stage, for each unique (subject, action) pair, we keep only the sequence from the first camera view. A unique one-hot vector is assigned to each pair, which is then concatenated to the 2D joint coordinates of every frame across all joints. To ensure balanced representation among all subject-action categories, we uniformly sample the same number of frames from each sequence based on the shortest available sequence length. This balanced and encoded dataset is then prepared for subsequent training tasks. As a result, we obtain a final dataset comprising 140 contexts $i.e.,\mathbf{u}$, each containing 1040 frames, with each frame represented by 2D coordinates of 16 joints.

\subsection{More Results}\label{appendix: HUMANc}
In our implementation, we empirically set the number of latent variables to 14. The model is trained using the Adam optimizer with a learning rate of 1e-3 for 7000 epochs. We use the encoder designed to effectively encode 2D keypoint sequences. We employ an encoder to effectively encode 2D keypoints, where each input frame consists of $2 \times 14$ keypoint coordinates augmented with a subject-action condition vector $\mathbf{u}$. The input is first projected via a linear layer into a higher-dimensional feature space, enhancing its representational capacity. This is followed by a stack of Mixer layers, which alternate between mixing information across spatial (e.g., keypoint) and feature dimensions, thereby capturing complex dependencies both spatially and channel-wise. After all Mixer blocks, a layer normalization is applied to stabilize training. The used decoder applies multiple Mixer layers to iteratively mix spatial and channel information, followed by layer normalization for stable training. Finally, a linear layer projects the hidden features back to the keypoint coordinate dimension, producing an output that matches the original input shape of $2 \times 14$ keypoint coordinates, representing the reconstructed 2D coordinates. This decoder architecture symmetrically complements the encoder by reversing the compositional token embedding process, enabling effective recovery of keypoint positions from latent representations.

Figures \ref{fig: app: human1-5}–\ref{fig: app: human11-14} illustrate the results of intervention on each learned latent variables by our method. As discussed in the main manuscript, and supported by the estimated adjacency matrix shown in Figure~\ref{fig: aj}, we observe that certain latent variables—specifically $z_1$ and $z_2$, $z_6$ and $z_7$, as well as $z_{10}$ and $z_{11}$—exhibit potential causal relationships. These include plausible dependencies such as from the shoulder to the wrist joint, and from the elbow to the wrist. Such findings are consistent with biomechanical principles of intersegmental limb dynamics.

For comparison, we also implemented the latent polynomial models proposed by \citep{liu2023identifiable}. As shown in Figures~\ref{fig: app: phuman1-5}–\ref{fig: app: phuman11-14}, the learned latent representations in this baseline tend to be more entangled, lacking the interpretable structure.
\begin{figure}[h]
  \vspace{-2em}
\centering
\begin{minipage}{0.46\linewidth}
  \centering
  \includegraphics[width=\linewidth]{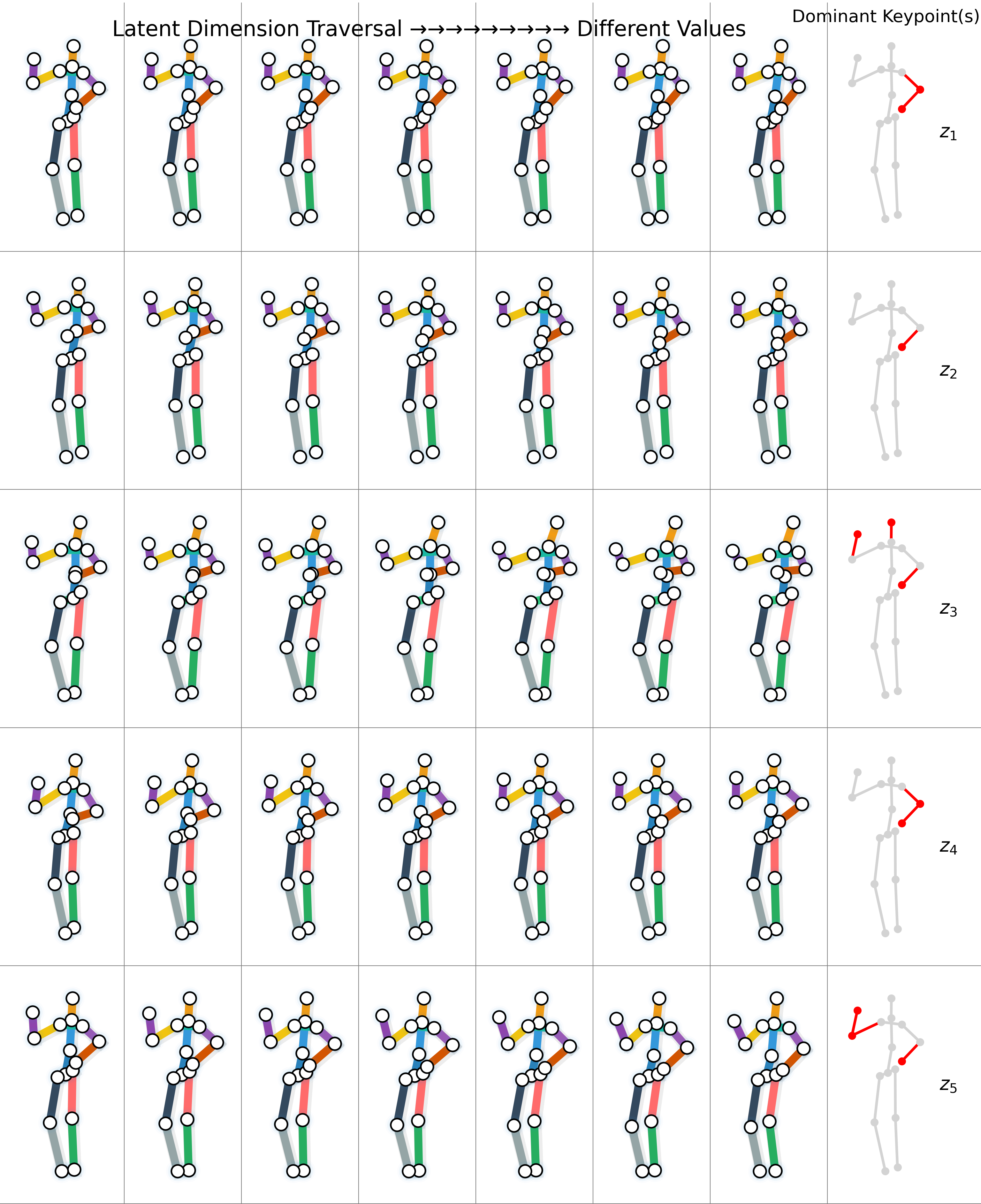}
  \caption{\small Complete Results of Intervention on the Estimated Latent Variables $z_1$ to $z_5$ by the Proposed Method.}
  \label{fig: app: human1-5}
\end{minipage}
\hfill
\begin{minipage}{0.46\linewidth}
  \centering
  \includegraphics[width=\linewidth]{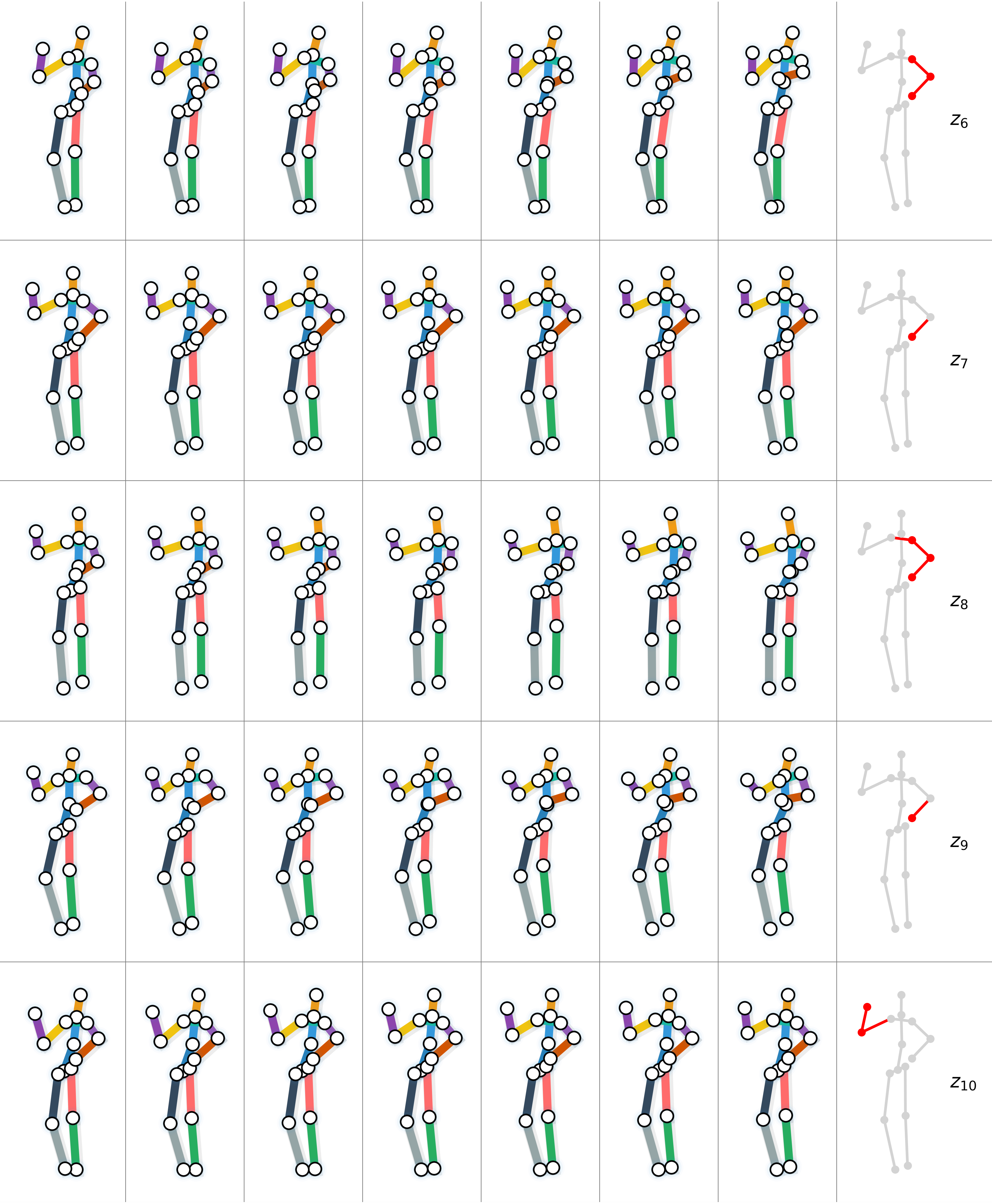}
  \caption{\small Complete Results of Intervention on the Estimated Latent Variables $z_6$ to $z_{10}$ by the Proposed Method.}
  \label{fig: app: human6-10}
\end{minipage}
  \vspace{-2em}
\end{figure}

\begin{figure}[h]
\centering
  \centering
  \includegraphics[width=0.46\linewidth]{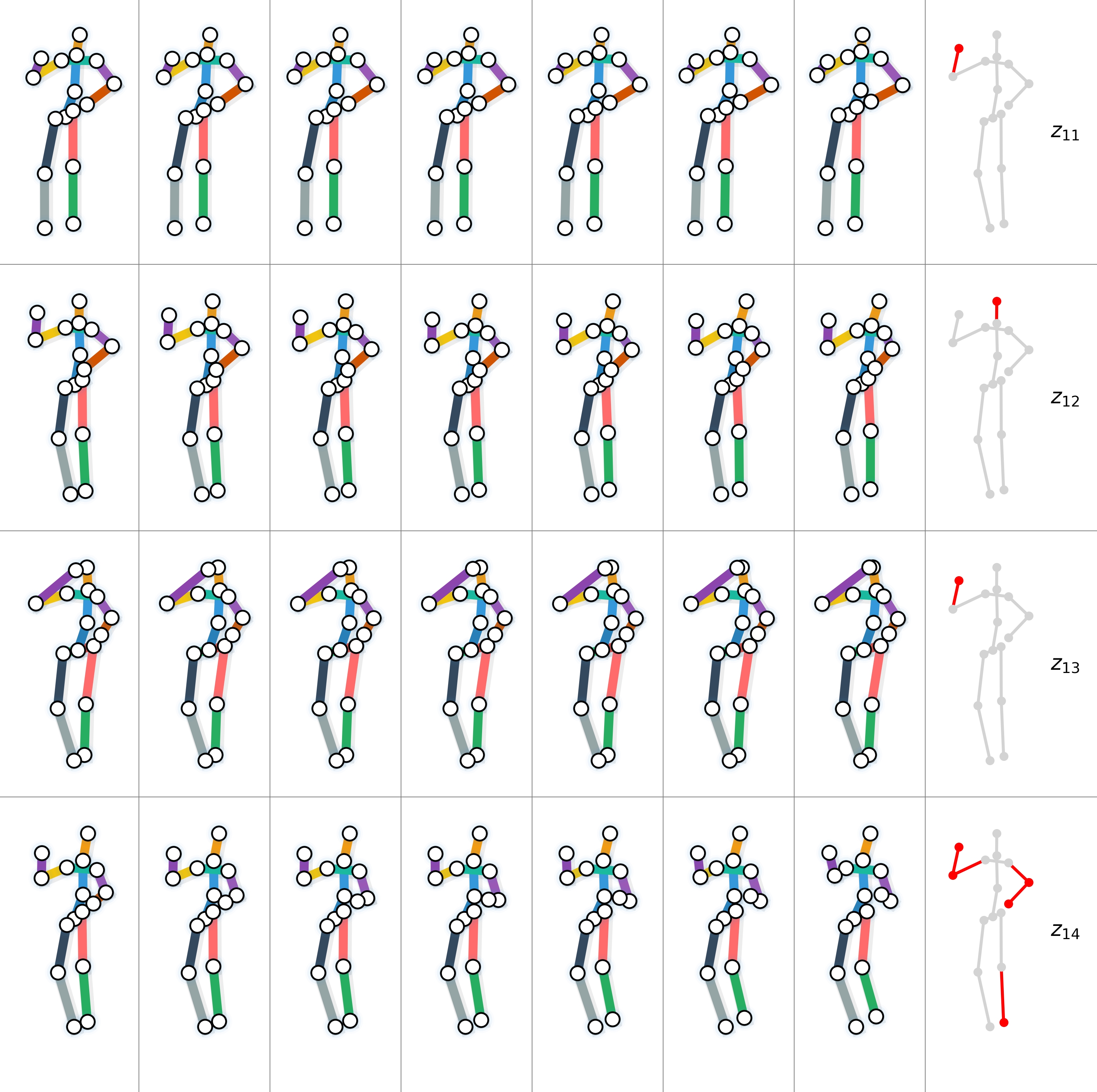}
  \caption{\small Complete Results of Intervention on the Estimated Latent Variables $z_{11}$ to $z_{14}$ by the Proposed Method.}
  \label{fig: app: human11-14}
  \vspace{-2em}
\end{figure}

\begin{figure}[h]
\centering
\begin{minipage}{0.48\linewidth}
  \centering
  \includegraphics[width=\linewidth]{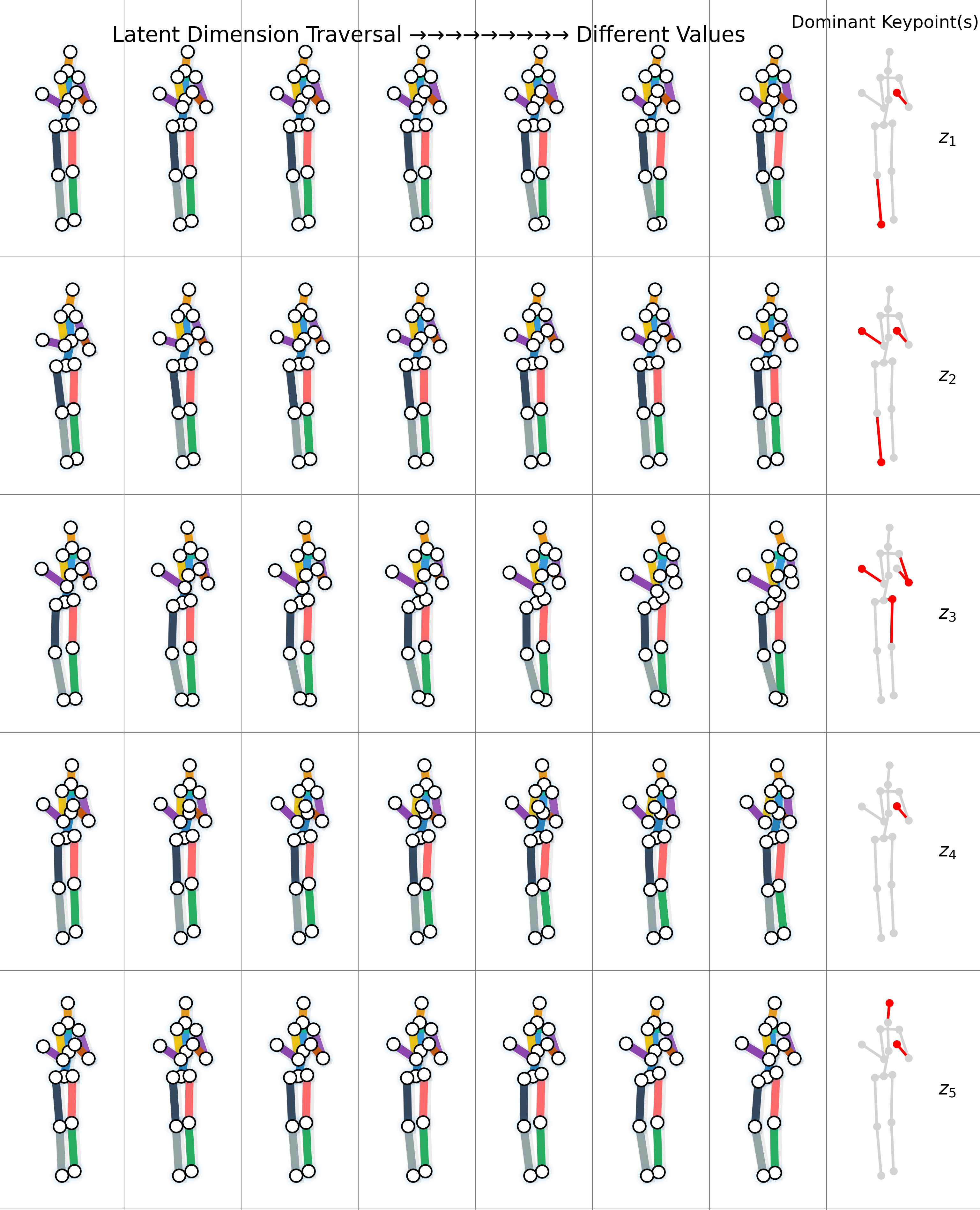}
  \caption{\small Complete Results of Intervention on the Estimated Latent Variables $z_1$ to $z_5$ by Latent Polynomial Model.}
  \label{fig: app: phuman1-5}
\end{minipage}
\hfill
\begin{minipage}{0.48\linewidth}
  \centering
  \includegraphics[width=\linewidth]{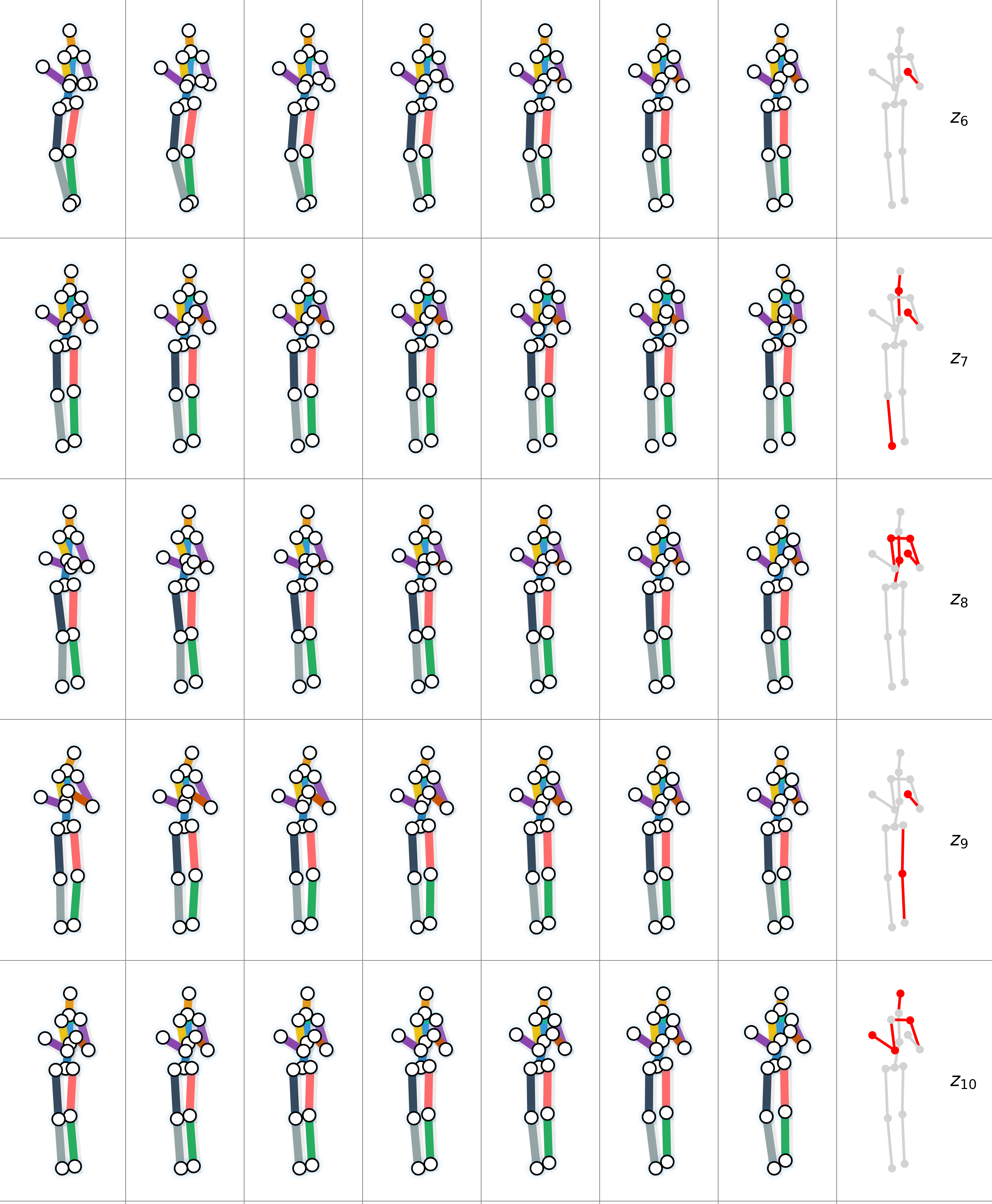}
  \caption{\small Complete Results of Intervention on the Estimated Latent Variables $z_6$ to $z_{10}$ by Latent Polynomial Model.}
  \label{fig: app: phuman6-10}
\end{minipage}
\end{figure}

\begin{figure}[h]
\centering
\begin{minipage}{0.48\linewidth}
  \centering
  \includegraphics[width=\linewidth]{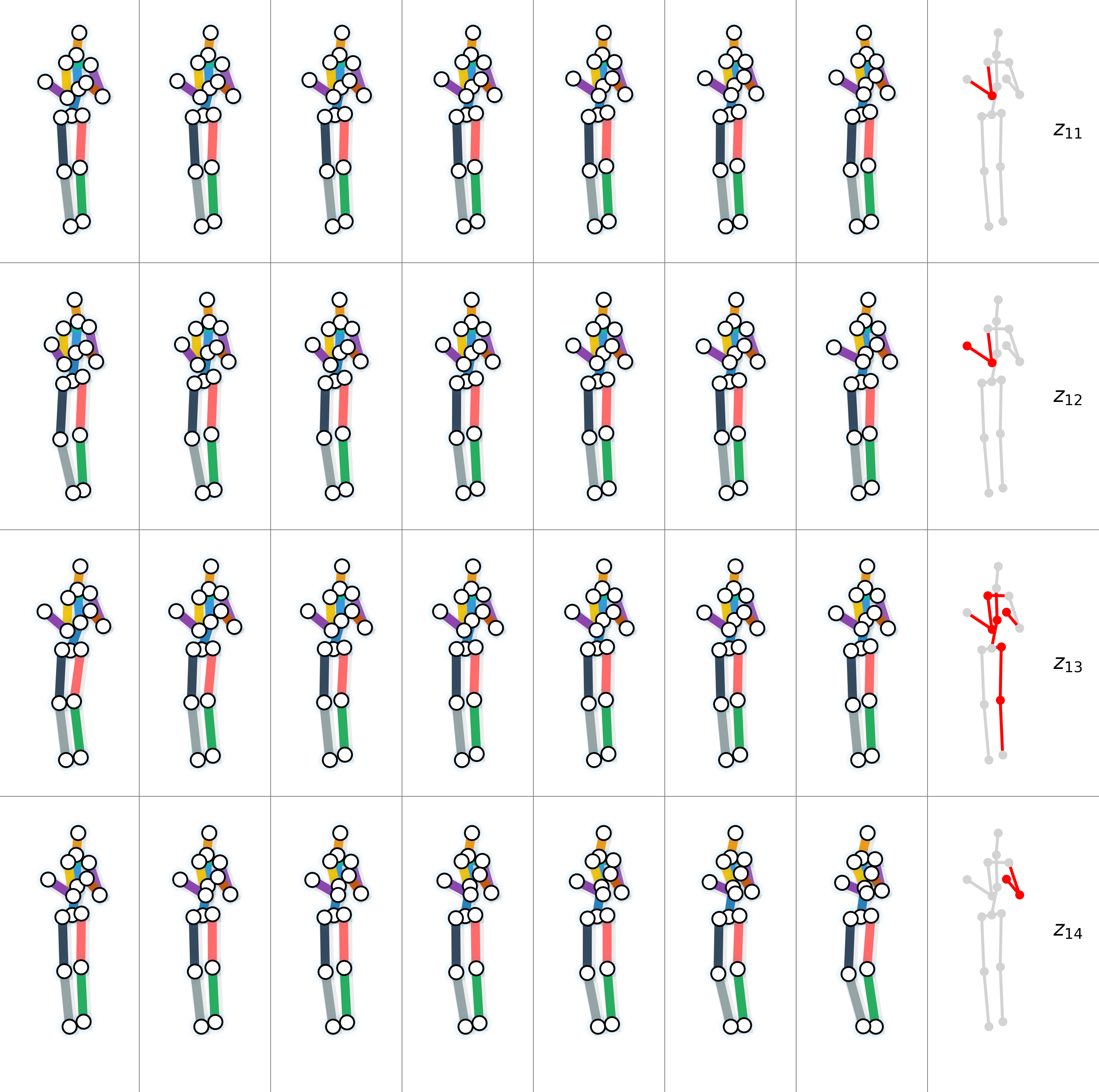}
  \caption{\small Complete Results of Intervention on the Estimated Latent Variables $z_{11}$ to $z_{14}$ by Latent Polynomial Model.}
  \label{fig: app: phuman11-14}
\end{minipage}
\end{figure}

\end{document}